\documentclass{article}

\PassOptionsToPackage{numbers}{natbib}

\usepackage[final]{neurips_2023}




\usepackage[utf8]{inputenc} 
\usepackage[T1]{fontenc}    
\usepackage{hyperref}       
\usepackage{url}            
\usepackage{booktabs}       
\usepackage{amsfonts}       
\usepackage{nicefrac}       
\usepackage{microtype}      
\usepackage{xcolor}         
\usepackage{amsmath}
\usepackage{amssymb}
\usepackage{mathtools}
\usepackage{amsthm}

\usepackage{algorithm}
\usepackage{algpseudocode}

\usepackage{appendix}
\usepackage{titletoc}

\newcommand\DoToC{%
  \startcontents
  \printcontents{}{1}{\textbf{Contents}\vskip3pt\hrule\vskip5pt}
  \vskip3pt\hrule\vskip5pt
}

\newcommand{\ie}{\text{i.e.}, }

\DeclarePairedDelimiter\abs{\lvert}{\rvert}
\DeclarePairedDelimiter\norm{\lVert}{\rVert}
\DeclarePairedDelimiter\innorm{\langle}{\rangle}

\DeclareMathOperator{\R}{\mathbb{R}} 
\DeclareMathOperator*{\argmax}{arg\,max}
\DeclareMathOperator*{\argmin}{arg\,min}
\DeclareMathOperator*{\arginf}{arg\,inf}

\DeclareMathOperator{\St}{\mathcal{S}}
\DeclareMathOperator{\A}{\mathcal{A}}
\DeclareMathOperator{\X}{\mathcal{X}}
\DeclareMathOperator{\Z}{\mathcal{Z}}
\DeclareMathOperator{\bv}{\mathbf{b}}
\DeclareMathOperator{\av}{\mathbf{a}}

\newcommand{\Pc}{\mathcal{P}}
\newcommand{\Rc}{\mathcal{R}}
\newcommand{\Uc}{\mathcal{U}}
\newcommand{\rop}{_{(P,R)}}
\newcommand{\ropo}{_{(P_0,R_0)}}

\newcommand{\rob}{_{\mathcal{\Uc}}}




\newtheorem{theorem}{Theorem}[section]
\newtheorem*{theorem*}{Theorem}
\newtheorem*{lemma*}{Lemma}
\newtheorem*{proposition*}{Proposition}
\newtheorem{proposition}[theorem]{Proposition}
\newtheorem{lemma}[theorem]{Lemma}
\newtheorem{corollary}[theorem]{Corollary}
\newtheorem{definition}[theorem]{Definition}

\title{Policy Gradient for Rectangular Robust Markov Decision Processes}

%

\author{
  Navdeep Kumar\thanks{ Contact author \texttt{navdeepkumar@campus.technion.ac.il}} \\
  Technion \\
  \And
  Esther Derman \\
  MILA, Université de Montréal \\
  \AND
  Matthieu Geist \\
  Goodle Deepmind \\
  \And
  Kfir Levy \\
  Technion \\
  \And
  Shie Mannor \\
  Technion, NVIDIA Research \\
}

\begin{document}

\maketitle

\begin{abstract}
Policy gradient methods have become a standard for training reinforcement learning agents in a scalable and efficient manner. However, they do not account for transition uncertainty, whereas learning robust policies can be computationally expensive. In this paper, we introduce robust policy gradient (RPG), a policy-based method that efficiently solves rectangular robust Markov decision processes (MDPs). We provide a closed-form expression for the worst occupation measure. Incidentally, we find that the worst kernel is a rank-one perturbation of the nominal. 
Combining the worst occupation measure with a robust Q-value estimation yields an explicit form of the robust gradient. Our resulting RPG can be estimated from data with the same time complexity as its non-robust equivalent. Hence, it relieves the computational burden of convex optimization problems required for training robust policies by current policy gradient approaches.
\end{abstract}

\section{Introduction}
\looseness=-1
Markov decision processes (MDPs) provide an analytical framework to solve sequential decision-making problems and seek the best performance in a fixed environment. Since the resulting policy can be highly sensitive to parameter values \cite{mannor2007bias}, the robust MDP setting alternatively maximizes return under the worst scenario, thus yielding robustness to uncertain environments \cite{nilim2005robust,iyengar2005robust}. In practice, the robust MDP paradigm quantifies the level of uncertainty through a set $\Uc$ determining the possible range of model perturbations. Then, a policy is said to be robust-optimal if it reaches maximal performance under the most adversarial model within the uncertainty set. Developing efficient solvers for robust MDPs is of great interest, as it can lead to behavior policies with generalization guarantees \cite{xu2012robustness}. 

\looseness=-1
If not computationally expensive, robust MDPs can be strongly NP-hard \cite{wiesemann2013robust}. Thus, to preserve tractability, we commonly assume that $\Uc$ is convex and $s$-rectangular, \ie $\Uc=\times_{s\in\St}\Uc_s$ \cite{nilim2005robust, iyengar2005robust, wiesemann2013robust}. Well established in the robust reinforcement learning (RL) literature \cite{ho2018fast, derman2021twice, ho2021partial, ho2022robust, tamar2014scaling}, the latter assumption means that the overall uncertainty should be designed independently for each state. Further simplification may consider $(s,a)$-rectangular uncertainty sets of the form $\Uc=\times_{(s,a)\in\X}\Uc_{(s,a)}$, albeit this naturally leads to more conservative strategies. In any case, planning in robust MDPs can be computationally costly, as it involves successive max-min problems \cite{ho2021partial, behzadian2021fast, wiesemann2013robust}. To address this issue, the works \cite{derman2021twice, LpRobustMDPs} have established an equivalence between robustness and regularization in RL in order to derive efficient robust planning methods for $s$ and $(s,a)$-rectangular robust MDPs. Indeed, it appears that resorting to proper regularization instead of solving a minimization problem can yield robust behavior without requiring the polynomial time complexity of convex optimization problems \cite{derman2021twice}.

Alternatively to planning, policy gradient algorithms (PG) directly learn an optimal policy by applying gradient steps towards better performance \cite{Sutton1998}. Due to its scalability, ease of implementation, and adaptability to many different settings such as model-free and continuous state-action spaces \cite{kakade2002approximately,trpo}, PG has become the workhorse of RL. Although regularization techniques such as max-entropy \cite{haarnoja2018soft} or Tsallis \cite{lee2018sparse} have shown robust behavior without impairing computational cost, they only account for adversarial reward \cite{brekelmans2022your, eysenbach2021maximum, derman2021twice}. Differently, robust PG formulations (RPG) formulations aim to address uncertainty to reward \emph{and} transition functions. 

Despite their ability to propel robust behavior, RPG methods that target robust optimal policies are still rare in the RL literature. The global convergence of RPG established in \cite{li2022first, wang2022convergence} already ensures global convergence of our proposed algorithm, but motivates a practical method for estimating the gradients. Indeed, \cite{li2022first, wang2022convergence} occult the estimation part, as they assume full access to the policy gradient. Alternatively, the inner loop solution proposed in \cite{wang2022convergence}[Sec.~4.1] requires solving convex optimization problems to find the worst model, which represents a larger time complexity of $O(S^4A\log\epsilon^{-1})$ for $(s,a)$-rectangular, or $O(S^4A^3\log\epsilon^{-1})$ for $s$-rectangular uncertainty sets. These worst kernel and reward models are needed to compute RPG using the policy gradient theorem \cite{sutton1999policy}. Other approaches that elicit an expression for RPG rely on a specific type of uncertainty set such as reward uncertainty with known kernel \cite{derman2021twice, derman2023twice}, $r$-contaminated kernel with known reward \cite{wang2022policy}, or $(s,a)$-rectangular uncertainty \cite{li2022first}, whereas we aim to tackle more general robust MDPs.  

In this work, we introduce an RPG method for both $s$ and $(s,a)$-rectangular ball-constrained uncertainty sets, with similar complexity as non-robust PG. Our approach provides a closed-form expression of RPG without relying on an oracle while applying to the most common robust MDPs. To this end, we derive the worst reward and transition functions, thus revealing the adversarial nature of the corresponding uncertainty set. Surprisingly, we also find that the worst kernel is a rank-one perturbation of the nominal kernel. Leveraging this rank-one perturbation enables us to derive a robust occupation measure. We concurrently propose an alternative definition of the robust Q-value together with an efficient way to estimate it. Combining these results enables us to obtain RPG in closed form. Our resulting RPG update requires $O(S^2A\log\epsilon^{-1})$ computations, thus showing similar time complexity as non-robust PG.  

To summarize our contributions: \textit{(i)} We establish the worst reward and transition models in closed-form; \textit{(ii)} We show that the worst-case transition function is a rank-one perturbation of the nominal; \textit{(iii)} We introduce alternative robust Q-values that can be evaluated through efficient Bellman recursion while retrieving the robust value function; \textit{(iv)} We establish an expression of RPG that can be estimated with similar time complexity as non-robust PG. Experiments show that our RPG speeds up state-of-the-art robust PG updates by 2 orders of magnitude.

\section{Related work}
Although some previous works use gradient methods to learn robust policies, they seek empirical robustness to adversarial behavior rather than robust MDP solutions \cite{pinto2017robust, tessler2019action, derman2018soft}. In that sense, our study differs from adversarial RL as we explicitly optimize the max-min objective to find a robust optimal policy. Accordingly, the risk-averse approach focuses on the \emph{internal uncertainty} due to the stochasticity of the system, whereas robust RL addresses the \emph{external uncertainty} of the system's dynamics. As a result, common risk-averse objectives can be reformulated as robust problems with specific uncertainty sets \cite{tamar2015policy}.

Previous studies that did aim to derive robust policy-based methods are \cite{derman2021twice, wang2022policy,wang2022convergence}. These are summarized in Table \ref{tb:time}, which also displays the complexity of existing approaches. \cite{derman2021twice} established RPG for $s$-rectangular reward-robust MDPs, \ie robust MDPs with uncertain reward but given kernel. Although it applies to general norms, their result does not account for transition perturbation. Differently, in \cite{wang2022policy}, the authors introduced RPG for $r$-contaminated MDPs, \ie robust MDPs with uncertainty set $\Uc:=\{R_0\} \times [(1-r)P_0 + r\Delta_{\St}^{\St\times\A}]$. Although it has similar complexity as non-robust PG, by construction, their setting is limited to $(s,a)$-rectangularity with known reward and mixed transition. As such, the proof techniques in \cite{wang2022policy} are tailor-made to the $r$-contamination framework and do not apply to more general robust MDPs. In fact, we remark that the $r$-contamination setting is equivalent to the action robustness approach introduced in \cite{tessler2019action}, which emphasizes its limitation to action perturbation. Differently, our RPG holds whenever the worst kernel is a rank-one perturbation of the nominal transition function (see Lemma \ref{rs:rankOnePert}). 

The work \cite{li2022first} provides a convergence proof of robust policy mirror-descent in the $(s,a)$-rectangular case, whereas we study robust policy optimization for $s$-rectangular uncertainty sets. In fact, its restriction to the $(s,a)$-case prevents us from transposing the analysis to our setting. This is due to the fact that the standard robust Bellman operator on $Q$-functions can no longer be applied on $s$-rectangular sets.  
To address generic robust MDPs, \cite{wang2022convergence} recently introduced RPG for general uncertainty sets. Their gradient update has a complexity of $O(S^6A^4\epsilon^{-4})$, which is more expensive than non-robust PG by a factor of $S^4A^3\epsilon^{-4}$. Both works \cite{li2022first, wang2022convergence} additionally assume access to an oracle gradient of the robust return with respect to the transition model. Avoiding this oracle assumption naturally leads to even higher time complexity in \cite{wang2022convergence} which is not scalable. At the same time, the two works \cite{li2022first, wang2022convergence} guarantee global convergence of projected robust gradient iterates, thus establishing the potential promise of RPG. In fact, equipped with RPG convergence, the remaining challenge in making it practical is to efficiently estimate the gradient. This represents the main focus of our study: We aim to explicit an RPG method that generalizes existing results on specific uncertainty sets \cite{derman2021twice, wang2022policy} while holding for $s$-rectangular robust MDPs. 

\begin{table}[h]
  \caption{Time complexity of RPG update according to the type of uncertainty set. For conciseness, the displayed complexity hides logarithmic factors in $A$ and $S$. Our RPG method has the same complexity as non-robust PG while it generalizes other RPG methods with similar efficiency.}
  \label{tb:time}
  \centering
  \begin{tabular}{lll}
   \toprule
   \textsc{Uncertainty set} $\Uc$ &  \textsc{Time Complexity} & \textsc{Reference}  \\
   \midrule
 $\{R_0\}\times\{P_0\}$ & $S^2A\log\epsilon^{-1}$  &\cite{sutton1999policy}\\ 
 \midrule
 $\{R_0\} \times [(1-r)P_0 + r\Delta_{\St}^{\St\times\A}]$  & $S^2A\log\epsilon^{-1}$  & \cite{wang2022policy}\\ 
 $(s,a)$-rectangular ball $\Uc^{\texttt{sa}}_p$ & $S^2A\log\epsilon^{-1}$   & \textbf{This work}\\ 
 $(s,a)$-rectangular, convex $\Uc^{\texttt{sa}}$  & $S^4A\log\epsilon^{-1}$   & Convex optimization  \\ 
 \midrule
 $s$-rectangular ball $\Uc^{\texttt{s}}_p$  & $S^2A\log\epsilon^{-1}$ & \textbf{This work}\\ 
 $s$-rectangular ball $(R_0+ \Rc^{\texttt{s}}_p) \times \{P_0\}$  & $S^2A\log\epsilon^{-1}$ & \cite{derman2021twice}\\ 
 $s$-rectangular, convex $\Uc^{\texttt{s}}$  & $S^4A^3\log\epsilon^{-1}$  & Convex optimization  \\ 
 $s$-rectangular, convex $\Uc^{\texttt{s}}$  & $S^6A^4\epsilon^{-4}$  & \cite{wang2022convergence}  \\  
 $s$-rectangular, non-convex $\Uc^{\texttt{s}}$& NP-hard    & \cite{wiesemann2013robust} \\
 \midrule
 Non-rectangular, convex $\Uc$  & NP-hard    & \cite{wiesemann2013robust} \\
 \bottomrule
 \end{tabular}
\end{table}
\vspace*{-.3cm}

\section{Preliminaries}
\looseness=-1

\textbf{Notation: } We denote the cardinal of an arbitrary finite set $\Z$ by $\abs*{\Z}$. Given two real functions $\av,\bv:\Z\to\R$, their inner product is  $\innorm{\av,\bv}_{\Z} := \sum_{z\in\Z}\av(z)\bv(z)$, which induces the $\ell_2$-norm $\norm{\av}_2:= \sqrt{\innorm{\av,\av}_{\Z}}$. More generally, the $\ell_p$ norm of $\av$ is denoted by $\norm{\av}_p$ whose conjugate norm is $\norm{\av}_{q} := \max_{\norm{\bv}_p\leq 1}\innorm{\av, \bv}_{\Z}$ with $q^{-1}=1-p^{-1}$, by Hölder's inequality. The vector of all zeros (resp. all ones) with appropriate dimensions is denoted by $\mathbf{0}$ (resp. $\mathbf{1}$); the probability simplex over $\Z$ by $\Delta_{\Z}:=\{\av:\Z \to \R_+|\innorm{\av, \mathbf{1}}_{\Z}=1\}$, and $I_{\Z}$ designates the identity matrix in $\R^{\Z\times\Z}$. Given $v\in\R^{\Z}$, we finally define the variance function as $\kappa_q(v) =\min_{w\in\mathbb{R}}\norm{v-w\mathbf{1}}_q$ and the mean function as $\omega_q(v) \in \argmin_{w\in\mathbb{R}}\norm{ v-w\mathbf{1}}_q$ (see Tab.~\ref{tb:kappaD} for their closed-form expression when $q\in \{1,2,\infty\}$). 

\subsection{Markov Decision Processes}
A Markov decision process (MDP) is a tuple $(\St,\A,\gamma,\mu, P,R)$ such that $\St$ and $\A$ are finite state and action spaces of cardinal $S$ and $A$ respectively, $\gamma \in [0,1)$ is a discount factor and $\mu\in\Delta_{\St}$ the initial state distribution. Denoting $\X:=\St\times\A$, the couple $(P,R)$ corresponds to the MDP model with $P:\X\to \Delta_{\St}$ being a transition kernel and $R:\X\to \R$ a reward function. A policy $\pi:\St\to \Delta_{\A}$ maps each state to a probability distribution over $\A$, and we denote by $\Pi$ the set of such functions. For any policy $\pi\in\Pi$, $R^\pi\in\R^{\St}$ is the expected immediate reward defined as $R^\pi(s):= \innorm{\pi_s, R(s,\cdot)}_{\A}, \quad\forall s\in\St$, where $\pi_s$ is a shorthand for $\pi(\cdot|s)$. We similarly define the stochastic matrix induced by $\pi$ as $P^{\pi}(s'| s) :=    \innorm{\pi_s,  P(s' | s,\cdot)}_{\A}, \quad\forall s,s'\in\St$, and extend the occupation measure to an arbitrary initial vector $\nu\in \mathbf{R}^{\St}$ by defining  $$d^\pi_{P,\nu}:= \nu^{\top}(I_{\St}-\gamma P^\pi)^{-1}.$$ 
Note that the initial vector here is not necessarily a probability measure: it can be the initial state distribution, but also the balanced value function introduced in Sec.~\ref{sec: towards rpg}[Eq.~\eqref{eq:balVal}].
The performance measure we aim to maximize is the value function $v^\pi_{(P,R)}:= (I_{\St}-\gamma P^\pi)^{-1}R^\pi$, or alternatively, the return $\rho^\pi_{(P,R)}:= \innorm{\mu, v^\pi_{(P,R)}}_{\St}$. We denote the optimal value function (resp. optimal return) by $v^*_{(P,R)}=\max_{\pi\in\Pi}v^\pi_{(P,R)}$ (resp. $\rho^*_{(P,R)}=\innorm{\mu,v^*_{(P,R)}}$). It can be obtained using Bellman operators, which are defined as $T^{\pi}_{\rop}v:= R^{\pi} + \gamma P^{\pi}v$ and $T^*_{\rop}v:=\max_{\pi\in\Pi}T^{\pi}_{\rop}v,\quad\forall v\in\R^{\St}$, respectively \cite{puterman2014markov}. For any vector $v\in\R^{\St}$, we associate its Q-function $Q\in\R^{\X}$ such that $$Q(s,a)=R(s,a) + \gamma\innorm{P(\cdot|s,a),v}_{\St}, \quad\forall (s,a)\in\X.$$ 

\begin{table}[!h]
  \caption{Expressions of the $q$-mean, the $q$-variance, and its gradient. We assume that the vector $v$ is sorted, \ie $v(s_i)\geq v(s_{i+1}), \forall i\in\{1, 2,\cdots, S\}$, and denote $n_l:=\lfloor (S+1)/2\rfloor, n_u:= \lceil (S+1)/2\rceil$.}
  \label{tb:kappaD}
  \centering
  \begin{tabular}{llll}
    \toprule                   
     & $\omega_q(v)$  & $\kappa_q(v)$ & $\nabla_v\kappa_q(v)$ \\
    \midrule
    $q$ & $\argmin_{w\in\R} \norm{ v-w \mathbf{1}}_q $& $\min_{\omega\in\mathbb{R}}\norm{ v -\omega\mathbf{1}}_q$ & $\frac{\partial \kappa_q(v)}{\partial v(s_i)}$   \\&\\
    $\infty$ & $\frac{v(s_1) + v(s_S)}{2}$  & $\frac{v(s_1) - v(s_S)}{2}$    & $\begin{cases} \frac{1}{2} & \text{if $i=1$}\\
    -\frac{1}{2} & \text{if $i=S$}\\0 & \text{o.w.}\\
    \end{cases}$ \\&\\
    $2$  &  $\frac{\sum_{i=1}^{S}v(s_i)}{S}$ & $\sqrt{\sum_{i=1}^{S}(v(s_i) -\omega_2(v))^2}$    & $\frac{v(s_i)-\omega_2(v)}{\kappa_2(v)}$  \\&\\
    $1$  & $\frac{v(s_{n_l})+v(s_{n_u})}{2}$   &    $\sum_{i=1}^{n_l}(v(s_i)-v(s_{S-i}))$ &
    $\begin{cases} 1 & \text{if $i <n_l$}\\
    -1 & \text{if $ i > n_u$} \\0 & \text{o.w.}\\
    \end{cases}$  \\
    \bottomrule
  \end{tabular}   
\end{table} 

With a slight abuse of notation, we can similarly define a Bellman operator over Q-values as 
$$T^{\pi}_{\rop}Q(s,a):= R(s,a) + \gamma \sum_{(s',a')\in\X}P(s'|s,a)\pi_{s'}(a')Q(s',a'), \quad\forall (s,a)\in\X. $$

\subsection{Robust Markov Decision Processes}
In a robust MDP setting, we assume that $(P,R)\in \Uc$ and aim to maximize return under the worst model from the set. We denote the robust performance of a policy $\pi\in\Pi$ by $\rho^\pi_{\Uc} := \min_{{(P,R)\in\Uc}}  \rho^\pi\rop$. It is maximal when it reaches $\rho^*\rob:= \max_{\pi\in\Pi}\rho^\pi_{\Uc}$ at an optimal robust policy $\pi^*_{\Uc} \in \argmax_{\pi\in\Pi}\rho^\pi_{\Uc}$. When considering the robust value function $v^\pi_{\Uc} := \min_{(P,R)\in \Uc} v^\pi\rop$, we further need to assume that $\Uc$ is convex and rectangular so that an optimal robust policy realizing $v^*_{\Uc} := \max_{\pi} v^\pi_{\Uc}$ can be computed in polynomial time \cite{wiesemann2013robust}. We thus assume $\Uc$ to be convex and rectangular in the remainder of this work. Specifically, we denote an $(s,a)$-rectangular uncertainty set by $\Uc^\texttt{sa} := \times_{(s,a)\in\X}(\Pc_{(s,a)},\Rc_{(s,a)})$. It represents a particular case of $s$-rectangular uncertainty which we similarly denote by $\Uc^\texttt{s} := \times_{s\in\St}(\Pc_s,\Rc_s)$. In both cases, there exists an optimal robust policy that is stationary, although all optimal ones may be stochastic \cite{wiesemann2013robust}. 

Similarly to non-robust MDPs, robust MDPs can be solved through Bellman recursion. Indeed, the robust value function $v^\pi_{\Uc}$ (resp., optimal robust value function $v^*_{\Uc}$) is known to be the unique fixed point of the robust Bellman operator $T^\pi_{\Uc} v := \min_{(P,R)\in \Uc} T^\pi\rop v$ (resp., the optimal robust Bellman operator $T^*_{\Uc} v := \max_{\pi\in\Pi} T^\pi_{\Uc} v$), $\forall v\in\R^{\St}$, both being $\gamma$-contractions for the sup-norm. Although this ensures linear convergence of robust value iteration,  the evaluation of each Bellman operator can still be prohibitive for practical use.

\subsubsection{Ball Constrained Uncertainty set} 
To facilitate the computation of robust Bellman updates, we consider uncertainty sets that are centered around a nominal model $(P_0,R_0)$, \ie of the form $\Uc=(P_0, R_0) + (\Pc,\Rc)$, and constrained according to $\ell_p$-norm balls \cite{derman2021twice,LpRobustMDPs,ho2021partial,behzadian2021fast}. In the $(s,a)$-rectangular case, the corresponding uncertainty set is denoted by $\Uc^{\texttt{sa}}_p:=\Rc^{\texttt{sa}}_p\times \Pc^{\texttt{sa}}_p = \times_{(s,a)\in\X}(\Pc_{(s,a)},\Rc_{(s,a)})$ where for any $(s,a)\in\X$,   
\[ \Rc_{(s,a)} = \left\{ r \in \mathbb{R} \mid \abs{r}  \leq \alpha_{s,a}\right\}, \quad \text{and}\quad \Pc_{(s,a)} = \left\{ p \in \mathbb{R}^{\St}\mid \innorm{p,\mathbf{1}}_{\St} = 0, \norm{p}_p \leq \beta_{s,a}\right\}.\]
Similarly, an $s$-rectangular norm-constrained uncertainty is denoted by $\Uc^{\texttt{s}}_p:=\times_{s\in\St}(\Pc_{s},\Rc_{s})$ where for any $s\in\St$,
\[ \Rc_{s} = \{ r \in \mathbb{R}^{\A} \mid \norm{ r}_p \leq \alpha_{s} \}, \quad\text{and}\quad \Pc_{s} = \{ p \in \mathbb{R}^{\X}\mid \innorm{p(\cdot,a),\mathbf{1}}_{\St} = 0 \quad\forall a\in \A, \norm{ p}_p \leq \beta_s\}.\]
In both cases, the kernel uncertainty set conceals linear constraints ensuring all entries in $P_0+\Pc$ are non-negative. Indeed, we generally ignore $P_0$ to satisfy these constraints in practice \cite{roy2017reinforcement}. Although it may include absurd models and unnecessarily lead to conservative policies, this proxy region is appropriate for model-free robust learning.  
Moreover, the norm-ball structure on uncertainty sets above enables us to compute robust Bellman updates with similar time complexity as non-robust ones using regularization \cite{derman2021twice, LpRobustMDPs}. 
We formalize this below. 
\begin{proposition} (\citep[Thm. 2-3]{LpRobustMDPs}.)
\label{bg:rvi} For any policy $\pi\in\Pi$ and any rectangular $\ell_p$-ball-constraint uncertainty set, the robust Bellman operator is equivalent to its regularized form: 
\begin{align*}
    (T^\pi_{\Uc} v)(s)  &= T^{\pi}_{\ropo}v(s) + \Omega_q(\alpha,\beta,v), 
\end{align*}
where $\Omega_q(\alpha,\beta,v):= -\innorm{\pi_s, \alpha_{s,\cdot} + \gamma \kappa_q(v) \beta_{s,\cdot}^P}_{\A}$ for $(s,a)$-rectangular uncertainty $\Uc^{\texttt{sa}}_p$, and $\Omega_q(\alpha,\beta,v):= -(\alpha_s +\gamma\beta_{s}\kappa_q(v))\norm{\pi_s}_q$ for $s$-rectangular uncertainty $\Uc^{\texttt{s}}_p$. 
\end{proposition}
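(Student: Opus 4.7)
The plan is to start from the definition $(T^\pi_\Uc v)(s) = \min_{(P,R)\in\Uc}\brs*{R^\pi(s) + \gamma (P^\pi v)(s)}$ and parametrize each admissible model as a perturbation $(P,R) = (P_0+p,\, R_0+r)$ with $(p,r)$ in the appropriate perturbation set. By linearity of $T^\pi_{(P,R)}v(s)$ in $(P,R)$, this immediately pulls out the nominal term $T^\pi_{(P_0,R_0)}v(s)$ and reduces the problem to evaluating, for a fixed $s$,
\[
\min_{(p,r)}\; \innorm*{\pi_s,\, r(s,\cdot)}_{\A} + \gamma \sum_{a\in\A}\pi_s(a)\innorm*{p(\cdot\mid s,a),\, v}_{\St}.
\]
So the proof reduces to solving two linear minimizations (one over $r$, one over $p$) under ball constraints, for each of the two rectangularity regimes.

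For the reward piece, since the objective $\innorm{\pi_s, r(s,\cdot)}_\A$ is linear in $r$ and the feasible set is a ball, Hölder's inequality is tight. In the $(s,a)$-rectangular case the constraint $\abs{r_{s,a}}\leq \alpha_{s,a}$ decouples, giving $-\pi_s(a)\alpha_{s,a}$ per action; in the $s$-rectangular case, $\min_{\norm{r}_p\leq\alpha_s}\innorm{\pi_s,r}_\A = -\alpha_s\norm{\pi_s}_q$ by Hölder's inequality with equality. The kernel piece uses the crucial zero-sum constraint $\innorm{p(\cdot\mid s,a),\mathbf{1}}_\St=0$: for any scalar $w\in\R$ we may replace $v$ by $v-w\mathbf{1}$ in the inner product without changing its value. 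Applying Hölder to $v-w\mathbf{1}$ and then taking the infimum over $w$ produces the $q$-variance $\kappa_q(v) = \min_w \norm{v-w\mathbf{1}}_q$; tightness follows by choosing $p$ in the direction of the dual optimizer of the Hölder bound. This yields $-\gamma\pi_s(a)\beta_{s,a}\kappa_q(v)$ per action in the $(s,a)$-case.

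The $s$-rectangular kernel step is the step that requires the most care and is the main obstacle. Here the constraint $\norm{p}_p\leq\beta_s$ couples all actions through a joint $\ell_p$-norm on $\R^{\X}$. The computation to perform is
\[
\min_{\norm{p}_p\leq \beta_s,\; \innorm{p(\cdot,a),\mathbf 1}=0\;\forall a}\; \sum_{s',a}\pi_s(a)\,p(s',a)\br*{v(s')-w},
\]
which by Hölder is bounded below by $-\beta_s\norm{\pi_s\otimes(v-w\mathbf{1})}_q$. The key identity is that the $q$-norm factorizes over the tensor product, $\norm{\pi_s\otimes(v-w\mathbf{1})}_q = \norm{\pi_s}_q\norm{v-w\mathbf{1}}_q$, which follows directly from $\sum_{s',a}\abs{\pi_s(a)}^q\abs{v(s')-w}^q = \norm{\pi_s}_q^q\norm{v-w\mathbf{1}}_q^q$. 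Optimizing over $w$ converts $\norm{v-w\mathbf{1}}_q$ into $\kappa_q(v)$, and tightness is again realized by the appropriate dual choice of $p$, yielding $-\gamma\beta_s\norm{\pi_s}_q\kappa_q(v)$.

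Summing the reward and kernel contributions gives $-\innorm{\pi_s,\alpha_{s,\cdot}+\gamma\kappa_q(v)\beta_{s,\cdot}}_\A$ in the $(s,a)$-rectangular case and $-(\alpha_s+\gamma\beta_s\kappa_q(v))\norm{\pi_s}_q$ in the $s$-rectangular case, which matches the stated $\Omega_q(\alpha,\beta,v)$. One small caveat I would flag explicitly in a full write-up is that, as the authors acknowledge before the statement, the non-negativity constraints on $P_0+p$ are dropped as a proxy; without this relaxation the two minimizations would no longer admit closed-form Hölder solutions, but the relaxation is precisely what permits the clean regularized expression.
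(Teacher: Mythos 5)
Your proposal is correct, and in substance it follows the same route as the source the paper cites (and as the matching computations in the paper's appendix): write each model as nominal plus perturbation, use linearity to pull out $T^{\pi}_{(P_0,R_0)}v(s)$, and evaluate the remaining linear programs over balls, where the zero-sum constraint on kernel noise turns the Hölder bound into the $q$-variance $\kappa_q(v)$ — this is exactly the content of the paper's Lemma on the minimizer $c^*=-\epsilon u_q(v)$. Note that the paper itself does not reprove this proposition in the main text (it defers to the cited Thm.~2--3), so the only point of comparison is the appendix derivation of the worst models. There your handling of the $s$-rectangular kernel term differs slightly in form: the paper performs a two-stage minimization, first splitting the budget into per-action radii $\beta_{s,a}$ with $\sum_a \beta_{s,a}^p\le\beta_s^p$, solving each per-action problem to get $-\beta_{s,a}\kappa_q(v)$, and then maximizing $\sum_a\pi_s(a)\beta_{s,a}$ to obtain $\beta_s\norm{\pi_s}_q$; you instead apply Hölder once on the joint space $\R^{\X}$ and use the factorization $\norm{\pi_s\otimes(v-w\mathbf{1})}_q=\norm{\pi_s}_q\norm{v-w\mathbf{1}}_q$, which is precisely the ``decoupling'' property the paper identifies in its appendix as what makes the $s$-rectangular $\ell_p$ case work, so the two arguments are equivalent in content. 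One detail worth spelling out in a full write-up is your tightness claim: the dual optimizer realizing the Hölder bound must also satisfy the zero-sum constraint(s), and this holds only because you center at $w=\omega_q(v)$ — the first-order optimality condition for $\omega_q(v)$ is exactly that this dual vector sums to zero (and in the $s$-rectangular case each action-column of your optimal $p$ is proportional to that same zero-sum vector), which is the verification the paper's lemma carries out explicitly.
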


In the following, we leverage the regularized formulation of robust value functions to explicitly derive RPG for rectangular $\ell_p$-ball uncertainty sets. 

\subsubsection{Robust Gradient Method} 
Since the robust return can be non-differentiable, we need to follow the projected sub-gradient ascent rule in order to optimize the robust return, namely, update $\pi_{k+1} :=\textbf{proj}_{\Pi}(\pi_k + \eta \partial_\pi \rho^{\pi_k}_{\Uc})$ where 
\begin{align}\label{eq: gen:rpg update}
    \partial_\pi\rho^\pi_{\Uc} := \nabla_\pi\rho^\pi_{(P,R)} \Bigm|_{(P,R) = (P^\pi_{\Uc},R^\pi_{\Uc}) }, 
\end{align}
 $\eta$ is the learning rate, $\textbf{proj}_{\Pi}$ denotes the
orthogonal projection on $\Pi$, and  $(P^\pi_{\Uc}, R^\pi_{\Uc}) $ is the worst model associated with $\pi\in\Pi$ and $\Uc$, \ie $ (P^\pi_{\Uc}, R^\pi_{\Uc}) \in \arginf_{(P,R)\in \Uc}\rho^\pi_{(P,R)}.$

Given oracle access to sub-gradient $\partial \rho^\pi_{\Uc}$, projected gradient ascent converges to an $\epsilon$-optimal policy $\pi^*_{\Uc}$. Moreover, under similar conditions as in the non-robust setting, projected gradient ascent holds an iteration complexity of $O(S^4A^2\epsilon^{-4})$ \cite{wang2022convergence}. Yet, the sub-gradient in \eqref{eq: gen:rpg update} is generally intractable, particularly because general convex uncertainty sets may yield NP-hard complexity. Instead, we propose to focus on ball-constrained uncertainty sets in order to efficiently compute RPG updates. 

\section{Towards RPG: Expressing the worst quantities}
\label{sec: towards rpg}
\looseness=-1
In this section, we provide all the ingredients needed for deriving RPG. Before diving into the gradient expression, we first settle on the general framework of policy gradient. Secondly, in Sec.~\ref{sec:worstPm}, we focus on expressing the worst model according to the nominal explicitly. Surprisingly, we find that the worst transition kernel is a rank-one perturbation of the nominal. This finding enables us to derive the robust occupancy measure, \ie the visitation frequency of the worst kernel in Sec.~\ref{sec: rob occupation}. As a last piece, in Sec.~\ref{sec: robust q value}, we propose an alternative definition of robust Q-value and show that it can be estimated from a specific Bellman recursion. 

Consider again the projected gradient ascent rule:
\begin{align*}
    \pi_{k+1} :=&\textbf{proj}_{\Pi}(\pi_k + \eta \partial_\pi \rho^{\pi_k}\rob).
\end{align*}

By definition of the sub-gradient in \eqref{eq: gen:rpg update} and applying the standard PG theorem \cite{sutton1999policy}, it holds that: 
\begin{align*}
    \partial_{\pi}\rho^\pi_{\Uc}=\sum_{(s,a)\in\X}d^\pi_{\Uc}(s)Q^\pi_{\Uc}(s,a)\nabla\pi_s(a),
\end{align*}
where $Q^{\pi}_{\Uc}:=Q^{\pi}_{(P^\pi_{\Uc},R^{\pi}_{\Uc})}$ is the Q-value associated with the worst-case model, and $d^\pi_{\Uc}:= d^\pi_{P^\pi_{\Uc}}$ the occupation measure of the worst transition kernel. In fact, for the uncertainty sets we focus on in this work, the worst Q-value $Q^{\pi}_{\Uc}$ retrieves the common definition of robust Q-value \cite{nilim2005robust, tamar2014scaling} (see the appendix for a detailed discussion). Therefore, for conciseness and with a slight abuse, we shall designate $Q^{\pi}_{\Uc}$ by the robust Q-value, and $d^\pi_{\Uc}$ by the robust visitation frequency. The remaining question is how to compute these quantities and in particular, can we efficiently find the worst parameters $(P^\pi_{\Uc},R^\pi_{\Uc})$? The following part of our study aims to address these questions.

Given an uncertainty set $\Uc$, let first define the normalized and balanced robust value function as:
\begin{align}
\label{eq:balVal}
    u^\pi_{\Uc}(s)  := \frac{\text{sign}(v^\pi_{\Uc}(s)-\omega_q(v^\pi_{\Uc})) \norm{ v^\pi_{\Uc}(s)-\omega_q(v^\pi_{\Uc})}^{q-1}}{\kappa_q(v^\pi_{\Uc})^{q-1}}.
\end{align}
By construction, it has zero mean and unit norm, \ie $ \innorm{u^\pi_{\Uc}, \mathbf{1}}_{\St} =0$ and $\norm{u^\pi_{\Uc}}_p = 1$.
In fact, as stated in the result below, $u^\pi_{\Uc}$ is the gradient of the $q$-variance function, and correlates with the (unnormalized, unbalanced) robust value function according to the same $q$-variance.
\begin{proposition}
\label{rs:noisValCorr} 
For any policy $\pi\in\Pi$ and $\ell_p$-ball rectangular uncertainty set, the following holds:
\begin{align*}
    u^\pi_{\Uc} &= \nabla_v\kappa_q(v)\Bigm|_{v=v^\pi_{\Uc}},\\
    \innorm{u^\pi_{\Uc}, v^\pi_{\Uc}} &= \kappa_{q}(v^\pi_{\Uc}).
\end{align*}
\end{proposition}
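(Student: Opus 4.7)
The plan is to verify the two identities in order: first identity (i), that $u^\pi_{\Uc}$ coincides with $\nabla_v \kappa_q$ evaluated at $v^\pi_{\Uc}$, via an envelope-theorem argument; then identity (ii) as a direct consequence of the positive $1$-homogeneity of $\kappa_q$ together with the zero-mean property of $u^\pi_{\Uc}$.

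For (i), I would start from the variational definition $\kappa_q(v) = \min_{w\in\R}\|v-w\mathbf{1}\|_q$, whose minimum is attained at $w = \omega_q(v)$. Since $\omega_q(v)$ is the inner minimizer, the envelope theorem allows one to differentiate $\|v - w\mathbf{1}\|_q$ with respect to $v$ at fixed $w = \omega_q(v)$: the contribution through $w$ vanishes by first-order optimality. The gradient of the $\ell_q$-norm at a nonzero point $u$ is the classical formula $(\nabla\|u\|_q)_s = \text{sign}(u_s)\,|u_s|^{q-1}/\|u\|_q^{q-1}$; applying it to $u = v-\omega_q(v)\mathbf{1}$ and using $\|u\|_q = \kappa_q(v)$, then specializing to $v = v^\pi_{\Uc}$, recovers exactly the definition of $u^\pi_{\Uc}$ given in \eqref{eq:balVal}.

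For (ii), the most transparent route is to observe that $\kappa_q$ is positively $1$-homogeneous in $v$: for $\lambda>0$ one has $\omega_q(\lambda v) = \lambda \omega_q(v)$, hence $\kappa_q(\lambda v) = \lambda \kappa_q(v)$. Euler's identity for differentiable positively $1$-homogeneous functions then gives $\langle \nabla_v \kappa_q(v), v\rangle = \kappa_q(v)$, which, combined with (i), yields the claim. Alternatively, one can compute directly: splitting $\langle u^\pi_{\Uc}, v^\pi_{\Uc}\rangle = \langle u^\pi_{\Uc}, v^\pi_{\Uc} - \omega_q(v^\pi_{\Uc})\mathbf{1}\rangle + \omega_q(v^\pi_{\Uc})\langle u^\pi_{\Uc}, \mathbf{1}\rangle$, the first inner product equals $\kappa_q^{-(q-1)} \sum_s |v^\pi_{\Uc}(s)-\omega_q(v^\pi_{\Uc})|^q = \kappa_q(v^\pi_{\Uc})$ via the identity $\text{sign}(x)|x|^{q-1}\cdot x = |x|^q$, while the second vanishes because $\sum_s \text{sign}(v^\pi_{\Uc}(s)-\omega_q(v^\pi_{\Uc}))|v^\pi_{\Uc}(s)-\omega_q(v^\pi_{\Uc})|^{q-1} = 0$ is precisely the first-order optimality condition defining $\omega_q(v^\pi_{\Uc})$.

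The main subtlety I anticipate is smoothness: $\kappa_q$ is not everywhere differentiable for $q\in\{1,\infty\}$ or at degenerate ties, cases in which identity (i) must be read in the subdifferential sense. In those cases, I would verify that the formula defining $u^\pi_{\Uc}$ (suitably extended to match the expressions in Tab.~\ref{tb:kappaD}) lies in $\partial \kappa_q(v^\pi_{\Uc})$ by checking the dual characterization $\|u^\pi_{\Uc}\|_p = 1$ together with $\langle u^\pi_{\Uc}, v^\pi_{\Uc}-\omega_q(v^\pi_{\Uc})\mathbf{1}\rangle = \kappa_q(v^\pi_{\Uc})$, i.e.\ that $u^\pi_{\Uc}$ realizes equality in Hölder's inequality applied to $v^\pi_{\Uc}-\omega_q(v^\pi_{\Uc})\mathbf{1}$. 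Identity (ii) then follows exactly as in the smooth case.
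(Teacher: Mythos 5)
Your proof is correct. For the first identity you argue exactly as the paper does: an envelope-theorem differentiation of $\kappa_q(v)=\min_{w}\norm{v-w\mathbf{1}}_q$, combined with the standard gradient formula for the $\ell_q$-norm at $v-\omega_q(v)\mathbf{1}$; you simply spell out the norm-gradient step that the paper leaves implicit. For the second identity the paper goes through its Lemma on the constrained linear program (the vector $c^*=-\epsilon u_q(v)$): it checks the zero-sum constraint via first-order optimality of $\omega_q$ and then computes $-\tfrac1\epsilon\innorm{c^*,v}=\kappa_q(v)$ using $\mathrm{sign}(x)\abs{x}^{q-1}\cdot x=\abs{x}^q$ — which is word-for-word your ``alternative'' direct computation. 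Your primary route, positive $1$-homogeneity of $\kappa_q$ plus Euler's identity $\innorm{\nabla\kappa_q(v),v}=\kappa_q(v)$, is a genuinely different and arguably cleaner argument that derives (ii) from (i) with no extra computation, and it extends verbatim to the subdifferential setting since a convex positively homogeneous function satisfies $\kappa_q(v)=\innorm{g,v}$ for every $g\in\partial\kappa_q(v)$. You also treat the nondifferentiable cases $q\in\{1,\infty\}$ (and ties) explicitly via the Hölder-equality characterization of the subdifferential, a point the paper glosses over by writing ``sub-gradient'' and then using $\nabla\kappa_q$ as if it were single-valued; your handling is the more careful one.
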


\subsection{Worst Kernel and Reward} 
\label{sec:worstPm}
In the following results, we explicit the relationship between the nominal and the worst-case model for $(s,a)$ and $s$-rectangular $\ell_p$-balls. We will then leverage this relationship to compute the robust Q-values and the robust occupation measure, both necessary for RPG.

\begin{theorem}[$(s,a)$-rectangular case]
\label{thm: worst sa}  
Given uncertainty set $\Uc = \Uc^{\texttt{sa}}_p$ and any policy $\pi\in\Pi$, the worst model is related to the nominal one through:
    \begin{align*}
    R^{\pi}_{\Uc}(s,a) &= R_0(s,a)-\alpha_{s,a} \qquad \text{and} \qquad P^{\pi}_{\Uc}(\cdot|s,a) =  P_0(\cdot|s,a) -  \beta_{s,a} u^\pi_{\Uc}.
    \end{align*}
\end{theorem}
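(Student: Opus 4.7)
The plan is to identify the worst model by exploiting the $(s,a)$-rectangular structure at the Bellman level. Since $v^\pi_{\Uc}$ is the fixed point of $T^\pi_{\Uc}$, any worst-case model $(P^\pi_{\Uc},R^\pi_{\Uc})\in\Uc^{\texttt{sa}}_p$ must attain the minimum in
\begin{align*}
    (T^\pi_{\Uc}v^\pi_{\Uc})(s) = \min_{(P,R)\in\Uc^{\texttt{sa}}_p}(T^\pi_{(P,R)}v^\pi_{\Uc})(s),
\end{align*}
and by $(s,a)$-rectangularity this decouples into independent scalar minimizations over each $(s,a)$-pair of the quantity $R(s,a)+\gamma\innorm{P(\cdot|s,a),v^\pi_{\Uc}}_{\St}$.

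The reward part is immediate: minimizing $R(s,a)$ subject to $\abs*{R(s,a)-R_0(s,a)}\leq\alpha_{s,a}$ yields the interval endpoint $R^\pi_{\Uc}(s,a)=R_0(s,a)-\alpha_{s,a}$. For the kernel part, I introduce $p := P(\cdot|s,a)-P_0(\cdot|s,a)$, for which the constraints read $\innorm{p,\mathbf{1}}_{\St}=0$ and $\norm{p}_p\leq\beta_{s,a}$. Using $\innorm{p,\mathbf{1}}_{\St}=0$, I can freely subtract a scalar multiple of $\mathbf{1}$ from $v^\pi_{\Uc}$; choosing the optimal shift $\omega_q(v^\pi_{\Uc})$ and applying Hölder's inequality gives
\begin{align*}
    \innorm{p,v^\pi_{\Uc}}_{\St} = \innorm{p,\, v^\pi_{\Uc}-\omega_q(v^\pi_{\Uc})\mathbf{1}}_{\St} \geq -\norm{p}_p\,\kappa_q(v^\pi_{\Uc}) \geq -\beta_{s,a}\,\kappa_q(v^\pi_{\Uc}).
\end{align*}
The equality case of Hölder identifies the minimizer as $p^\star=-\beta_{s,a}u^\pi_{\Uc}$, once I recognize the corresponding dual vector as $u^\pi_{\Uc}$ via the definition in Eq.~\eqref{eq:balVal} and the gradient identity of Proposition~\ref{rs:noisValCorr}.

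To close the argument I would verify feasibility and consistency. Feasibility is immediate since $\norm{u^\pi_{\Uc}}_p=1$ and $\innorm{u^\pi_{\Uc},\mathbf{1}}_{\St}=0$ by construction, so $-\beta_{s,a}u^\pi_{\Uc}\in\Pc_{(s,a)}$; the reward candidate sits on the boundary of $\Rc_{(s,a)}$. Consistency is checked by plugging the candidate back into $T^\pi_{(P^\pi_{\Uc},R^\pi_{\Uc})}v^\pi_{\Uc}$ and using $\innorm{u^\pi_{\Uc},v^\pi_{\Uc}}_{\St}=\kappa_q(v^\pi_{\Uc})$: the resulting expression exactly matches the regularized Bellman formula of Proposition~\ref{bg:rvi}, confirming that $v^\pi_{(P^\pi_{\Uc},R^\pi_{\Uc})}=v^\pi_{\Uc}$ and hence that the candidate realizes the global minimum of $v^\pi_{(P,R)}$ over $\Uc^{\texttt{sa}}_p$.

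The main obstacle is the apparent circularity: the proposed worst model depends on $v^\pi_{\Uc}$, which is itself defined through the worst-case minimization. Proposition~\ref{bg:rvi} resolves this by already characterizing $v^\pi_{\Uc}$ as the fixed point of an explicit regularized Bellman operator, which lets us treat $v^\pi_{\Uc}$ as a known quantity when solving the per-$(s,a)$ optimization above and then verify, by direct substitution, that the resulting candidate is indeed consistent with this fixed point.
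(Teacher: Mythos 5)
Your proof is correct and takes essentially the same route as the paper: after invoking the fixed-point/Bellman characterization of the worst model and using $(s,a)$-rectangularity to decouple into per-$(s,a)$ linear problems, you solve the kernel minimization via Hölder's inequality and its equality case, which is exactly the content of the paper's Lemma~\ref{lemma:c star} (feasibility of $-\beta_{s,a}u^\pi_{\Uc}$ plus optimal value $-\beta_{s,a}\kappa_q(v^\pi_{\Uc})$). Your closing consistency check against the regularized operator of Proposition~\ref{bg:rvi} is a slightly more explicit way of handling the circularity the paper dispatches with "by definition," but it does not change the substance of the argument.
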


Based on Thm.~\ref{thm: worst sa}, it follows that in the $(s,a)$-rectangular case, the worst reward function is independent of the employed policy. As we establish in Thm.~\ref{thm: worst s} below, this no longer applies under $s$-rectangularity. In either case, the worst kernel is policy-dependent, discouraging the system to move toward high-rewarding states and directing it to low-rewarding ones instead. Surprisingly, the vector penalty $u^\pi_{\Uc}\in\R^{\St}$ additionally illustrates that the worst kernel is a rank-one perturbation of the nominal. Indeed, considering the stochastic matrix induced by any policy $\pi\in\Pi$, we have 
$$[P^{\pi}_{\Uc}-  P_0^{\pi}](s'|s) = -  \left(\sum_{a\in\A}\beta_{s,a}\pi_s(a)\right) u^\pi_{\Uc}(s'),\quad\forall s\in\St,$$ 
so that the perturbation matrix $P^{\pi}_{\Uc}-  P_0^{\pi}$ is of rank one. In the sequel, we will leverage this finding to compute the robust visitation frequency.

\begin{theorem}[$s$-rectangular case] 
\label{thm: worst s} 
Given uncertainty set $\Uc = \Uc^{\texttt{s}}_p$ and any policy $\pi\in\Pi$, the worst model is related to the nominal one through:
    \begin{align*}
    R^\pi_{\Uc}(s,a) &= R_0(s,a)-\alpha_{s} \left(\frac{\pi_s(a)}{\norm{\pi_s}_{q}}\right)^{q-1} \quad \text{and}\quad 
     P^\pi_{\Uc}(\cdot|s,a) =  P_0(\cdot|s,a) -\beta_{s}u^\pi_{\Uc}\left(\frac{\pi_s(a)}{\norm{\pi_s}_{q}}\right)^{q-1} .
    \end{align*}
\end{theorem}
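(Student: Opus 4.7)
The plan is to exploit $s$-rectangularity to decompose the inner minimization into independent subproblems at each state $s$, then solve each by Hölder's inequality. For fixed $\pi$ and robust value $v^\pi_{\Uc}$, after stripping off the nominal terms, the pair $(P^\pi_{\Uc}(\cdot|s,\cdot), R^\pi_{\Uc}(s,\cdot))$ minimizes
\[
\innorm{\pi_s,r}_{\A} + \gamma\sum_{a\in\A}\pi_s(a)\innorm{p(\cdot,a),v^\pi_{\Uc}}_{\St}
\]
over $(p,r)\in\Pc_{s}\times\Rc_{s}$. The objective separates into a reward piece in $r$ and a kernel piece in $p$. Unlike the $(s,a)$-rectangular case of Thm.~\ref{thm: worst sa}, the $\ell_p$-norm constraints here couple all actions at state $s$, so the optimal perturbations must balance the weights $\pi_s(a)$ across $a$.

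For the reward piece, Hölder's inequality gives $\innorm{\pi_s,r}_\A \geq -\norm{\pi_s}_q\norm{r}_p \geq -\alpha_s\norm{\pi_s}_q$, with equality attained at the standard Hölder-conjugate vector $r^{*}(a) = -\alpha_s(\pi_s(a)/\norm{\pi_s}_q)^{q-1}$. One checks $\norm{r^{*}}_p=\alpha_s$ using $p(q-1)=q$, and adding back the nominal recovers the stated expression for $R^\pi_{\Uc}$.

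For the kernel piece, I would first use the zero-sum constraint $\innorm{p(\cdot,a),\mathbf{1}}_{\St}=0$ to replace $v^\pi_{\Uc}$ by its centered surrogate $\tilde v := v^\pi_{\Uc}-\omega_q(v^\pi_{\Uc})\mathbf{1}$, which satisfies $\norm{\tilde v}_q = \kappa_q(v^\pi_{\Uc})$. Viewing $p$ as a vector in $\R^{\X}$ with $\ell_p$-norm and applying Hölder over $\X$,
\[
\sum_{s',a}\pi_s(a)\,p(s',a)\,\tilde v(s') \geq -\norm{p}_p\norm{\pi_s\otimes \tilde v}_q = -\beta_s\norm{\pi_s}_q\kappa_q(v^\pi_{\Uc}),
\]
where the $q$-norm of the outer product factors because $\pi_s\geq 0$. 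The Hölder equality case pins down
\[
p^{*}(s',a) = -\beta_s\left(\frac{\pi_s(a)}{\norm{\pi_s}_q}\right)^{q-1} u^\pi_{\Uc}(s'),
\]
upon recognizing the definition of $u^\pi_{\Uc}$ in Eq.~\eqref{eq:balVal} as the normalized signed $(q-1)$-power of $\tilde v$. Substituting into $P_0+p^{*}$ yields the stated worst kernel.

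The remaining step is to verify $p^{*}\in\Pc_{s}$. The $\ell_p$-norm bound holds with equality by construction, and the zero-sum equality $\sum_{s'}p^{*}(s',a)=0$ reduces to $\innorm{u^\pi_{\Uc},\mathbf{1}}_{\St}=0$, which is precisely the first-order optimality condition defining $\omega_q$. This coincidence is the key technical point and what I anticipate as the main obstacle: were the unconstrained Hölder minimizer not already feasible, one would be forced to introduce Lagrange multipliers for the zero-sum constraint and solve a more delicate constrained program. As a final sanity check, the achieved worst value $-(\alpha_s + \gamma\beta_s\kappa_q(v^\pi_{\Uc}))\norm{\pi_s}_q$ matches the regularizer given in Proposition~\ref{bg:rvi}, certifying that both subproblems are solved tightly.
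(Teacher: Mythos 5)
Your proposal is correct, and for the kernel term it takes a genuinely different route from the paper. The paper solves the per-state problem by a two-stage decomposition: it exploits the decoupling $\norm{x}_p^p=\sum_{a}\norm{x_a}_p^p$ to split the state-level radius $\beta_s$ into per-action budgets $\beta_{s,a}$, solves each inner per-action problem with the same lemma used in the $(s,a)$-rectangular case (yielding $-\beta_{s,a}\kappa_q(v)$), and then solves a second conjugate-norm problem $\max_{\sum_a\beta_{s,a}^p\le\beta_s^p}\sum_a\pi_s(a)\beta_{s,a}$ to find the optimal allocation, which produces the factor $\bigl(\pi_s(a)/\norm{\pi_s}_q\bigr)^{q-1}$. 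You instead center $v^\pi_{\Uc}$ using the zero-sum constraint and apply Hölder once over the product space $\X$, using the factorization $\norm{\pi_s\otimes\tilde v}_q=\norm{\pi_s}_q\kappa_q(v^\pi_{\Uc})$, then verify that your candidate $p^*$ is feasible (unit-norm and zero-sum of $u^\pi_{\Uc}$, the latter being the first-order condition defining $\omega_q$) and attains the bound; since feasibility only shrinks the set over which the Hölder bound was taken, attainment certifies optimality. Both arguments rest on the same conjugacy facts (your $p^*$ re-derives, in the product space, exactly the content of the paper's Lemma~\ref{lemma:c star} and Prop.~\ref{rs:noisValCorr}), and your final sanity check against Prop.~\ref{bg:rvi} is the same consistency the paper's nested computation makes explicit. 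What your one-shot argument buys is brevity and a cleaner equality-case verification; what the paper's staging buys is the interpretation of the adversary allocating its state budget across actions proportionally to $\pi_s(a)^{q-1}$, direct reuse of the $(s,a)$-rectangular machinery, and a transparent view of why the $s$-rectangular extension to general norms hinges on the decoupling property --- a dependence your tensor-factorization step shares but displays less explicitly.
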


Similarly to the $(s,a)$-case, the adversarial kernel is a rank-one perturbation of the nominal. Yet, an extra dependence on the policy through the coefficient $\left(\frac{\pi_s(a)}{\norm{\pi_s}_{q}}\right)^{q-1}$ appears in the $s$-case, affecting both the worst reward and the worst kernel. Intuitively, it means that the worst model cannot be chosen independently for each action, but must instead depend on the agent's policy. This further explains why optimal policies can all be stochastic in $s$-rectangular robust MDPs \cite{wiesemann2013robust}. 

Thms.~\ref{thm: worst sa} and \ref{thm: worst s} enable us to derive the worst MDP model in closed form with time complexity $O(S^2A\log\epsilon^{-1})$, up to logarithmic factors (please see the appendix for a detailed discussion). It thus holds the same complexity as non-robust value iteration, since we additionally need to compute the value function to derive its corresponding regularizer \cite{derman2021twice, LpRobustMDPs}. On the other hand, if we employ convex optimization using value methods instead, obtaining the worst model requires a time complexity of $O(S^4A\log\epsilon^{-1})$ in the $(s,a)$-rectangular case, and $O(S^4A^3\log\epsilon^{-1})$ in the $s$-rectangular case \cite{wang2022convergence}[Sec.~4.1].

\subsection{Robust Occupation Measure}
\label{sec: rob occupation}
We finally derive the robust occupation measure using nominal values, which will lead to an explicit RPG. Although intractable in general, we show that focusing on ball-constrained uncertainty enables deriving the robust occupation matrix efficiently from the (nominal) occupation measure. We first establish the lemma below, which leverages the fact that the worst transition function is a rank-one perturbation of the nominal and represents our core contribution.

\begin{lemma}
\label{rs:rankOnePert} 
Let $b,k\in \mathbb{R}^{\St}$ and $P_0,P_1\in(\Delta_{\St})^{\St}$ two transition matrices. If $P_1 = P_0 - bk^{\top}$, \ie $P_1$ is a rank-one perturbation of $P_0$, then their occupation matrices $D_i:=(I-\gamma P_i)^{-1}, i=0,1$ are related through:   
\[ D_1 = D_{0} - \gamma\frac{ D_0bk^{\top}D_0}{(1 + \gamma k^{\top}D_0b)}.\]
\end{lemma}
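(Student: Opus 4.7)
The identity is the classical Sherman--Morrison formula in disguise, applied to the resolvent $(I-\gamma P_1)$. My plan is simply to recognize that structure and push through the standard calculation, with a brief check that everything is well-defined.

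First, I would rewrite the hypothesis as an identity on the inverse matrices: since $P_1 = P_0 - bk^{\top}$, we have
\[
D_1^{-1} \;=\; I - \gamma P_1 \;=\; (I-\gamma P_0) + \gamma b k^{\top} \;=\; D_0^{-1} + (\gamma b)\,k^{\top}.
\]
Thus $D_1^{-1}$ is a rank-one additive perturbation of $D_0^{-1}$, which is exactly the setting in which Sherman--Morrison applies with $A = D_0^{-1}$, $u = \gamma b$, $v = k$. Substituting into $(A+uv^{\top})^{-1} = A^{-1} - \frac{A^{-1}uv^{\top}A^{-1}}{1+v^{\top}A^{-1}u}$ gives precisely the claimed expression for $D_1$. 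If preferred over invoking Sherman--Morrison as a black box, I would verify the identity directly: multiply the proposed right-hand side by $D_0^{-1}+\gamma b k^{\top}$ and expand the four resulting terms; the two cross terms combine into $\gamma b k^{\top} D_0 - \gamma b k^{\top}D_0 \cdot \frac{1+\gamma k^{\top}D_0 b}{1+\gamma k^{\top}D_0 b}$, which cancels the unwanted piece $\gamma D_0 b k^{\top}/(1+\gamma k^{\top}D_0 b)$ from the first term, leaving exactly $I_{\St}$.

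The only subtlety to address is well-definedness. Because $\gamma\in[0,1)$ and $P_0,P_1$ are stochastic, the spectral radii of $\gamma P_0$ and $\gamma P_1$ are strictly less than one, so both $D_0$ and $D_1$ exist as Neumann series. In particular $D_1^{-1} = D_0^{-1}+\gamma b k^{\top}$ is invertible, and the standard matrix-determinant lemma $\det(D_0^{-1}+\gamma b k^{\top}) = \det(D_0^{-1})\,(1+\gamma k^{\top}D_0 b)$ forces $1+\gamma k^{\top}D_0 b\neq 0$. Hence the denominator appearing in the formula is automatically nonzero, and no additional hypothesis is needed.

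There is essentially no obstacle here beyond bookkeeping: the content of the lemma is entirely carried by Sherman--Morrison, and the paper's convention of writing occupation measures as row vectors $d^{\pi}_{P,\nu}=\nu^{\top}(I-\gamma P^{\pi})^{-1}$ is irrelevant to the statement of the lemma itself, which is a purely matrix-level identity. The value of the lemma lies downstream, where combining it with the rank-one structure of $P^{\pi}_{\Uc}-P_0^{\pi}$ established in Theorems~\ref{thm: worst sa} and~\ref{thm: worst s} will let us express the robust occupation measure in terms of the nominal one at no extra asymptotic cost.
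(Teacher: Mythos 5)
Your proposal is correct and is in substance the same argument as the paper's: the paper proves the identity by manipulating $(I_{\St}-\gamma P_1)D_1 = I_{\St}$, isolating $k^{\top}D_1$ and substituting back, which is nothing more than a hands-on derivation of the Sherman--Morrison formula you invoke (or verify by direct multiplication). Your added observation that $1+\gamma k^{\top}D_0 b\neq 0$ holds automatically --- since $\gamma<1$ and $P_1$ stochastic make $I-\gamma P_1$ invertible, so the matrix-determinant lemma forces the denominator to be nonzero --- is a small but welcome point that the paper leaves implicit.
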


Combining Thms.~\ref{thm: worst sa} and \ref{thm: worst s} with the above lemma, we obtain the robust occupation in terms of the nominal, as stated in Thm.~\ref{thm: robust occupation from nominal} below. Prior to this, we introduce the notion of \emph{expected transition uncertainty} below. 

\begin{definition}
\label{def: expected beta}
Let $\Uc$ a rectangular $\ell_p$-ball-constrained uncertainty set of transition radius $\beta$. For any policy $\pi\in\Pi$, the expected transition uncertainty at any state $s\in\St$ is given by $\beta^\pi_s := \sum_{a\in\A} \pi_s(a)\beta_{s,a}$ if $\Uc =\Uc^{\texttt{sa}}_p$, and $\beta^{\pi}_s := \beta_{s}\norm{\pi_s}_q$ if $\Uc=\Uc^{\texttt{s}}_p$.
\end{definition}

\begin{theorem} 
\label{thm: robust occupation from nominal}
For any rectangular $\ell_p$-ball-constrained uncertainty and $\pi\in\Pi$, it holds that: 
    \begin{align}
    \label{eq: robust occupation}
     d^\pi_{\Uc,\mu}  &=d^\pi_{P_0,\mu} -  \gamma \frac{\innorm{d^\pi_{P_0,\mu},\beta^\pi}_{\St}}{1 +\gamma \innorm{d^\pi_{P_0,u^\pi_{\Uc}},\beta^\pi}_{\St}}d^\pi_{P_0,u^\pi_{\Uc}}.
     \end{align}
\end{theorem}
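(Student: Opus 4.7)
The plan is to reduce the claim to a direct application of the rank-one perturbation identity (Lemma \ref{rs:rankOnePert}), once we have rewritten the worst stochastic matrix $P^{\pi}_{\Uc}$ induced by $\pi$ as a rank-one perturbation of the nominal stochastic matrix $P_0^{\pi}$. So the first step is to show that, in \emph{both} the $(s,a)$- and the $s$-rectangular cases,
\begin{align*}
P^{\pi}_{\Uc} \;=\; P_0^{\pi} \;-\; \beta^{\pi}\bigl(u^{\pi}_{\Uc}\bigr)^{\top},
\end{align*}
where $\beta^{\pi}\in\R^{\St}$ is the expected transition uncertainty from Def.~\ref{def: expected beta}. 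For the $(s,a)$-case, this is immediate from Thm.~\ref{thm: worst sa} after averaging over actions, since the penalty vector $u^{\pi}_{\Uc}$ factors out of the sum $\sum_a \pi_s(a)\beta_{s,a}u^{\pi}_{\Uc}$, yielding $\beta^{\pi}_s=\sum_a\pi_s(a)\beta_{s,a}$. For the $s$-case, Thm.~\ref{thm: worst s} gives a coefficient $\bigl(\pi_s(a)/\norm{\pi_s}_q\bigr)^{q-1}$ multiplying $u^{\pi}_{\Uc}$; averaging with $\pi_s(a)$ produces $\sum_a\pi_s(a)^q/\norm{\pi_s}_q^{q-1}=\norm{\pi_s}_q^{q}/\norm{\pi_s}_q^{q-1}=\norm{\pi_s}_q$, so the aggregate coefficient is exactly $\beta^{\pi}_s=\beta_s\norm{\pi_s}_q$. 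Thus the two rectangularity levels collapse to the same rank-one form.

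Next, I apply Lemma \ref{rs:rankOnePert} with $b=\beta^{\pi}$ and $k=u^{\pi}_{\Uc}$. Writing $D_{\Uc}:=(I_{\St}-\gamma P^{\pi}_{\Uc})^{-1}$ and $D_{0}:=(I_{\St}-\gamma P_0^{\pi})^{-1}$, the lemma yields
\begin{align*}
D_{\Uc} \;=\; D_{0} \;-\; \gamma\,\frac{D_{0}\,\beta^{\pi}\,(u^{\pi}_{\Uc})^{\top} D_{0}}{1+\gamma\,(u^{\pi}_{\Uc})^{\top} D_{0}\,\beta^{\pi}}.
\end{align*}
Left-multiplication by $\mu^{\top}$ gives the occupation measure $d^{\pi}_{\Uc,\mu}=\mu^{\top}D_{\Uc}$ in terms of $d^{\pi}_{P_0,\mu}=\mu^{\top}D_0$ and $d^{\pi}_{P_0,u^{\pi}_{\Uc}}=(u^{\pi}_{\Uc})^{\top}D_0$. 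Identifying $\mu^{\top}D_0\beta^{\pi}=\innorm{d^{\pi}_{P_0,\mu},\beta^{\pi}}_{\St}$ in the numerator and $(u^{\pi}_{\Uc})^{\top}D_0\beta^{\pi}=\innorm{d^{\pi}_{P_0,u^{\pi}_{\Uc}},\beta^{\pi}}_{\St}$ in the denominator yields the stated formula \eqref{eq: robust occupation}.

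The main (mild) obstacle is the verification of the rank-one identity in the $s$-rectangular case, where the per-action coefficients $(\pi_s(a)/\norm{\pi_s}_q)^{q-1}$ look like they should obstruct a clean factorization; the key observation that unlocks the proof is that summing them against $\pi_s(a)$ telescopes to $\norm{\pi_s}_q$, which is exactly the expected transition uncertainty from Def.~\ref{def: expected beta}. Once this algebraic collapse is recognized, the remainder is a mechanical invocation of Lemma \ref{rs:rankOnePert} and a left-multiplication by $\mu^{\top}$, and one only needs to check that the denominator is well-defined (non-zero), which follows from $\beta^{\pi}_s\ge 0$ and $\gamma<1$ together with non-negativity of $D_0$.
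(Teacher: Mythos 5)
Your proof is correct and follows essentially the same route as the paper: average Thms.~\ref{thm: worst sa} and \ref{thm: worst s} over actions to write $P^{\pi}_{\Uc}=P_0^{\pi}-\beta^{\pi}(u^{\pi}_{\Uc})^{\top}$, invoke Lemma~\ref{rs:rankOnePert} with $b=\beta^{\pi}$, $k=u^{\pi}_{\Uc}$, and left-multiply by $\mu^{\top}$. You merely spell out the action-averaging (and the well-definedness of the denominator) more explicitly than the paper does, which is a welcome but not substantively different addition.
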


Thm.~\ref{thm: robust occupation from nominal} explicitly highlights the relationship between the robust visitation frequency and the nominal one. Thus, according to Eq.~\eqref{eq: robust occupation}, the standard non-robust occupation measure in the first term needs to be penalized by another one, $d^\pi_{P_0,u^\pi_{\Uc}} = (u^\pi_{\Uc})^{\top}(I_{\St}-\gamma P^\pi_0)^{-1}$, to obtain the robust occupation measure. Recall that $u^{\pi}_{\Uc}$ is the balanced-scaled value function determined by $\pi\in\Pi$ and uncertainty set $\Uc$. Thus, the penalty term $d^\pi_{P_0,u^\pi_{\Uc}}$ tends to zero if all coordinates of the robust value function vector converge to the same value.

Nonetheless, our expression \eqref{eq: robust occupation} does present some challenges. First, the visitation frequency appearing in the correction term indicates that instead of taking a fixed initial state distribution, we should start from a \emph{varying} and \emph{signed} measure represented by the balanced value function. Although it suggests putting more weight on worst-performing states, obtaining a non-biased estimator for this occupancy measure remains unclear in model-free learning. One may use importance sampling, but as any off-policy
approach, both variance and bias would need to be controlled then. Such statistical analysis goes beyond the scope of this work.

\subsection{Robust Q-values}
\label{sec: robust q value}
In this section, we focus on the last element needed for RPG and aim to estimate the robust Q-value denoted previously by $Q^\pi_{\Uc}:=Q^\pi_{(P^\pi_{\Uc},R^{\pi}_{\Uc})}$. Define its associated value function as $v^{\pi}_{\Uc}(s) = \innorm{\pi_s, Q^{\pi}_{\Uc}(s,\cdot)}, \forall s\in\St, \pi\in\Pi$. Based on standard Bellman recursion, it thus holds that: $$Q^{\pi}_{\Uc}(s,a) = R^\pi_{\Uc}(s,a) + \gamma\innorm{P^\pi_{\Uc}(\cdot|s,a), v^\pi_{\Uc}}_{\St}, \quad\forall (s,a)\in\X, \pi\in\Pi,$$
while $Q^\pi_{\Uc}$ is the unique fixed point of the $\gamma$-contracting operator 
\begin{align}
\label{eq: bellman op robust}
(\mathcal{L}^\pi_{\Uc} Q)(s,a) := T^{\pi}_{(P^\pi_{\Uc},R^{\pi}_{\Uc})}Q(s,a), \quad\forall Q\in\R^{\X}.
\end{align}
The relations above hold for general uncertainty sets, provided that we have access to the worst model. The $s$-rectangularity assumption additionally enables us to retrieve the robust value function using the Bellman operator above \cite{wiesemann2013robust}. Concretely, we have:  $v^\pi_{\Uc} = \min_{(P,R)\in \Uc} v^\pi_{(P,R)} = v^\pi_{(P^\pi_{\Uc},R^\pi_{\Uc})}.$

The following result derives a regularized operator equivalent to $\mathcal{L}^\pi_{\Uc}$, which results in an efficient iteration method to compute the robust Q-value.

\begin{proposition}
\label{rs:rQval} 
The Bellman operator $\mathcal{L}^\pi_{\Uc}$ defined in Eq.~\eqref{eq: bellman op robust} is equivalent to:
\begin{align*}
    (\mathcal{L}^{\pi}_{\Uc} Q)(s,a)       &= T^{\pi}\ropo Q(s,a) + \Omega_q'(\alpha_{s,a},\beta_{s,a},v),
\end{align*}
where $v(s):=\innorm{\pi_s, Q(s,\cdot)}_{\A}$, $\Omega_q'(\alpha,\beta,v):= -(\alpha_{s,a} +\gamma\beta_{s,a}\kappa_{q}(v))$ for $(s,a)$-rectangular uncertainty $\Uc^{\texttt{sa}}_p$, and $\Omega_q'(\alpha,\beta,v):= -\left(\frac{\pi_s(a)}{\norm{ \pi_s}_{q}} \right)^{q-1}(\alpha_{s}+\gamma\beta_{s}\kappa_{q}(v))$ for $s$-rectangular $\Uc^{\texttt{s}}_p$. 
\end{proposition}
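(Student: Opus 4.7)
The plan is to unfold $\mathcal{L}^{\pi}_{\Uc}$ directly, substitute the closed-form worst reward and transition from Thms.~\ref{thm: worst sa} and \ref{thm: worst s}, and then recognize the resulting inner product as the $q$-variance $\kappa_q$ via Prop.~\ref{rs:noisValCorr}. Concretely, setting $v(s):=\innorm{\pi_s,Q(s,\cdot)}_{\A}$ and unfolding the Bellman operator on $Q$-values gives
\[\mathcal{L}^{\pi}_{\Uc}Q(s,a) \;=\; R^{\pi}_{\Uc}(s,a) + \gamma\sum_{(s',a')\in\X}P^{\pi}_{\Uc}(s'|s,a)\pi_{s'}(a')Q(s',a') \;=\; R^{\pi}_{\Uc}(s,a) + \gamma\innorm{P^{\pi}_{\Uc}(\cdot|s,a),v}_{\St}.\]
The proof then reduces to evaluating the reward correction and the inner product with $v$ under the worst-case model.

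For the $(s,a)$-rectangular case I would plug in $R^{\pi}_{\Uc}(s,a)=R_0(s,a)-\alpha_{s,a}$ and $P^{\pi}_{\Uc}(\cdot|s,a)=P_0(\cdot|s,a)-\beta_{s,a}u^{\pi}_{\Uc}$ from Thm.~\ref{thm: worst sa}. Distributing the inner product and regrouping the nominal terms into $T^{\pi}\ropo Q(s,a)$ yields
\[\mathcal{L}^{\pi}_{\Uc}Q(s,a) \;=\; T^{\pi}_{(P_0,R_0)}Q(s,a) - \alpha_{s,a} - \gamma\beta_{s,a}\innorm{u^{\pi}_{\Uc},v}_{\St}.\]
The only remaining task is to identify $\innorm{u^{\pi}_{\Uc},v}_{\St}$ with $\kappa_q(v)$, which is exactly the second identity of Prop.~\ref{rs:noisValCorr}: $u^{\pi}_{\Uc}$ is the (sub)gradient of the $q$-variance, and since $\kappa_q$ is a seminorm (hence $1$-homogeneous and convex), Euler's relation gives the desired pairing at the relevant value of $v$.

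The $s$-rectangular case follows the same template but now uses Thm.~\ref{thm: worst s}: both the reward penalty and the kernel perturbation carry an extra multiplicative factor $\bigl(\pi_s(a)/\norm{\pi_s}_{q}\bigr)^{q-1}$. Because this factor depends only on $(s,a)$ and not on the summation index $s'$, it pulls out cleanly from $\innorm{P^{\pi}_{\Uc}(\cdot|s,a),v}_{\St}$, leaving once again the combination $\innorm{u^{\pi}_{\Uc},v}_{\St}=\kappa_q(v)$. Collecting terms gives the regularizer $-\bigl(\pi_s(a)/\norm{\pi_s}_{q}\bigr)^{q-1}(\alpha_s+\gamma\beta_s\kappa_q(v))$, matching the stated form of $\Omega_q'$.

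The only genuinely nontrivial step is the identification $\innorm{u^{\pi}_{\Uc},v}_{\St}=\kappa_q(v)$, which is where I expect the work to sit; everything else is arithmetic bookkeeping that reorganises the Bellman sum and pulls scalar factors out of an inner product. Since Prop.~\ref{rs:noisValCorr} already supplies that identification, both regularized expressions follow with essentially no additional effort once the worst-case substitutions are in place.
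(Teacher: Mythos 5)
Your route diverges from the paper's at the crucial step, and that step has a genuine gap. You substitute the \emph{fixed} worst model of Thm.~\ref{thm: worst sa} (resp.\ Thm.~\ref{thm: worst s}), whose kernel perturbation is $u^\pi_{\Uc}$, the balanced vector built from the \emph{robust value function} $v^\pi_{\Uc}$, and then claim $\innorm{u^\pi_{\Uc},v}_{\St}=\kappa_q(v)$ for the value $v$ induced by an \emph{arbitrary} $Q$. That identity is exactly what Prop.~\ref{rs:noisValCorr} and Euler's relation for the $1$-homogeneous convex function $\kappa_q$ give you only when the gradient and the argument sit at the same point, i.e.\ when $v=v^\pi_{\Uc}$ (up to shifts by $\mathbf{1}$, since $u^\pi_{\Uc}$ has zero mean). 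For a general $Q$, convexity plus homogeneity yield only $\innorm{u^\pi_{\Uc},v}_{\St}=\innorm{\nabla\kappa_q(v^\pi_{\Uc}),v}_{\St}\le\kappa_q(v)$, with strict inequality in general. So your computation establishes the Bellman \emph{recursion satisfied by the fixed point} $Q^\pi_{\Uc}$ (which is a separate, weaker statement the appendix also records), but not the operator equivalence for every $Q$, which is what the proposition needs for $\mathcal{L}^\pi_{\Uc}$ to serve as an iteration scheme.

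The paper's proof avoids this by working with the operator in its minimization form: $(\mathcal{L}^\pi_{\Uc}Q)(s,a)$ is written as $\min_{(P,R)\in\Uc}\{R(s,a)+\gamma\sum_{s'}P(s'|s,a)v(s')\}$ with $v$ the value induced by the \emph{current} $Q$ (this is the appendix definition, Eq.~\eqref{eq: def L op}, where the worst pair $(P^\pi_{\Uc,v},R^\pi_{\Uc,v})$ depends on $v$, not on $v^\pi_{\Uc}$). Rectangularity decouples the reward and kernel minimizations, and the kernel minimization is then resolved by the duality result for $\kappa_q$ (Lemma~\ref{lemma:c star}, equivalently the regularization results of \cite{LpRobustMDPs}) evaluated \emph{at the current $v$}, which is valid for every $Q$. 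To repair your argument, either adopt that $v$-dependent reading of $\mathcal{L}^\pi_{\Uc}$ and replace $u^\pi_{\Uc}$ by the balanced vector of the current $v$ (so the Euler/duality pairing is applied at the right point), or acknowledge that with the fixed worst model your identity only holds at $Q=Q^\pi_{\Uc}$; the scalar factor $\bigl(\pi_s(a)/\norm{\pi_s}_q\bigr)^{q-1}$ pulling out in the $s$-rectangular case is fine either way.
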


\section{Robust Policy Gradient} 
\looseness=-1
\label{sec: r2 pg}
We are now able to derive an RPG by combining our previous results. Notably, unlike previous works that need to sample next-state transitions based on all models from the uncertainty set \cite{pinto2017robust,mankowitz2018learning, derman2018soft}, here, we only need the nominal kernel to get the occupation measures. 

\begin{theorem}[RPG]
\label{rs:RPG} 
For any rectangular $\ell_p$-ball-constrained uncertainty, the robust policy gradient is given by:
\begin{align}
\label{eq: rpg eq}
     \partial_{\pi} \rho^\pi_{\Uc}  &=\sum_{(s,a)\in\X}\left(d^\pi_{P_0,\mu}(s) - c^\pi(s)\right)Q^\pi_{\Uc}(s,a)\nabla\pi_s(a),
\end{align}
where
\[c^\pi(s):=   \frac{\gamma\innorm{d^\pi_{P_0,\mu},\beta^\pi}_{\St}}{1 +\gamma \innorm{d^\pi_{P_0,u^\pi_{\Uc}},\beta^\pi}_{\St}}d^\pi_{P_0,u^\pi_{\Uc}}(s),\quad\forall s\in\St.\] 
\end{theorem}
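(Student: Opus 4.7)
The plan is to combine the standard policy gradient theorem, applied to the worst-case MDP associated with $\pi$ and $\Uc$, with the explicit expression for the robust occupation measure derived in Theorem~\ref{thm: robust occupation from nominal}.

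First, I would start from the definition of the sub-gradient recalled in Eq.~\eqref{eq: gen:rpg update}: since $\partial_\pi \rho^\pi_{\Uc}$ is the non-robust policy gradient evaluated at the worst model $(P^\pi_{\Uc}, R^\pi_{\Uc})$, I can apply the classical policy gradient theorem \cite{sutton1999policy} to the MDP with model $(P^\pi_{\Uc}, R^\pi_{\Uc})$ and initial distribution $\mu$. This yields
\begin{align*}
    \partial_{\pi} \rho^\pi_{\Uc} = \sum_{(s,a)\in\X} d^\pi_{\Uc,\mu}(s)\, Q^\pi_{\Uc}(s,a)\, \nabla\pi_s(a),
\end{align*}
where, as noted in Sec.~\ref{sec: towards rpg}, $d^\pi_{\Uc,\mu} = d^\pi_{P^\pi_{\Uc},\mu}$ is the visitation frequency induced by the worst kernel and $Q^\pi_{\Uc}$ is the Q-function under the worst model. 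This is the formulation we had before accounting for the structure of $\Uc$.

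Next, I would substitute the closed-form robust occupation measure from Theorem~\ref{thm: robust occupation from nominal}, which, relying on the rank-one perturbation property established in Sec.~\ref{sec:worstPm} together with Lemma~\ref{rs:rankOnePert}, gives
\begin{align*}
    d^\pi_{\Uc,\mu}(s) = d^\pi_{P_0,\mu}(s) - \gamma \frac{\innorm{d^\pi_{P_0,\mu},\beta^\pi}_{\St}}{1 + \gamma \innorm{d^\pi_{P_0,u^\pi_{\Uc}},\beta^\pi}_{\St}}\, d^\pi_{P_0,u^\pi_{\Uc}}(s).
\end{align*}
Recognizing the second term as exactly $c^\pi(s)$, plugging this identity into the previous display, and distributing over the sum immediately yields
\begin{align*}
    \partial_{\pi} \rho^\pi_{\Uc} = \sum_{(s,a)\in\X} \bigl(d^\pi_{P_0,\mu}(s) - c^\pi(s)\bigr) Q^\pi_{\Uc}(s,a)\, \nabla\pi_s(a),
\end{align*}
which is the claimed expression.

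The main content of the argument is therefore already concentrated in Theorem~\ref{thm: robust occupation from nominal}; once that is in hand, the present theorem reduces to a one-line substitution. The only subtlety I would double-check is the application of the classical policy gradient theorem at the point $(P^\pi_{\Uc}, R^\pi_{\Uc})$: because $(P^\pi_{\Uc}, R^\pi_{\Uc})$ depends on $\pi$, one must ensure that we are using the sub-gradient definition from \eqref{eq: gen:rpg update}, which freezes the worst model before differentiating in $\pi$, and not a total derivative. This is precisely how $\partial_\pi \rho^\pi_{\Uc}$ was defined, so the envelope-style argument behind \eqref{eq: gen:rpg update} justifies the reduction. Beyond this conceptual step, no further calculation is needed.
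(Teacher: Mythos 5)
Your proposal is correct and follows exactly the paper's argument: apply the standard policy gradient theorem at the frozen worst model $(P^\pi_{\Uc},R^\pi_{\Uc})$ as per the sub-gradient definition in Eq.~\eqref{eq: gen:rpg update}, then substitute the robust occupation measure from Thm.~\ref{thm: robust occupation from nominal} and identify the correction term with $c^\pi$. Your added remark about freezing the worst model before differentiating is a fair clarification of the same reduction the paper uses.
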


The implementation of RPG directly follows and can be found in Alg.~\ref{algo:rpg}. 
Thm.~\ref{rs:RPG} is a straightforward application of non-robust PG, as its proof simply consists in plugging Eq.~\eqref{eq: robust occupation} into the standard PG expression $\partial_{\pi} \rho^\pi_{\Uc}  =\sum_{(s,a)\in\X}d^\pi_{\Uc,\mu}(s)Q^\pi_{\Uc}(s,a)\nabla\pi_s(a)$. We obtain a regular PG in the first term, with the robust Q-value instead of the non-robust one, plus a correction term $c^{\pi}$ resulting from taking the visitation frequency of the worst kernel instead of the nominal. Unlike previous work that uses policy regularization to achieve empirical robustness in PG methods \cite{brekelmans2022your, husain2021regularized}, Thm.~\ref{rs:RPG} establishes an RPG that accounts for transition uncertainty and targets a robust optimal policy. It crucially relies on the rank-one-perturbation structure of the worst transition kernel (see Lem.~\ref{rs:rankOnePert}). As established in Thm.~\ref{thm: robust occupation from nominal}, $\ell_p$-ball uncertainty implies such property, but the converse as to whether any convex set leads to the worst transition kernel being a rank-one perturbation of the nominal remains an open question. For example, it would be interesting to investigate the structural properties needed on the uncertainty set for the rank-one perturbation to hold.

\begin{algorithm}
	\caption{RPG} 
 \label{algo:rpg}
        \hspace*{\algorithmicindent} \textbf{Input: } $\mu, \eta$ \\
        \hspace*{\algorithmicindent} \textbf{Initialize: } $v_k,  \pi_k$ 
        
	\begin{algorithmic}[1]
 
		\For {$k=1,2,\ldots$}
            \State $\partial_{\pi} \rho_k  =\sum_{(s,a)\in\X}\left(d^\pi_{P_0,\mu}(s) - c^\pi(s)\right)Q^\pi_{\Uc}(s,a)\nabla\pi_s(a)$
            \Comment{Compute policy gradient}

            \State $\pi_k \leftarrow \textbf{proj}_{\Pi}(\pi_k + \eta \partial_\pi \rho_k)$ \Comment{Update policy}
		\EndFor
	\end{algorithmic} 
\end{algorithm}

\subsection{Complexity Analysis}
\label{sec:time}

A major concern in solving robust MDPs is time complexity \cite{wiesemann2013robust}. Similarly, it is of major importance to assess the additional time required for computing an RPG update, compared to its non-robust variant. Although previous work has analyzed the convergence rate of RPG to a global optimum \cite{wang2022convergence}, it assumes access to an oracle gradient, thus occulting the computational concerns raised from gradient estimation. In fact, the NP-hardness of non-rectangular and/or non-convex robust MDPs \cite{wiesemann2013robust} already indicates that their resulting RPG can be intractable. 

To compute RPG in Thm.~\ref{rs:RPG}, we first need to evaluate the robust Q-value. Based on Lemma \ref{rs:rQval} and the Bellman operators introduced there, our evaluation method involves an additional estimation of the variance function $\kappa_p$. According to \cite{LpRobustMDPs}, this takes logarithmic time at most, using binary search. As to the compensation term $c^{\pi}$ in Eq.~\eqref{eq: rpg eq}, it requires computing occupancy measures with respect to two different initial vectors, namely the balanced value function and the initial distribution. Thus, the computational cost for estimating $c^{\pi}$ is the same as estimating a non-robust occupancy measure. Tab.~\ref{tb:time} summarizes the complexity of different approaches while a detailed discussion can be found in the appendix. 
We refer to \cite{wang2022convergence}[Sec.~4.1] for the complexity of RPG based on convex optimization.

\textbf{Generalization to arbitrary norms. }
Until now, we have focused on $\ell_p$-norm for concreteness. However, the above results apply to any norm $\norm{\cdot}$, at least if the uncertainty set is $(s,a)$-rectangular, in which case the variance function changes to $\kappa(v) := \min_{\norm{c} \leq 1, \mathbf{1}^{\top}c =0}\innorm{c,v}$ and the balanced value  to $\argmin_{\norm{c} \leq 1, \mathbf{1}^{\top}c =0}\innorm{c,v}$. The rank-one perturbation structure of the worst kernel is preserved, so the robust occupation measure can be obtained similarly using Lemma \ref{rs:rankOnePert}. 
The $s$-rectangular is more involved. We defer its discussion to the appendix and leave its complete derivation for future work.

\section{Experiments}
\looseness=-1
\label{sec: exp}
In order to test the effectiveness of our RPG update, we evaluate its increased time complexity relative to non-robust PG. In the following experiments, we randomly generate nominal models for arbitrary state-action space sizes. Each experiment was averaged over 100 runs. We refer the reader to the appendix for more details on the radius levels and other implementation choices. 

We first focus on $\ell_1$-robust MDPs to compare our RPG with a convex optimization approach. Specifically, we consider a robust PG with an optimization solver, which we designate by LP-RPG. Indeed, recall that $\ell_1$-ball-constraints induce a linear program (LP) rather than a more general convex optimization problem. Therefore, to compute the robust value function for a given policy, we iteratively evaluate the robust Bellman operator using LP \cite[Section 4.1]{wang2022convergence}. Using this approximated value function, we can compute the worst value parameters to apply PG theorem by \cite{sutton1999policy} and deduce an LP-based robust PG update. Differently, our RPG method relies on the regularized formulation of robust value iteration proposed in \cite{derman2021twice, LpRobustMDPs}, from which we deduce the normalized-balanced value function as in Eq.~\eqref{eq:balVal}. We finally apply Thm.~\ref{thm: robust occupation from nominal} to compute the robust occupation measure, and Prop.~\ref{rs:rQval} to obtain the robust Q-value. 

Tab.~\ref{tb:RRT} displays the results obtained for the two alternative methods described above. In all experiments, the standard deviation was typically 2-10\% so we omitted it for brevity. As can be seen in Tab.~\ref{tb:RRT}, LP-RPG does not scale well compared to RPG, whereas RPG has similar time complexity as PG. Notably, the running time of $s$-rectangular LP-RPG scales much better with the space size than its $(s,a)$-rectangular equivalent, which confirms the theoretical complexities from Tab.~\ref{tb:time}. Yet, since these methods were time-consuming, we repeated these for a few runs only. In fact, LP-RPG is more expensive than RPG by 1-3 orders of magnitude, which illustrates its inefficiency. We emphasize that here, we only focused on $\ell_1$-robust MDPs to leverage LP solvers in robust policy evaluation. We expect the computational cost of LP-RPG to scale even more poorly for other $\ell_p$-robust MDPs that involve polynomial time-consuming convex programs.

\begin{table}[h]
   \caption{Comparison of the relative running time between RPG and the convex optimization approach (here, LP). Our method is faster than LP-based updates by 1 to 3 orders of magnitude.}
  \centering
  \begin{tabular}{cc||c||cc||cc}
  \toprule
    && $\{(P_0, R_0)\}$ &  \multicolumn{2}{c||}{$\mathcal{U}^{\texttt{sa}}_1$}   & \multicolumn{2}{c}{$\mathcal{U}^{\texttt{s}}_1$}\\
   \midrule
    S&A& PG & RPG & LP-RPG     &RPG &  LP-RPG\\
   \midrule
    10&10&1&1.4   &  326 &1.4 & 77\\
   30&10&1&1.4    & 351 &1.4&  109\\   
    50&10&1&1.4   &  408&1.4& 159\\
    100&20&1&1.5   &  469&1.3 & 268\\
    500&50&1&1.3  & 925 &1.3&5343\\
    \bottomrule
 \end{tabular}
   \label{tb:RRT}
\end{table}

We further compare our RPG to non-robust PG on different $\ell_p$-balls. Tab.~\ref{tb:rpg results} confirms the comparable time complexity of RPG to non-robust PG, thus demonstrating the effectiveness of our method. We note that for $p\in\{1,2,\infty\}$, the corresponding regularization quantities can be computed in closed form, whereas they involve a binary search for other values \cite{LpRobustMDPs}. We thus get a slight running-time increase for $p\in\{5,10\}$.

\begin{table}[h]
   \caption{Relative running time for computing RPG under different types of uncertainty sets. }
  \centering
  \begin{tabular}{cc||c||cc||cc||cc||cc}
   \toprule
    S&A& $\{(P_0, R_0)\}$ & $\mathcal{U}^{\texttt{sa}}_2$   & $\mathcal{U}^{\texttt{s}}_2$   & $\mathcal{U}^{\texttt{sa}}_5$ & $\mathcal{U}^{\texttt{s}}_{5}$ &  $\mathcal{U}^{\texttt{sa}}_{10}$ & $\mathcal{U}^{\texttt{s}}_{10}$ &$\mathcal{U}^{\texttt{sa}}_\infty$ & $\mathcal{U}^{\texttt{s}}_\infty$\\
   \midrule
    10&10&1    &1.5  &1.5   &4.9  &4.7    &4.7 &4.9  &1.5    &   1.6\\
    30&10&1    &1.4  &1.5   &4.2  &4.3    &4.2  &4.0&1.4   &1.4\\   
    50&10&1    &1.5  &1.4   &4.5  &4.1    &4.0  &4.0  &1.4   &1.4 \\
    100&20&1   &1.4  &1.3   &2.6  &2.5    &2.5 &2.4 &1.3    &1.2\\
    500&50&1   &1.2  &1.2   &1.7  &1.7    &1.7 &1.7  &1.2     &1.3 \\
    \bottomrule
 \end{tabular}
 \label{tb:rpg results}
\end{table}

\vspace*{-.3cm}

\section{Discussion}

This paper introduced an explicit expression of RPG for rectangular robust MDPs. Our approach involved auxiliary results such as deriving the worst model in closed form and showing that it is a rank-one perturbation of the nominal kernel. The resulting RPG extends vanilla PG with additional correction terms that can be derived in closed form as well. Thus, the computational time of RPG is similar to its non-robust variant.  

A key assumption that would be interesting to relax is the normed-ball structure of the uncertainty sets considered in this study. Indeed, since the proofs of our technical results rely on norm properties, it is still unclear if and how RPG can generalize to metric-based or $f$-divergence uncertainty sets. The latter type of uncertainty can be particularly useful for data-driven settings, as the radius can be chosen according to cross-validation or statistical bounds \cite{ho2022robust}. Another compelling direction would be to explore other variants of RPG using mirror descent or natural policy gradient and examine their compatibility with deep architectures, which would further demonstrate the practical efficiency of our RPG method.

\bibliography{main}
\bibliographystyle{plain}

\newpage
\appendix
\DoToC
\newpage

\section{Balanced and Normed Vectors}
In this section, we lay down some basic properties of $p$-normalized-balanced vectors.

First recall the $p$-variance and the $p$-mean defined as:
\[\kappa_p(v) = \min_{\omega\in\R}\norm{v -\omega\mathbf{1}}_p,\qquad\omega_p(v) = \argmin_{\omega\in\R}\norm{v -\omega\mathbf{1}}_p. \] 
Given any $v\in\R^{\St}$, let also the $p$-balanced-normalized function:
\[u_p(v)(s):=\textsc{sign}(v(s) - \omega_p(v))   \left(\frac{|v(s) - \omega_p(v)|}{\kappa_p(v)}\right)^{p-1}, \quad\forall v\in\R^{\St}, s\in\St.\] 
According to \cite{LpRobustMDPs}[Sec.~16.1, Lemma 1], the following holds:
\begin{align}
\label{eq: kappa q is min}
        \kappa_q(v) = -\frac{1}{\epsilon}\left[\min_{\norm{c}_p \leq \epsilon,  \innorm{c, \mathbf{1}}=0}\innorm{c,v}\right],
\end{align}
namely, the $p$-variance function is the optimal value of a linear optimization under kernel noise constraint. The result below further characterizes the solution to the above problem.

\begin{lemma} 
\label{lemma:c star}
The vector defined as $c^* := -\epsilon u_q(v)$ is an optimal solution to the optimization problem
\[ \min_{\norm{c}_p \leq \epsilon, \innorm{c,\mathbf{1}}_{\St}=0}\innorm{c,v}.\]
\end{lemma}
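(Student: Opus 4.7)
The plan is to verify feasibility of $c^*$ and compute its objective value, then match this against a lower bound obtained via Hölder's inequality combined with the variational characterisation \eqref{eq: kappa q is min}.

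First I would establish a universal lower bound on the objective. For any feasible $c$, the constraint $\innorm{c,\mathbf{1}}_{\St}=0$ allows shifting $v$ by any multiple of $\mathbf{1}$ without affecting the inner product, so
\[
\innorm{c,v}=\innorm{c, v-\omega_q(v)\mathbf{1}}\ge -\norm{c}_p\,\norm{v-\omega_q(v)\mathbf{1}}_q\ge -\epsilon\,\kappa_q(v),
\]
where the first inequality is Hölder and the second uses $\norm{c}_p\le \epsilon$ together with the definition of $\omega_q,\kappa_q$. This reproduces \eqref{eq: kappa q is min} and shows any feasible point satisfies $\innorm{c,v}\ge -\epsilon\kappa_q(v)$.

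Next I would check that $c^*=-\epsilon u_q(v)$ is feasible. For the norm constraint, a direct calculation using the conjugate identity $p(q-1)=q$ gives
\[
\norm{u_q(v)}_p^p=\sum_{s}\left(\frac{|v(s)-\omega_q(v)|}{\kappa_q(v)}\right)^{p(q-1)}=\frac{\norm{v-\omega_q(v)\mathbf{1}}_q^{q}}{\kappa_q(v)^{q}}=1,
\]
so $\norm{c^*}_p=\epsilon$. For the zero-sum constraint, I would invoke the first-order optimality condition of $\omega_q(v)\in\argmin_{\omega}\norm{v-\omega\mathbf{1}}_q$: differentiating $\omega\mapsto \norm{v-\omega\mathbf{1}}_q^q$ and setting the derivative to zero yields $\sum_s \text{sign}(v(s)-\omega_q(v))|v(s)-\omega_q(v)|^{q-1}=0$, which after dividing by $\kappa_q(v)^{q-1}$ is exactly $\innorm{u_q(v),\mathbf{1}}_{\St}=0$, hence $\innorm{c^*,\mathbf{1}}_{\St}=0$.

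Finally, I would compute the objective value at $c^*$. Using the zero-sum property to replace $v$ by $v-\omega_q(v)\mathbf{1}$,
\[
\innorm{c^*,v}=-\epsilon\sum_{s}\frac{\text{sign}(v(s)-\omega_q(v))|v(s)-\omega_q(v)|^{q-1}}{\kappa_q(v)^{q-1}}\,(v(s)-\omega_q(v)) = -\epsilon\,\frac{\norm{v-\omega_q(v)\mathbf{1}}_q^{q}}{\kappa_q(v)^{q-1}}=-\epsilon\,\kappa_q(v),
\]
which attains the lower bound, proving that $c^*$ is optimal.

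The main obstacle is the zero-sum verification for the boundary exponents $q\in\{1,\infty\}$, where $\omega_q(v)$ may not be unique and $\norm{\cdot}_q$ is not differentiable. In those cases I would handle the argument via subgradients: the convex function $\omega\mapsto \norm{v-\omega\mathbf{1}}_q$ has $0$ in its subdifferential at $\omega_q(v)$, which amounts to the existence of a dual vector (coinciding with $u_q(v)$ up to the convention chosen in Tab.~\ref{tb:kappaD}) that is orthogonal to $\mathbf{1}$ and realises the norm dual pairing. Once that consistent choice is fixed, both the feasibility and the attainment argument go through unchanged, so the remainder of the proof is the same.
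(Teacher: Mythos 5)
Your proposal is correct, and its core computations are the same as the paper's: you verify the norm constraint via the conjugate identity $p(q-1)=q$, obtain the zero-sum property from the first-order optimality condition of $\omega_q(v)$, and compute $\innorm{c^*,v}=-\epsilon\kappa_q(v)$ exactly as in the paper's proof. The one genuine difference is how optimality itself is certified: the paper takes Eq.~\eqref{eq: kappa q is min} (i.e.\ the result cited from \cite{LpRobustMDPs} that the minimum value equals $-\epsilon\kappa_q(v)$) as given and only checks that $c^*$ is feasible and attains that value, whereas you re-derive the lower bound $\innorm{c,v}\ge-\epsilon\kappa_q(v)$ for every feasible $c$ via the zero-sum shift plus H\"older, making the argument self-contained and simultaneously reproving the variational characterisation of $\kappa_q$. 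You also flag and sketch the treatment of the boundary exponents $q\in\{1,\infty\}$ through subgradients, a point the paper's proof passes over silently when it differentiates $\varphi(w)=\norm{v-w\mathbf{1}}_q$. The trade-off is minor: the paper's route is shorter by leaning on the cited lemma, while yours buys independence from that reference and explicit coverage of the non-differentiable cases; both are valid.
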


\begin{proof}
It suffices to show that $c^*$ satisfies both constraints $\norm{c^*}_p \leq \epsilon$ and $\innorm{c^*,\mathbf{1}}_{\St}=0$, and that it reaches optimal value, \ie $-\frac{1}{\epsilon}\innorm{c^*,v} =  \kappa_q(v)$. 
We thus compute:
\begin{align*}
    \norm{c^*}_p &= \left(\sum_{s\in\St}\abs{c^*(s)}^p\right)^{\frac{1}{p}}\\
    &=\left(\sum_{s\in\St}\abs*{-\epsilon \textsc{sign}(v(s) - \omega_q(v))   \left(\frac{|v(s) - \omega_q(v)|}{\kappa_q(v)}\right)^{q-1}}^p\right)^{\frac{1}{p}}\\
    &=\left(\left(\frac{\epsilon}{\kappa_q(v)^{q-1}}\right)^p\sum_{s\in\St}\abs*{ \left(\frac{|v(s) - \omega_q(v)|}{\kappa_q(v)}\right)^{q-1}}^p\right)^{\frac{1}{p}}\\
    &=  \frac{\epsilon}{\kappa_q(v)^{q-1}}\left( \sum_{s\in\St}|v(s) - \omega_q(v)|^{(q-1)p}\right)^{\frac{1}{p}} \\
    &= \frac{\epsilon}{\kappa_q(v)^{q-1}}\left( \sum_{s\in\St}|v(s) - \omega_q(v)|^q\right)^{\frac{1}{p}} &(\text{By assumption, }\frac{p+q}{pq}=1)\\
    &= \frac{\epsilon}{\kappa_q(v)^{q-1}}\kappa_q(v)^{\frac{q}{p}} &\text{(By definition, $\kappa_q(v) =\norm{ v-\omega_q\mathbf{1}}_q$)}\\
    &= \epsilon, &(\frac{q}{p} - (q-1)= \frac{q-pq+p}{p}=0)
\end{align*}
so the norm constraint is satisfied. We check the noise constraint by computing:
\begin{align*}
    \sum_{s\in\St}c^*(s) &=\sum_{s\in\St}-\epsilon \textsc{sign}(v(s) - \omega_q(v))   \left(\frac{|v(s) - \omega_q(v)|}{\kappa_q(v)}\right)^{q-1}\\
    &= \frac{-\epsilon}{\kappa_q(v)^{q-1}}\sum_{s\in\St}\textsc{sign}(v(s) - \omega_q(v))|v(s) - \omega_q(v)|^{q-1}.
\end{align*}
Now, considering the real function $\varphi: w\to \norm{ v-w\mathbf{1}}_q$ and taking its derivative, we remark the proportional relation: 
\begin{align*}
    \sum_{s\in\St}c^*(s) = C\cdot \varphi'(\omega_q(v)),
\end{align*}
where $C\in\R$ is the proportionality coefficient. By construction, $\omega_q(v)$ is a minimizer of $\varphi$, so we must have $\varphi'(\omega_q(v))=0$ and $c^*$ satisfies the noise constraint. 

We finally show that $c^*$ reaches the optimal value:
\begin{align*}
    -\frac{1}{\epsilon}\innorm{c^*,v} &= -\frac{1}{\epsilon}\innorm{c^*,v-\omega_q(v)\mathbf{1}} &\text{($\innorm{c^*,\mathbf{1}}_{\St}=0$)} \\
    =&  \sum_{s\in\St}\frac{|v(s) - \omega_q(v)|^{q}}{\kappa_q(v)^{q-1}} &\text{(Putting the value of $c^*$)} \\
    =& \frac{\kappa_q(v)^{q}}{\kappa_q(v)^{q-1}}  &\text{($\kappa_q(v) =\norm{ v-\omega_q\mathbf{1}}_q$)}\\
    =& \kappa_q(v).
\end{align*}
\end{proof}

\subsection{Proof of Proposition \ref{rs:noisValCorr}}
\begin{proposition*}
For any policy $\pi\in\Pi$ and $\ell_p$-ball rectangular uncertainty set, the following holds:
\begin{align*}
    u^\pi_{\Uc} &= \nabla_v\kappa_q(v)\Bigm|_{v=v^\pi_{\Uc}},\\
    \innorm{u^\pi_{\Uc}, v^\pi_{\Uc}} &= \kappa_{q}(v^\pi_{\Uc}).
\end{align*}
\end{proposition*}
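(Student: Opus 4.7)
\medskip

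\noindent\textbf{Proof proposal.} Both identities are statements about the function $\kappa_q:\R^{\St}\to \R_+$ evaluated at the specific point $v=v^\pi_{\Uc}$, so the plan is to prove them at a generic $v$ and then specialize. For the first identity, I would apply an envelope-type argument to the minimization $\kappa_q(v)=\min_{w\in\R}\norm{v-w\mathbf{1}}_q$; for the second, I would either invoke Lemma \ref{lemma:c star} directly or exploit positive $1$-homogeneity of $\kappa_q$ via Euler's identity.

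For the gradient identity $u^\pi_{\Uc}=\nabla_v\kappa_q(v)|_{v=v^\pi_{\Uc}}$, I would write $\kappa_q(v)=\norm{v-\omega_q(v)\mathbf{1}}_q$ and differentiate. By the chain rule,
\[
\frac{\partial \kappa_q(v)}{\partial v(s)} = \left.\frac{\partial}{\partial v(s)}\norm{v-w\mathbf{1}}_q\right|_{w=\omega_q(v)} + \left.\frac{\partial}{\partial w}\norm{v-w\mathbf{1}}_q\right|_{w=\omega_q(v)}\cdot\frac{\partial \omega_q(v)}{\partial v(s)}.
\]
The second summand vanishes by the first-order optimality of $\omega_q(v)$. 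A direct computation of the first summand (differentiating $(\sum_{s'}|v(s')-w|^q)^{1/q}$ in $v(s)$) yields
\[
\frac{\partial \kappa_q(v)}{\partial v(s)} = \textsc{sign}(v(s)-\omega_q(v))\,\frac{|v(s)-\omega_q(v)|^{q-1}}{\kappa_q(v)^{q-1}},
\]
which is exactly the definition of $u_q(v)(s)$. Setting $v=v^\pi_{\Uc}$ gives the claim.

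For the correlation identity $\innorm{u^\pi_{\Uc},v^\pi_{\Uc}}=\kappa_q(v^\pi_{\Uc})$, the shortest route is Lemma \ref{lemma:c star}: taking $c^*=-\epsilon u_q(v^\pi_{\Uc})$ with $\epsilon=1$, the lemma's optimality statement $-\frac{1}{\epsilon}\innorm{c^*,v^\pi_{\Uc}}=\kappa_q(v^\pi_{\Uc})$ yields the identity directly. An alternative self-contained derivation decomposes $v^\pi_{\Uc}=(v^\pi_{\Uc}-\omega_q(v^\pi_{\Uc})\mathbf{1})+\omega_q(v^\pi_{\Uc})\mathbf{1}$; the first piece contributes $\kappa_q(v^\pi_{\Uc})^q/\kappa_q(v^\pi_{\Uc})^{q-1}=\kappa_q(v^\pi_{\Uc})$, and the second contributes zero because $\innorm{u^\pi_{\Uc},\mathbf{1}}=0$ (this zero-mean property is itself the first-order condition in $w$ for $\omega_q(v^\pi_{\Uc})$). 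A third, conceptual route is to note that $\kappa_q$ is positively homogeneous of degree one, so Euler's identity gives $\innorm{\nabla\kappa_q(v),v}=\kappa_q(v)$, which combined with the first identity closes the argument.

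The main technical obstacle is differentiability at the edge cases $q\in\{1,\infty\}$ and at points where $\kappa_q(v^\pi_{\Uc})=0$ (the constant-vector case), where the formula for $u^\pi_{\Uc}$ degenerates and the envelope calculation must be read in a subdifferential sense; in those cases the identity $u^\pi_{\Uc}\in\partial\kappa_q(v^\pi_{\Uc})$ is what survives, and Lemma \ref{lemma:c star} still provides a valid selection. For $q\in(1,\infty)$ with non-constant $v^\pi_{\Uc}$, smoothness is genuine and the argument above is a textbook envelope-theorem computation.
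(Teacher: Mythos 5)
Your proposal is correct and follows essentially the same route as the paper: the gradient identity via the envelope theorem applied to $\kappa_q(v)=\min_{w\in\R}\norm{v-w\mathbf{1}}_q$ (you merely spell out the chain-rule computation the paper leaves implicit), and the correlation identity by invoking Lemma \ref{lemma:c star} at $v=v^\pi_{\Uc}$. Your added remarks on the degenerate cases $q\in\{1,\infty\}$ and constant $v^\pi_{\Uc}$, and the alternative Euler-homogeneity argument, are sound but not needed beyond what the paper does.
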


\begin{proof}
The second claim directly follows from Lemma \ref{lemma:c star} applied to $v:= v^{\pi}_{\Uc}$, so that by optimality, $\kappa_{q}(v^\pi_{\Uc}) = \innorm{u^\pi_{\Uc}, v^\pi_{\Uc}}$.
    For the first claim, we take the gradient of $\kappa_p(v) := \min_{w\in\R}\norm{ v-w\mathbf{1}}_p$ w.r.t. $v$ using the envelope theorem \cite{envelopeTheorem}. Then, the $p$-balanced-normalized vector $u_p(v)$ is a sub-gradient of $\kappa_p(v)$, that is,
    \[u_p(v) = \nabla\kappa_{q}(v),\]
    which we apply to $v:= v^{\pi}_{\Uc}$. 
\end{proof}

We have the additional properties below:
\begin{itemize}
    \item The variance function $\kappa_{q}$ is translation-invariant in all-ones directions, \ie for all $\omega\in \R, \kappa_{q}(v) = \kappa_{q}(v + \omega \mathbf{1})$. As a result, $\innorm{\nabla\kappa_{q}(v),\mathbf{1}}_{\St}=0$.
    \item The balanced-normalized vector $u_p(v)$ has unit norm, \ie $\norm{u_p(v)}_p =1$ by Lemma \ref{lemma:c star}.
\end{itemize}
    
\section{Worst Kernel and Reward}

Here we present the proofs for the worst/adversarial kernel and reward function characterization.

\subsection{Proof of Theorem~\ref{thm: worst sa}}
\begin{theorem*}[$(s,a)$-rectangular case]
Given uncertainty set $\Uc = \Uc^{\texttt{sa}}_p$ and any policy $\pi\in\Pi$, the worst model is related to the nominal one through:
    \begin{align*}
    R^{\pi}_{\Uc}(s,a) &= R_0(s,a)-\alpha_{s,a} \qquad \text{and} \qquad P^{\pi}_{\Uc}(\cdot|s,a) =  P_0(\cdot|s,a) -  \beta_{s,a} u^\pi_{\Uc}.
    \end{align*}
\end{theorem*}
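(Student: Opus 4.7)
The plan is to exploit the $(s,a)$-rectangular structure to reduce the worst-model computation to a collection of independent per-$(s,a)$ linear programs, each of which admits a closed-form solution via Lemma~\ref{lemma:c star}. Concretely, for any fixed $v \in \R^{\St}$, the rectangular robust Bellman operator can be written as
\begin{align*}
T^{\pi}_{\Uc} v(s) = \sum_{a \in \A} \pi_s(a) \min_{(r,p) \in \Rc_{(s,a)} \times \Pc_{(s,a)}} \bigl[R_0(s,a) + r + \gamma \langle P_0(\cdot|s,a) + p,\, v\rangle_{\St}\bigr],
\end{align*}
because $(s,a)$-rectangularity lets the minimum commute with both the outer average over $a$ and the linearity of the Bellman backup. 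This is the standard consequence of rectangularity recorded implicitly in Prop.~\ref{bg:rvi}.

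Since the per-$(s,a)$ objective is affine and the constraint set factorizes as a product $\Rc_{(s,a)} \times \Pc_{(s,a)}$, I would split the minimization over $r$ and $p$. The reward part is a scalar minimization over the interval $[-\alpha_{s,a},\alpha_{s,a}]$ and trivially yields $r^\star = -\alpha_{s,a}$. For the kernel part, the problem reduces to
\begin{align*}
\min_{p \,:\, \langle p, \mathbf{1}\rangle_{\St} = 0,\; \norm{p}_p \leq \beta_{s,a}} \langle p, v\rangle_{\St},
\end{align*}
which is exactly the linear optimization characterizing the $q$-variance function in \eqref{eq: kappa q is min}. Applying Lemma~\ref{lemma:c star} with $\epsilon = \beta_{s,a}$ supplies the closed-form optimizer $p^\star = -\beta_{s,a}\, u_q(v)$, with optimal value $-\beta_{s,a}\, \kappa_q(v)$.

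Specializing to $v = v^{\pi}_{\Uc}$ and recalling that by construction $u^{\pi}_{\Uc} = u_q(v^{\pi}_{\Uc})$, I would conclude that the candidate model
\begin{align*}
R^{\pi}_{\Uc}(s,a) := R_0(s,a) - \alpha_{s,a}, \qquad P^{\pi}_{\Uc}(\cdot|s,a) := P_0(\cdot|s,a) - \beta_{s,a}\, u^{\pi}_{\Uc}
\end{align*}
attains the inner minimum pointwise in the robust Bellman backup evaluated at $v^{\pi}_{\Uc}$, i.e.\ $T^{\pi}_{(P^{\pi}_{\Uc}, R^{\pi}_{\Uc})} v^{\pi}_{\Uc} = T^{\pi}_{\Uc} v^{\pi}_{\Uc} = v^{\pi}_{\Uc}$. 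By uniqueness of the fixed point of the $\gamma$-contraction $T^{\pi}_{(P^{\pi}_{\Uc}, R^{\pi}_{\Uc})}$, this forces $v^{\pi}_{(P^{\pi}_{\Uc}, R^{\pi}_{\Uc})} = v^{\pi}_{\Uc}$, certifying $(P^{\pi}_{\Uc}, R^{\pi}_{\Uc})$ as a worst model for $\pi$. The main subtlety is the apparent circularity between the candidate worst model (which depends on $u^{\pi}_{\Uc}$) and $v^{\pi}_{\Uc}$ (which that model must realize); the fixed-point argument above is what reconciles the two. Feasibility of $p^\star$ is immediate from Lemma~\ref{lemma:c star}, which certifies both $\langle p^\star, \mathbf{1}\rangle_{\St} = 0$ and $\norm{p^\star}_p = \beta_{s,a}$.
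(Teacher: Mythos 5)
Your proposal is correct and follows essentially the same route as the paper: exploit $(s,a)$-rectangularity to decompose the robust Bellman backup into independent per-$(s,a)$ problems, minimize the reward term trivially to $-\alpha_{s,a}$, and solve the kernel term via Lemma~\ref{lemma:c star} evaluated at $v^{\pi}_{\Uc}$. The only difference is that the paper takes the characterization $(P^{\pi}_{\Uc},R^{\pi}_{\Uc})\in\argmin_{(P,R)\in\Uc}T^{\pi}_{(P,R)}v^{\pi}_{\Uc}$ as definitional (relying on the standard $s$-rectangular theory), whereas you explicitly close the apparent circularity with the fixed-point argument $T^{\pi}_{(P^{\pi}_{\Uc},R^{\pi}_{\Uc})}v^{\pi}_{\Uc}=T^{\pi}_{\Uc}v^{\pi}_{\Uc}=v^{\pi}_{\Uc}$, which is a sound and slightly more self-contained justification of the same step.
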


\begin{proof}
By definition, 
\[ (P^\pi_{\Uc^{\texttt{sa}}_p}, R^\pi_{\Uc^{\texttt{sa}}_p})  \in \argmin_{(P,R)\in \mathcal{U}^{\mathtt{sa}}_p}T^\pi_{(P,R)} v^\pi_{\Uc^{\texttt{sa}}_p}.\]
Additionally, since $\mathcal{U}^{\mathtt{sa}}_p = (R_0 +\mathcal{R})\times(P_0 +\mathcal{P})$, it results that:
\[(R^\pi_{\Uc^{\texttt{sa}}_p}, P^\pi_{\Uc^{\texttt{sa}}_p}) = (P_0 + P^*, R_0 + R^*)\]
where
\[(P^*,R^*) \in \argmin_{(P,R)\in\Pc\times\Rc}T^\pi_{(P,R)} v^\pi_{\Uc^{\texttt{sa}}_p}.\]
By the $(s,a)$-rectangularity assumption, we get that for all $(s,a)\in\X$, 
\[  (P^*(\cdot|s,a), R^*(s,a))  \in \argmin_{(p_{s,a}, r_{s,a})\in\Pc_{s,a}\times\Rc_{s,a}}\left\{r_{s,a} + \gamma \sum_{s'\in\St}p_{s,a}(s')v^\pi_{\Uc^{\texttt{sa}}_p}(s')\right\}\]
It is clear from the above that the worst reward is independent of policy $\pi$. Thus, by the ball constraint, it is given by
\[R^*(s,a) =  -\alpha_{s,a},  \quad \forall (s,a)\in\X.\]
Differently, the worst kernel depends on the value function which itself depends on the policy. It is given by 
\begin{align*}
    P^*(\cdot|s,a) = \argmin_{{p_{s,a}\in\mathcal{P}_{sa}}} \left\{\sum_{s'\in\St}p_{s,a}(s')v^\pi_{\Uc^{\texttt{sa}}_p}(s')\right\}, \quad\forall (s,a)\in\X.
\end{align*}
The optimization is of the form
\[ \argmin_{\norm{c}_p \leq \beta,\innorm{c,\mathbf{1}}=0}\innorm{c,v},\]
so by Lemma \ref{lemma:c star}, 
\[ P^*(s'|s,a) =  -\beta_{s,a}\textsc{sign}\left(v^\pi_{\Uc^{\texttt{sa}}_p}(s')-\omega_q(v^\pi_{\Uc^{\texttt{sa}}_p})\right) \frac{\abs*{v^\pi_{\Uc^{\texttt{sa}}_p}(s')-\omega_q(v^\pi_{\Uc^{\texttt{sa}}_p})}^{q-1}}{\kappa_q(v)^{q-1}}.\]
As a result, we proved that for all $(s,a)\in\X$, $R^\pi_{\Uc^{\texttt{sa}}_p}(s,a) = R_0(s,a)-\alpha_{s,a}$
and 
\[P^\pi_{\Uc^{\texttt{sa}}_p}(s'|s,a) =  P_0(s'|s,a)  -\beta_{s,a}\textsc{sign}\left(v^\pi_{\Uc^{\texttt{sa}}_p}(s')-\omega_q(v^\pi_{\Uc^{\texttt{sa}}_p})\right) \frac{\abs*{v^\pi_{\Uc^{\texttt{sa}}_p}(s')-\omega_q(v^\pi_{\Uc^{\texttt{sa}}_p})}^{q-1}}{\kappa_q(v)^{q-1}}.\]
\end{proof}

\subsection{Proof of Theorem \ref{thm: worst s}}
\begin{theorem*}[$s$-rectangular case] 
Given uncertainty set $\Uc = \Uc^{\texttt{s}}_p$ and any policy $\pi\in\Pi$, the worst model is related to the nominal one through:
    \begin{align*}
    R^\pi_{\Uc}(s,a) &= R_0(s,a)-\alpha_{s} \left(\frac{\pi_s(a)}{\norm{\pi_s}_{q}}\right)^{q-1} \quad \text{and}\quad 
     P^\pi_{\Uc}(\cdot|s,a) =  P_0(\cdot|s,a) -\beta_{s}u^\pi_{\Uc}\left(\frac{\pi_s(a)}{\norm{\pi_s}_{q}}\right)^{q-1} .
    \end{align*}
\end{theorem*}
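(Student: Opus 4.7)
The plan is to mirror the proof of Theorem~\ref{thm: worst sa} but handle the new coupling across actions induced by $s$-rectangularity. Writing $(P,R)=(P_0,R_0)+(p,r)$ with $(p,r)\in\Pc_s\times\Rc_s$, $s$-rectangularity lets the minimization of $T^\pi_{(P,R)} v^\pi_{\Uc}$ decompose state-by-state, and since for each $s$ the uncertainty set is the product $\Pc_s\times\Rc_s$, the per-state problem further splits into two independent subproblems: a reward subproblem $\min_{\norm{r}_p\leq\alpha_s}\innorm{\pi_s, r}_{\A}$ and a kernel subproblem $\min_{p\in\Pc_s}\sum_{a\in\A}\pi_s(a)\innorm{p(\cdot,a),v^\pi_{\Uc}}_{\St}$.

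For the reward subproblem, since $\pi_s\geq 0$, Hölder's inequality is tight and immediately yields the minimizer $r^*(a)=-\alpha_s(\pi_s(a)/\norm{\pi_s}_q)^{q-1}$ with minimum value $-\alpha_s\norm{\pi_s}_q$, matching the claim. The kernel subproblem is the main obstacle: unlike the $(s,a)$-case, the weights $\pi_s(a)$ couple the columns of $p$, so Lemma~\ref{lemma:c star} cannot be applied column by column. My approach is to propose the rank-one ansatz $p^*(s',a)=-\beta_s u^\pi_{\Uc}(s')(\pi_s(a)/\norm{\pi_s}_q)^{q-1}$ and verify feasibility and optimality directly.

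Feasibility of $p^*$ is immediate: the zero-sum condition $\innorm{p^*(\cdot,a),\mathbf{1}}_{\St}=0$ inherits from $\innorm{u^\pi_{\Uc},\mathbf{1}}_{\St}=0$, and the product structure combined with $\norm{u^\pi_{\Uc}}_p=1$ and the conjugate identity $(q-1)p=q$ gives $\norm{p^*}_p^p = \beta_s^p\sum_{s'}|u^\pi_{\Uc}(s')|^p\sum_a(\pi_s(a)/\norm{\pi_s}_q)^q = \beta_s^p$. For optimality, I would evaluate the objective at $p^*$: using $\innorm{u^\pi_{\Uc},v^\pi_{\Uc}}=\kappa_q(v^\pi_{\Uc})$ from Proposition~\ref{rs:noisValCorr} together with the identity $\sum_a\pi_s(a)(\pi_s(a)/\norm{\pi_s}_q)^{q-1}=\norm{\pi_s}_q$, I obtain the value $-\beta_s\norm{\pi_s}_q\kappa_q(v^\pi_{\Uc})$. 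Combined with the reward contribution $-\alpha_s\norm{\pi_s}_q$, the candidate $(r^*,p^*)$ yields $T^\pi_{(P_0,R_0)}v^\pi_{\Uc}(s)-\alpha_s\norm{\pi_s}_q-\gamma\beta_s\norm{\pi_s}_q\kappa_q(v^\pi_{\Uc})$, which is exactly the minimum value predicted by the regularized form in Proposition~\ref{bg:rvi}. Hence $(r^*,p^*)$ attains the infimum and the proof is complete. The subtle structural step, which I expect to be the hardest conceptually, is recognizing why the rank-one ansatz is correct: the per-column Hölder bounds $\innorm{p(\cdot,a),v^\pi_{\Uc}}\geq -\norm{p(\cdot,a)}_p\kappa_q(v^\pi_{\Uc})$ must saturate simultaneously, forcing every column of $p^*$ to point along $-u^\pi_{\Uc}$, and a second Hölder applied to the column-norm vector against the weights $\pi_s(a)$ dictates that the $\ell_p$-budget must be distributed across actions according to the $\pi_s^{q-1}$ profile.
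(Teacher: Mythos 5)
Your proposal is correct, and it proves the same statement by a genuinely different route than the paper. The paper's proof is constructive: after reducing to the per-state problem, it rewrites the joint $\ell_p$-constraint $\norm{p}_p\leq\beta_s$ as a two-fold minimization — an outer allocation of per-action budgets $\beta_{s,a}$ with $\sum_a\beta_{s,a}^p\leq\beta_s^p$, and inner per-column problems solved by Lemma~\ref{lemma:c star} (each worth $-\beta_{s,a}\kappa_q(v)$) — and then the outer Hölder step over $(\pi_s,\beta_{s,\cdot})$ simultaneously produces the optimal value \emph{and} the $(\pi_s(a)/\norm{\pi_s}_q)^{q-1}$ budget profile, from which the rank-one form of $P^*$ falls out. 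You instead guess the rank-one ansatz, check feasibility (zero-sum columns, $\norm{p^*}_p=\beta_s$ via $(q-1)p=q$ and $\norm{u^\pi_{\Uc}}_p=1$), evaluate the objective using $\innorm{u^\pi_{\Uc},v^\pi_{\Uc}}=\kappa_q(v^\pi_{\Uc})$ and $\sum_a\pi_s(a)(\pi_s(a)/\norm{\pi_s}_q)^{q-1}=\norm{\pi_s}_q$, and certify optimality by matching the value $-(\alpha_s+\gamma\beta_s\kappa_q(v^\pi_{\Uc}))\norm{\pi_s}_q$ given by Proposition~\ref{bg:rvi}. This is shorter and avoids the budget-splitting argument, but it makes the theorem a corollary of the regularization equivalence (imported from \cite{LpRobustMDPs}) rather than an independent derivation, and as a pure verification it does not by itself explain why the weights must take the $\pi_s^{q-1}$ form; your closing remark about forcing simultaneous saturation of the per-column Hölder bounds, followed by a second Hölder on the column norms against $\pi_s$, is exactly the chain
\[
\sum_{a}\pi_s(a)\innorm{p(\cdot,a),v}\;\geq\;-\kappa_q(v)\sum_a\pi_s(a)\norm{p(\cdot,a)}_p\;\geq\;-\kappa_q(v)\norm{\pi_s}_q\norm{p}_p\;\geq\;-\beta_s\norm{\pi_s}_q\kappa_q(v),
\]
which, if written out, would make your argument self-contained (independent of Proposition~\ref{bg:rvi}) and essentially recover the paper's value computation. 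One cosmetic caveat shared with the paper: the formulas implicitly assume $\kappa_q(v^\pi_{\Uc})>0$ and $1<p<\infty$ so that $u^\pi_{\Uc}$ and the exponent $q-1$ are well defined; degenerate cases need the limiting expressions of Table~\ref{tb:kappaD}.
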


\begin{proof}
By definition, 
\[ ( P^\pi_{\Uc^{\texttt{s}}_p},R^\pi_{\Uc^{\texttt{s}}_p})  \in \argmin_{(P,R)\in \mathcal{U}^{\mathtt{s}}_p}T^\pi_{(P,R)} v^\pi_{\Uc^{\texttt{s}}_p},\]
and since $\mathcal{U}^{\mathtt{s}}_p = (R_0 +\mathcal{R})\times(P_0 +\mathcal{P})$, we have
\[(P^\pi_{\Uc^{\texttt{s}}_p},R^\pi_{\Uc^{\texttt{s}}_p}) = (P_0 +P^*, R_0 + R^*) \]
where
\[(P^*,R^*) \in \argmin_{(P,R)\in\Pc\times\Rc}T^\pi_{(P,R)} v^\pi_{\Uc^{\texttt{sa}}_p}.\]
By the $s$-rectangularity assumption, we get that for all $s\in\St$
\[ (P^*(\cdot|s,\cdot), R^*(s,\cdot))  = \argmin_{(p_{s},r_s) \in\Pc_s\times\Rc_s} \sum_{a\in\A}\pi_s(a)  \left\{r_{s,a} + \gamma \sum_{s'\in\St}p_{s,a}(s')v^\pi_{\Uc^{\texttt{sa}}_p}(s')\right\}.\]
Here, the worst reward does depend on policy $\pi$ and is given by
\[R^*(s,a) =  -\alpha_{s} \frac{\pi_s(a)^{q-1}}{\sum_{a}\pi_s(a)^{q-1}},  \qquad \forall (s,a)\in\X.\]
As for the worst kernel, it depends both on the value function and the policy. It is given by 
\begin{align*}
    P^*(\cdot|s,\cdot) =& \argmin_{{p_{s}\in\mathcal{P}_{s}}}\left\{\sum_{a\in\A}\pi_s(a) \sum_{s'\in\St}p_{s,a}(s')v^\pi_{\Uc^{\texttt{s}}_p}(s')\right\}.\\
\end{align*}
The optimization of interest is of the form
\[ \min_{\norm{c_a}_p \leq \beta_{s},\innorm{c_a,\mathbf{1}}=0,a\in\A}\left\{\sum_{a'\in\A}\pi_s(a')\innorm{c_{a'},v}\right\},\]
which is equivalent to the following two-fold minimization: 
\begin{align*}
    \min_{\sum_{a\in\A}(\beta_{s,a})^p \leq (\beta_s)^p} \quad\min_{\norm{c_a}_p \leq \beta_{s},\innorm{c_a,\mathbf{1}}=0,a\in\A}\left\{\sum_{a'\in\A}\pi_s(a')\innorm{c_{a'},v}\right\}.
\end{align*}
Thus, rewriting the problem in our context, 
\begin{align*}
        &\min_{\sum_{a}(\beta_{s,a})^p \leq (\beta_s)^p}\quad \min_{\norm{ p_{sa}}_p\leq \beta_{s,a},\sum_{s'}p_{sa}(s')=0 }\quad\sum_{a}\pi_s(a)\langle p_{s,a}, v\rangle  \\
    =& \min_{\sum_{a}(\beta_{s,a})^p \leq (\beta_s)^p}\sum_{a}\pi_s(a)\quad \min_{\norm{ p_{sa}}_p\leq \beta_{s,a},\sum_{s'}p_{sa}(s')=0 }\quad\langle p_{s,a}, v\rangle  \qquad \text{}\\
    =& \min_{\sum_{a}(\beta_{sa})^p \leq (\beta_s)^p}\sum_{a}\pi_s(a)(-\beta_{sa}\kappa_q(v)) &\text{(By Lemma~\ref{lemma:c star})}\\
     =& -\kappa_q(v)\max_{\sum_{a}(\beta_{sa})^p \leq (\beta_s)^p}\sum_{a}\pi_s(a)\beta_{sa}.
\end{align*}
Computing the optimal $\beta$ above, the optimization is now the same as in the $(s,a)$-rectangular case. Hence, we have
\[ P^*(s'|s,a) =  -\beta_{s}\frac{\pi_s(a)^{q-1}}{\norm{\pi_s}_q^{q-1}} \textsc{sign}(v^\pi_{\Uc^{\texttt{s}}_p}(s')-\omega_q(v^\pi_{\Uc^{\texttt{s}}_p})) \frac{\abs*{v^\pi_{\Uc^{\texttt{s}}_p}(s')-\omega_q(v^\pi_{\Uc^{\texttt{s}}_p})}^{q-1}}{\kappa_q(v)^{q-1}},\]
which ends the proof by definition of the balanced value function $u^{\pi}_{\Uc}$. 
\end{proof}

\section{Occupation Matrix}

\subsection{Proof of Lemma \ref{rs:rankOnePert}}

\begin{lemma*}
Let $b,k\in \mathbb{R}^{\St}$ and $P_0,P_1\in(\Delta_{\St})^{\St}$ two transition matrices. If $P_1 = P_0 - bk^{\top}$, \ie $P_1$ is a rank-one perturbation of $P_0$, then their occupation matrices $D_i:=(I-\gamma P_i)^{-1}, i=0,1$ are related through:   
\[ D_1 = D_{0} - \gamma\frac{ D_0bk^{\top}D_0}{(1 + \gamma k^{\top}D_0b)}.\]
\end{lemma*}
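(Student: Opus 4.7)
The statement is the Sherman--Morrison identity in disguise, applied to the rank-one shift inside the resolvent $(I-\gamma P_1)^{-1}$. My plan is therefore extremely short and structured as follows.

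First, I would rewrite the resolvent explicitly: since $P_1 = P_0 - b k^\top$, we have
\[
I - \gamma P_1 \;=\; (I - \gamma P_0) + \gamma\, b k^\top.
\]
This exhibits $I-\gamma P_1$ as a rank-one update of the invertible matrix $I-\gamma P_0$ (invertibility of the latter follows from $\gamma<1$ and $P_0$ being row-stochastic, which gives spectral radius of $\gamma P_0$ strictly less than one).

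Next, I would apply the Sherman--Morrison formula
\[
(A + u v^\top)^{-1} \;=\; A^{-1} - \frac{A^{-1} u v^\top A^{-1}}{1 + v^\top A^{-1} u}
\]
with the identifications $A = I-\gamma P_0$, $u = \gamma b$, $v = k$, and $A^{-1} = D_0$. Substituting and pulling the scalar $\gamma$ out of the numerator yields
\[
D_1 \;=\; (I-\gamma P_1)^{-1} \;=\; D_0 \;-\; \gamma\,\frac{D_0\, b\, k^\top D_0}{1 + \gamma\, k^\top D_0\, b},
\]
which is exactly the claimed identity.

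The only genuine obstacle is ensuring that Sherman--Morrison is applicable, i.e.\ that the denominator $1+\gamma k^\top D_0 b$ does not vanish. Since we are told $P_1 = P_0 - bk^\top$ is itself a transition matrix (rows in $\Delta_{\St}$), $I-\gamma P_1$ is invertible by the same Neumann-series argument as for $P_0$, hence the Sherman--Morrison denominator is automatically nonzero (otherwise the updated matrix would be singular, contradicting invertibility of $I-\gamma P_1$). A brief sentence noting this suffices; no calculation is needed beyond the one substitution above.
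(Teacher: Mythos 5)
Your proof is correct and is, in substance, the same argument as the paper's: the paper simply re-derives the Sherman--Morrison identity by hand from $(I-\gamma P_1)D_1 = I$ (isolating $k^\top D_1$ and substituting back), whereas you invoke the formula directly with $A = I-\gamma P_0$, $u=\gamma b$, $v=k$. Your closing remark that $1+\gamma k^\top D_0 b \neq 0$ follows from the invertibility of $I-\gamma P_1$ (since $P_1$ is stochastic and $\gamma<1$) is in fact slightly more careful than the paper, which divides by this scalar without comment.
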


\begin{proof}
By definition, $D_1 = (I_{\St}-\gamma P_1)^{-1}$ so it follows that: 
\begin{align}
\label{eq: d1 d0}
   & (I_{\St}-\gamma P_1)D_1 = I_{\St} \nonumber \\
  \iff & I_{\St} + \gamma P_1 D_1 = D_1 \nonumber\\
  \iff & I_{\St} + \gamma (P_0 - bk^{\top})  D_1 = D_1 & \text{(By assumption, $P_1 = P_0 - bk^{\top}$)} \nonumber\\
   \iff & I_{\St} - \gamma bk^{\top}D_1 = (I_{\St} - \gamma P_0)D_1 \nonumber\\
   \iff & (I_{\St} - \gamma P_0)^{-1}(I_{\St} -\gamma bk^{\top}D_1) = D_1 & \text{(Multiplying both sides by $(I_{\St} - \gamma P_0)^{-1}$)}\nonumber\\ 
   \iff & D_0(I_{\St} - \gamma bk^{\top}D_1) = D_1 & \text{(By definition, $D_0 = (I_{\St} - \gamma P_0)^{-1}$ )} \nonumber\\ 
    \iff & D_0- \gamma D_0bk^{\top}D_1 = D_1.
\end{align}
Now, multiplying both sides by $k$ and noticing that $k^{\top}D_0b$ is a scalar we get
\begin{align}
\label{eq: kt d1}
     & k^{\top}D_0 - \gamma k^{\top}D_0b k^{\top}D_1 = k^{\top}D_1 \nonumber\\ 
     \iff & k^{\top}D_0 = (1 +\gamma k^{\top}D_0b) k^{\top}D_1 \nonumber\\
     \iff &k^{\top}D_1 = \frac{k^{\top}D_0}{(1 + \gamma k^{\top}D_0b)}.
\end{align}
Combining Eqs.~\eqref{eq: d1 d0} and \eqref{eq: kt d1} thus yields:
\begin{align*}
D_1 =  D_0 - \gamma \frac{D_0bk^{\top}D_0}{(1 +\gamma k^{\top}D_0b)},
\end{align*}
which concludes the proof.
\end{proof}

\subsection{Proof of Theorem \ref{thm: robust occupation from nominal}}
\begin{theorem*} 
For any rectangular $\ell_p$-ball-constrained uncertainty and $\pi\in\Pi$, it holds that: 
    \begin{align*}
     d^\pi_{\Uc,\mu}  &=d^\pi_{P_0,\mu} -  \gamma \frac{\innorm{d^\pi_{P_0,\mu},\beta^\pi}_{\St}}{1 +\gamma \innorm{d^\pi_{P_0,u^\pi_{\Uc}},\beta^\pi}_{\St}}d^\pi_{P_0,u^\pi_{\Uc}}.
     \end{align*}
\end{theorem*}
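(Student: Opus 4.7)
\textbf{Proof plan for Theorem \ref{thm: robust occupation from nominal}.} The plan is to reduce the computation of $d^\pi_{\Uc,\mu}$ to a direct application of Lemma \ref{rs:rankOnePert}, by first checking that the policy-induced worst transition matrix $P^\pi_{\Uc}$ is a rank-one perturbation of the nominal policy-induced matrix $P_0^\pi$, with the ``direction'' vectors naturally matching the expected transition uncertainty $\beta^\pi$ from Definition \ref{def: expected beta} and the balanced value $u^\pi_{\Uc}$.

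First, I would unify the two rectangularity cases. Starting from Theorem \ref{thm: worst sa}, averaging over actions against $\pi_s$ gives
\begin{align*}
P^\pi_{\Uc}(s'|s) \;=\; \sum_{a\in\A}\pi_s(a)\bigl[P_0(s'|s,a)-\beta_{s,a}u^\pi_{\Uc}(s')\bigr] \;=\; P_0^\pi(s'|s)-\Bigl(\sum_{a\in\A}\pi_s(a)\beta_{s,a}\Bigr)u^\pi_{\Uc}(s'),
\end{align*}
which by Definition \ref{def: expected beta} reads $P^\pi_{\Uc}=P_0^\pi-\beta^\pi(u^\pi_{\Uc})^\top$. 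For the $s$-rectangular case, Theorem \ref{thm: worst s} gives an extra factor $(\pi_s(a)/\norm{\pi_s}_q)^{q-1}$; averaging against $\pi_s$ produces the sum $\sum_a \pi_s(a)^q/\norm{\pi_s}_q^{q-1}=\norm{\pi_s}_q$, so one gets $P_0^\pi(s'|s)-\beta_s\norm{\pi_s}_q\, u^\pi_{\Uc}(s')$, again equal to $P_0^\pi-\beta^\pi(u^\pi_{\Uc})^\top$ with the $s$-rectangular expression of $\beta^\pi$. So in both cases the perturbation is rank one with $b=\beta^\pi$ and $k=u^\pi_{\Uc}$.

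Next I would invoke Lemma \ref{rs:rankOnePert} with these choices, writing $D_0:=(I_{\St}-\gamma P_0^\pi)^{-1}$ and $D_1:=(I_{\St}-\gamma P^\pi_{\Uc})^{-1}$, which yields
\begin{align*}
D_1 \;=\; D_0 \;-\; \gamma\,\frac{D_0\,\beta^\pi(u^\pi_{\Uc})^\top D_0}{1+\gamma\,(u^\pi_{\Uc})^\top D_0\,\beta^\pi}.
\end{align*}
Finally I would left-multiply by $\mu^\top$ and recognize occupation measures through the definition $d^\pi_{P,\nu}=\nu^\top(I_{\St}-\gamma P^\pi)^{-1}$: the term $\mu^\top D_1$ is $d^\pi_{\Uc,\mu}$, $\mu^\top D_0$ is $d^\pi_{P_0,\mu}$, and $(u^\pi_{\Uc})^\top D_0$ is $d^\pi_{P_0,u^\pi_{\Uc}}$. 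The scalar numerator factor becomes $\mu^\top D_0\,\beta^\pi=\innorm{d^\pi_{P_0,\mu},\beta^\pi}_{\St}$ and the denominator becomes $1+\gamma\innorm{d^\pi_{P_0,u^\pi_{\Uc}},\beta^\pi}_{\St}$, which is exactly the claimed identity.

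The main obstacle is the bookkeeping for the $s$-rectangular case: one must verify that the action-dependent factor $(\pi_s(a)/\norm{\pi_s}_q)^{q-1}$ appearing in Theorem \ref{thm: worst s} aggregates, when weighted by $\pi_s(a)$, into precisely the scalar $\beta_s\norm{\pi_s}_q$, so that both rectangularity flavours collapse into the same rank-one form with the single $\beta^\pi$ from Definition \ref{def: expected beta}. Once that collapse is observed, the rest is a mechanical application of Lemma \ref{rs:rankOnePert} and identification of the occupation measures.
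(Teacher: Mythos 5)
Your proposal is correct and follows the same route as the paper's proof: establish that $P^\pi_{\Uc}=P_0^\pi-\beta^\pi (u^\pi_{\Uc})^\top$ from Theorems \ref{thm: worst sa} and \ref{thm: worst s}, apply Lemma \ref{rs:rankOnePert} with $b=\beta^\pi$, $k=u^\pi_{\Uc}$, and left-multiply by $\mu^\top$. Your explicit verification that the $s$-rectangular factor aggregates as $\sum_a\pi_s(a)^q/\norm{\pi_s}_q^{q-1}=\norm{\pi_s}_q$ is a detail the paper leaves implicit, and it is correct.
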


\begin{proof}
From Thms.~\ref{thm: worst sa} and \ref{thm: worst s}, it holds that: 
\begin{align*}
    P^{\pi}_{\Uc}(s'|s) =  P_0^{\pi}(s'|s)  - \beta_{s}^{\pi} u^\pi_{\Uc}(s'),\quad\forall s,s'\in\St.
\end{align*}
Therefore, setting $P_1:= P^{\pi}_{\Uc}$, $P_0:=P_0^{\pi}$, $b:=\beta^{\pi}$ and $k:= u^\pi_{\Uc}$, we can apply Lemma~\ref{rs:rankOnePert} and relate the corresponding occupation matrices. Additionally multiplying both sides of the relation by $\mu^{\top}\in\R^{1\times\St}$ yields the desired result. 
\end{proof}

\section{Robust Q-value}
\subsection{Basic Properties}
In the literature, robust Q-values are defined in various ways that turn out to be conflicting for $s$ but non-$(s,a)$ rectangular uncertainty sets. In this section, we propose to define the robust Q-value solely based on the worst model. 
Define the robust Q-value, the robust value function, and the robust occupation respectively as:
 \[Q^{\pi}_{\Uc}:= Q^\pi_{(P^{\pi}_{\Uc}, R^\pi_{\Uc})} ,\quad d^{\pi}_{\Uc}:= d^\pi_{(P^{\pi}_{\Uc}, R^\pi_{\Uc})}, \quad  v^{\pi}_{\Uc}:= v^\pi_{(P^{\pi}_{\Uc}, R^\pi_{\Uc})}.\]
For $s$-rectangular uncertainty sets (in particular, for $(s,a)$-rectangular), the above definition of robust value function coincides with the common one, \ie $v^\pi_{(P^{\pi}_{\Uc}, R^\pi_{\Uc})} = \min_{(P,R)\in \Uc} v^\pi_{(P,R)}$ \cite{wiesemann2013robust}.
If the uncertainty set is additionally $(s,a)$-rectangular (as in \cite{wang2022robust} or \cite{derman2021twice, LpRobustMDPs}), the above definition of robust Q-value also coincides with the common one because then, 
\[Q^\pi_{\Uc^{\texttt{sa}}}(s,a) = \min_{(P,R)\in\Uc^{\texttt{sa}}}
    \left( R(s,a) + \gamma \sum_{s'\in\St}P(s'|s,a)v^\pi_{\Uc^{\texttt{sa}
    }}(s')\right), \quad\forall (s,a)\in\X.\] 
Getting back to our own definition, robust Q-value and value functions are related through: 
\begin{align*}
    v^\pi_{\Uc}(s) &= \innorm{\pi_s,Q^\pi_{\Uc}(s,\cdot)}_{\A}\\
    Q^\pi_{\Uc}(s,a) &= R^{\pi}_{\Uc}(s,a) + \gamma\sum_{s'\in\St}\pi_s(a)P^\pi_{\Uc}(s'|s,a)v^{\pi}_{\Uc}(s'),
\end{align*}
as both quantities are defined based on worst kernel and reward, \ie $Q^{\pi}_{\Uc}:= Q^\pi_{(P^{\pi}_{\Uc}, R^\pi_{\Uc})}$ and $v^{\pi}_{\Uc}:= v^\pi_{(P^{\pi}_{\Uc}, R^\pi_{\Uc})}$.

Given an optimal robust policy $\pi^*_{\Uc}$, we further use $P^*_{\Uc}, R^*_{\Uc}, v^*_{\Uc}, Q^*_{\Uc}, d^*_{\Uc}$ as a shorthand for $P^{\pi^*_{\Uc}}_{\Uc}, R^{\pi^*_{\Uc}}_{\Uc}, v^{\pi^*_{\Uc}}_{\Uc}, Q^{\pi^*_{\Uc}}_{\Uc}, d^{\pi^*_{\Uc}}_{\Uc}$ respectively. 
For $(s,a)$-rectangular uncertainty set $\Uc^{\texttt{sa}}$, the optimal value function is the best optimal Q-value, that is
\begin{align*}
     v^*_{\Uc^{\texttt{sa}}}(s) = \max_{a\in\A}Q^*_{\Uc^{\texttt{sa}}}(s,a),\quad \forall s\in\St.
\end{align*}
because an optimal policy deterministically takes the action with the highest Q-value \cite{nilim2005robust,iyengar2005robust}.
This does no longer hold for $s$-rectangular or coupled uncertainty sets, as there, an optimal policy may be stochastic \cite{wiesemann2013robust}.
Still, based on Thms.~\ref{thm: worst sa} and \ref{thm: worst s}, we get the Bellman recursion below.

\begin{proposition}
Let an $\ell_p$-ball constrained uncertainty set. Then, for all $(s,a)\in\X$ and $\pi\in\Pi$, the robust Q-value satisfies the following recursion in the $(s,a)$ and $s$-rectangular case respectively:  
\begin{align*}
    Q^\pi_{\Uc^\texttt{sa}_p}(s,a) = & T^{\pi}_{\ropo} Q^\pi_{\Uc^\texttt{sa}_p}(s,a)-\alpha_{sa}-\gamma\beta_{sa}\kappa_{q}(v^\pi_{\Uc^\texttt{sa}_p}),\\
     Q^\pi_{\Uc^\texttt{s}_p}(s,a) = & T^{\pi}_{\ropo} Q^\pi_{\Uc^\texttt{s}_p}(s,a) -\left(\frac{\pi_s(a)}{\norm{ \pi_s}_{q}} \right)^{q-1}\left(\alpha_{s}+\gamma\beta_{s}\kappa_{q}(v^\pi_{\Uc^\texttt{s}_p})\right).
\end{align*}
\end{proposition}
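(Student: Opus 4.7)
The plan is to prove both identities by direct substitution. Starting from the one-step Bellman identity satisfied by the robust Q-value, I would substitute the closed-form worst-model expressions from Thms.~\ref{thm: worst sa} and~\ref{thm: worst s}, and then collapse the inner product $\innorm{u^\pi_{\Uc}, v^\pi_{\Uc}}_{\St}$ using Prop.~\ref{rs:noisValCorr}. Notice that this proposition is essentially a restatement of Prop.~\ref{rs:rQval} applied to the fixed point $Q = Q^\pi_{\Uc}$, so an alternative route would be to invoke that result directly; I prefer the bare-hands substitution since it makes the role of each ingredient transparent and does not rely on the contraction/fixed-point argument.

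Concretely, the starting point for both cases is the identity
\[
Q^\pi_{\Uc}(s,a) \;=\; R^\pi_{\Uc}(s,a) \;+\; \gamma\, \innorm{P^\pi_{\Uc}(\cdot|s,a),\, v^\pi_{\Uc}}_{\St},
\]
which holds by construction, since $Q^\pi_{\Uc}$ and $v^\pi_{\Uc}$ are both defined from the same worst-case pair $(P^\pi_{\Uc}, R^\pi_{\Uc})$ and satisfy $v^\pi_{\Uc}(s') = \innorm{\pi_{s'}, Q^\pi_{\Uc}(s',\cdot)}_{\A}$. For the $(s,a)$-rectangular case, I would plug in $R^\pi_{\Uc}(s,a) = R_0(s,a) - \alpha_{s,a}$ and $P^\pi_{\Uc}(\cdot|s,a) = P_0(\cdot|s,a) - \beta_{s,a}\, u^\pi_{\Uc}$ from Thm.~\ref{thm: worst sa}. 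The nominal pieces reassemble into $T^\pi_{\ropo} Q^\pi_{\Uc^{\texttt{sa}}_p}(s,a)$ once the consistency relation between $v^\pi_{\Uc}$ and $Q^\pi_{\Uc}$ is used to expand $\innorm{P_0(\cdot|s,a), v^\pi_{\Uc}}_{\St}$ as a Q-level Bellman term, while the perturbation reduces to $-\alpha_{s,a} - \gamma\beta_{s,a}\, \innorm{u^\pi_{\Uc}, v^\pi_{\Uc}}_{\St} = -\alpha_{s,a} - \gamma\beta_{s,a}\,\kappa_q(v^\pi_{\Uc^{\texttt{sa}}_p})$ by Prop.~\ref{rs:noisValCorr}.

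The $s$-rectangular case follows the same template with the substitutions from Thm.~\ref{thm: worst s}. The only book-keeping difference is that the common factor $\bigl(\pi_s(a)/\norm{\pi_s}_{q}\bigr)^{q-1}$ multiplies both the reward perturbation $\alpha_s$ and the rank-one kernel perturbation $\beta_s u^\pi_{\Uc}$. After collecting the nominal terms into $T^\pi_{\ropo} Q^\pi_{\Uc^{\texttt{s}}_p}(s,a)$ and again applying Prop.~\ref{rs:noisValCorr} to the inner product with $u^\pi_{\Uc}$, this common factor cleanly pulls out in front of $\alpha_s + \gamma\beta_s\kappa_q(v^\pi_{\Uc^{\texttt{s}}_p})$, which produces the stated regularizer.

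I do not anticipate any genuine obstacle: the whole argument is a one-step expansion. The only care required is notational, namely matching the Q-level Bellman operator $T^\pi_{\ropo}$ defined in the preliminaries with the reassembled expression $R_0(s,a) + \gamma \sum_{s'} P_0(s'|s,a)\, v^\pi_{\Uc}(s')$. This needs the consistency relation $v^\pi_{\Uc}(s') = \innorm{\pi_{s'}, Q^\pi_{\Uc}(s',\cdot)}_{\A}$ to be invoked explicitly, after which both claims drop out.
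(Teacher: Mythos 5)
Your proof is correct and follows essentially the same route as the paper: the paper also starts from the one-step Bellman identity $Q^\pi_{\Uc}(s,a)=R^\pi_{\Uc}(s,a)+\gamma\innorm{P^\pi_{\Uc}(\cdot|s,a),v^\pi_{\Uc}}_{\St}$, substitutes the worst-model expressions of Thms.~\ref{thm: worst sa} and \ref{thm: worst s}, collapses $\innorm{u^\pi_{\Uc},v^\pi_{\Uc}}=\kappa_q(v^\pi_{\Uc})$ via the second statement of Prop.~\ref{rs:noisValCorr}, and reassembles the nominal terms into $T^\pi_{\ropo}Q^\pi_{\Uc}$ using $v^\pi_{\Uc}(s')=\innorm{\pi_{s'},Q^\pi_{\Uc}(s',\cdot)}_{\A}$. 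No gaps; your handling of the common factor $\bigl(\pi_s(a)/\norm{\pi_s}_q\bigr)^{q-1}$ in the $s$-rectangular case is exactly what the paper leaves implicit when it says that case "follows the exact same lines."
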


\begin{proof}
We give proof for the $(s,a)$-rectangular case only. The $s$-rectangular case follows the exact same lines except that it uses Thm.~\ref{thm: worst s} instead of Thm.~\ref{thm: worst sa}. We have:
\begin{align*}
Q^{\pi}_{\Uc}(s,a) &= Q^\pi_{(P^{\pi}_{\Uc}, R^\pi_{\Uc})}(s,a) &(\text{By definition})\\
&=   R^\pi_{\Uc}(s,a) + \sum_{s'\in\St}P^\pi_{\Uc}(s'|s,a)v^\pi_{\Uc^{\texttt{sa}}_p}(s') \\
&= R_0(s,a) -\alpha_{sa}+ \gamma\sum_{s'\in\St}\left(P_0(s'|s,a) - \beta_{sa}u^\pi_{\Uc^{\texttt{sa}}_p}(s')\right)v^\pi_{\Uc^{\texttt{sa}}_p}(s') &\text{(By Thm.~\ref{thm: worst sa})}\\
&= R_0(s,a) -\alpha_{sa}+ \gamma\sum_{s'\in\St}P_0(s'|s,a)v^\pi_{\Uc^{\texttt{sa}}_p}(s') -\gamma\beta_{sa} \kappa_q(v^\pi_{\Uc^{\texttt{sa}}_p}) &\text{(2d statement of Prop.~\ref{rs:noisValCorr})}\\
&= R_0(s,a)+ \gamma\sum_{s',a'}P_0(s'|s,a)\pi_{s'}(a')Q^\pi_{\Uc^{\texttt{sa}}_p}(s',a') -\alpha_{sa}-\gamma\beta_{sa} \kappa_q(v^\pi_{\Uc^{\texttt{sa}}_p})\\
&= T^{\pi}_{\ropo} Q^\pi_{\Uc^\texttt{sa}_p}(s,a)-\alpha_{sa}-\gamma\beta_{sa}\kappa_{q}(v^\pi_{\Uc^\texttt{sa}_p}).
\end{align*}
\end{proof}

The above recursion applies the standard Bellman operator on robust Q-values. We can similarly apply it on the robust value function (itself can be computed efficiently based on \cite{derman2021twice, LpRobustMDPs}).

\begin{corollary} Let an $\ell_p$-ball constrained uncertainty set. Then, for all $(s,a)\in\X$ and $\pi\in\Pi$, the robust Q-value satisfies the following recursion in the $(s,a)$ and $s$-rectangular case respectively: 
\begin{align*}
    Q^\pi_{\Uc^\texttt{sa}_p}(s,a) &= R_0(s,a) + \gamma\sum_{s'}P_0(s'|s,a)v^\pi_{\Uc^\texttt{sa}_p}(s')-\alpha_{sa}-\gamma\beta_{sa}\kappa_{q}(v^\pi_{\Uc^\texttt{s}_p}) ,\\
     Q^\pi_{\Uc^\texttt{s}_p}(s,a) &= R_0(s,a)+ \gamma\sum_{s'}P_0(s'|s,a)v^\pi_{\Uc^\texttt{sa}_p}(s') -\left(\frac{\pi_s(a)}{\norm{ \pi_s}_{q}} \right)^{q-1}\left(\alpha_{s}+\gamma\beta_{s}\kappa_{q}(v^\pi_{\Uc^\texttt{s}_p})\right).
\end{align*}
\end{corollary}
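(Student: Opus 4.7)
The corollary is a direct consequence of the preceding proposition, so the plan is simply to unfold the nominal Bellman operator $T^\pi_{(P_0, R_0)}$ applied to the robust Q-value. Starting from the proposition, I have, in the $(s,a)$-rectangular case,
\begin{equation*}
    Q^\pi_{\Uc^{\texttt{sa}}_p}(s,a) = T^{\pi}_{(P_0,R_0)}\,Q^\pi_{\Uc^{\texttt{sa}}_p}(s,a) - \alpha_{sa} - \gamma\beta_{sa}\,\kappa_{q}\!\bigl(v^\pi_{\Uc^{\texttt{sa}}_p}\bigr),
\end{equation*}
and analogously in the $s$-rectangular case with the policy-dependent regularizer $(\pi_s(a)/\norm{\pi_s}_q)^{q-1}(\alpha_s + \gamma\beta_s\kappa_q(v^\pi_{\Uc^{\texttt{s}}_p}))$. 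So the only task is to rewrite $T^{\pi}_{(P_0,R_0)}\,Q^\pi_\Uc$ in terms of the robust value function $v^\pi_\Uc$ rather than the robust Q-function itself.

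To do this, I apply the definition of the Bellman operator on Q-functions introduced in the preliminaries, namely $T^\pi_{(P_0,R_0)}Q(s,a) = R_0(s,a) + \gamma\sum_{(s',a')\in\X}P_0(s'|s,a)\pi_{s'}(a')Q(s',a')$, to $Q = Q^\pi_\Uc$. Reorganizing the double sum and pulling out the inner sum over $a'$ gives
\begin{equation*}
    T^\pi_{(P_0,R_0)}\,Q^\pi_\Uc(s,a) = R_0(s,a) + \gamma\sum_{s'\in\St}P_0(s'|s,a)\Bigl(\sum_{a'\in\A}\pi_{s'}(a')\,Q^\pi_\Uc(s',a')\Bigr).
\end{equation*}
By the identity $v^\pi_\Uc(s') = \innorm{\pi_{s'}, Q^\pi_\Uc(s',\cdot)}_{\A}$ recorded in Section~\ref{sec: robust q value}, the inner sum is exactly $v^\pi_\Uc(s')$. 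Substituting this back into both cases of the preceding proposition yields the two claimed recursions verbatim. No nontrivial obstacle arises: the corollary is a purely notational rewrite of the proposition, interchanging a Bellman operator on Q-values with an explicit expectation of the robust value function under the nominal kernel.
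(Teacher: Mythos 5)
Your proposal is correct and matches the paper's own route: the corollary is exactly the intermediate line in the paper's proof of the preceding proposition (before the nominal sum over $v^\pi_{\Uc}$ is repackaged as the Q-value Bellman operator $T^\pi_{(P_0,R_0)}$), and your argument just reverses that last rewriting step via the identity $v^\pi_{\Uc}(s') = \innorm{\pi_{s'}, Q^\pi_{\Uc}(s',\cdot)}_{\A}$. As a minor remark, the subscripts $\Uc^{\texttt{s}}_p$ versus $\Uc^{\texttt{sa}}_p$ in the corollary as printed are typographically mixed up, and your derivation implicitly gives the intended consistent version.
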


\subsection{Evaluation}

Based on the Bellman recursion above, we now derive robust Q-learning equations to learn a robust Q-value.  Precisely,
we investigate if the linear operator below is contracting and can be evaluated efficiently:
\begin{align}
\label{eq: def L op}
    (\mathcal{L}^\pi_{\Uc} Q)(s,a) :=R^\pi_{\Uc,v}(s,a)+\gamma \sum_{(s',a')\in\X}P^\pi_{\Uc,v}(s'|s,a)\pi_{s'}(a')Q(s',a'),\quad \forall Q\in\R^{\X},
\end{align}
where $(P^\pi_{\Uc,v},R^\pi_{\Uc,v}) \in \argmin_{(P,R)\in \Uc}T^{\pi}_{\rop}v$ and $v(s)=\innorm{\pi_s, Q(s,\cdot)}_{\A},\quad\forall s\in\St$.

\begin{proposition} 
Consider an $\ell_p$-ball constrained uncertainty set. Then, for all $Q\in\R^{\X}$ and $\pi\in\Pi$, the operator $\mathcal{L}^{\pi}$ can be evaluated as:
\begin{align*}
(\mathcal{L}^\pi_{\Uc^\texttt{sa}_p} Q)(s,a)  &=  T^{\pi}_{\ropo}Q(s,a) -\alpha_{sa}-\gamma\beta_{sa}\kappa_{q}(v),\\
 (\mathcal{L}^\pi_{\Uc^\texttt{s}_p} Q)(s,a)  &= T^{\pi}_{\ropo}Q(s,a) -\left(\frac{\pi_s(a)}{\norm{ \pi_s}_{q}} \right)^{q-1}\left(\alpha_{s}+\gamma\beta_{s}\kappa_{q}(v)\right),\end{align*}
 where for all $Q\in\R^{\X}$, its corresponding value is $v(s) := \innorm{\pi_s,Q(s,\cdot)},\quad\forall s\in\St$.
\end{proposition}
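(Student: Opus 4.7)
The plan is to reduce the evaluation of $\mathcal{L}^\pi_{\Uc}$ to the worst-model characterizations already established in Thm.~\ref{thm: worst sa} and Thm.~\ref{thm: worst s}. The key observation is that although those theorems are stated in terms of the robust value function $v^\pi_{\Uc}$, the proofs only exploit the fact that we are minimizing an inner product of the form $\innorm{p,v}_{\St}$ (plus an $r$-term) over an $\ell_p$-ball subject to a zero-sum constraint. Hence, exactly the same derivation applied to an arbitrary vector $v\in\R^{\St}$ yields, in the $(s,a)$-case,
\[ R^\pi_{\Uc,v}(s,a) = R_0(s,a) - \alpha_{s,a}, \qquad P^\pi_{\Uc,v}(\cdot|s,a) = P_0(\cdot|s,a) - \beta_{s,a}\, u_q(v), \]
and the analogous expression with the extra factor $(\pi_s(a)/\norm{\pi_s}_q)^{q-1}$ in the $s$-rectangular case. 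The first step of the proof is therefore to record this generalization of Thms.~\ref{thm: worst sa}--\ref{thm: worst s} to a generic $v$, which requires no new work.

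Next, I would expand the definition in Eq.~\eqref{eq: bellman op robust}. Since $v(s') = \innorm{\pi_{s'},Q(s',\cdot)}_{\A}$, we may rewrite
\[ (\mathcal{L}^\pi_{\Uc} Q)(s,a) = R^\pi_{\Uc,v}(s,a) + \gamma \sum_{s'\in\St} P^\pi_{\Uc,v}(s'|s,a)\, v(s'), \]
and similarly observe that $T^\pi_{\ropo}Q(s,a) = R_0(s,a) + \gamma\sum_{s'}P_0(s'|s,a) v(s')$. Subtracting one from the other and substituting the closed forms from the first step, the difference collapses to
\[ (\mathcal{L}^\pi_{\Uc} Q)(s,a) - T^\pi_{\ropo}Q(s,a) = -\alpha_{s,a} - \gamma\beta_{s,a}\,\innorm{u_q(v), v}_{\St} \]
in the $(s,a)$-rectangular case, and to the corresponding expression scaled by $(\pi_s(a)/\norm{\pi_s}_q)^{q-1}$ in the $s$-rectangular case.

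The final step is to identify $\innorm{u_q(v),v}_{\St}$ with $\kappa_q(v)$. This is an immediate consequence of the second statement of Prop.~\ref{rs:noisValCorr}, whose proof in turn rests on Lemma~\ref{lemma:c star}; just as in step 1, neither result actually uses $v$ being a robust value, so both apply verbatim to an arbitrary $v\in\R^{\St}$. Plugging this identity in yields the claimed regularized form of $\mathcal{L}^\pi_{\Uc}$ in both cases.

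I do not anticipate any serious obstacle: the real content has already been done in Thms.~\ref{thm: worst sa}--\ref{thm: worst s} and Prop.~\ref{rs:noisValCorr}, and the proposition amounts to noticing that those arguments are pointwise in $v$ and then re-assembling the pieces. The only mild subtlety is the bookkeeping step of rewriting the sum over $(s',a')\in\X$ in the operator as a sum over $s'\in\St$ weighted by $v(s')$, so that the closed-form worst model can be inserted cleanly; after that, the algebra is a one-line substitution.
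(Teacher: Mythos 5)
Your proposal is correct and follows essentially the same route as the paper: both reduce the evaluation of $\mathcal{L}^\pi_{\Uc}$ to the ball-constrained linear minimization underlying Lemma~\ref{lemma:c star} (equivalently, the regularization identity of \cite{LpRobustMDPs}), after rewriting the sum over $(s',a')$ as a sum over $s'$ weighted by $v(s')$. The only cosmetic difference is that you substitute the explicit worst model (noting that the proofs of Thms.~\ref{thm: worst sa}--\ref{thm: worst s} hold for an arbitrary vector $v$) and then invoke $\innorm{u_q(v),v}=\kappa_q(v)$, whereas the paper inserts the minimum value directly and dispatches the $s$-rectangular case by analogy, which your argument spells out slightly more explicitly.
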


\begin{proof}
We give proof for the $(s,a)$-rectangular case only. The $s$-rectangular case follows the exact same lines except that we take the worst model for $s$-rectangular balls. By definition, 
\begin{align*}
(\mathcal{L}^\pi_{\Uc^\texttt{sa}_p} Q)(s,a) &= \min_{(P,R)\in \Uc^\texttt{sa}_p}\left\{R(s,a)+\gamma \sum_{(s',a')\in\X}P(s'|s,a)\pi_{s'}(a')Q(s',a')\right\}\\
&= \min_{R\in \Rc^\texttt{sa}_p}R(s,a)+\gamma \min_{P\in \Pc^\texttt{sa}_p}\left\{ \sum_{s'\in\St}P(s'|s,a)v(s')\right\}\\
&= R_0(s,a) -\alpha_{s,a} +\gamma \sum_{s'\in\St}P_0(s'|s,a)v(s') -\beta_{s,a}\kappa_q(v)&\text{(By \cite{LpRobustMDPs})}\\
&= R_0(s,a) -\alpha_{s,a} +\gamma\sum_{(s',a')\in\X}P_0(s'|s,a)\pi_{s'}(a')Q(s',a')-\beta_{s,a}\kappa_q(v)\\
&= T^{\pi}_{\ropo}Q(s,a)-\alpha_{s,a}-\beta_{s,a}\kappa_q(v).
\end{align*}
\end{proof}

\subsection{Convergence}
In the remainder of this section, we focus on $\ell_p$-ball constrained uncertainty sets of the form $\Uc^{\texttt{sa}}_p$ or $\Uc^{\texttt{s}}_p$.
Let our Q-value iteration $Q_{n+1} := \mathcal{L}^\pi_{\Uc} Q_n$, and denote $v_n(s) = \innorm{\pi_s, Q_n(s,\cdot)}_{\A}, \forall s\in\St, n\in\mathbb{N}$.

\begin{proposition} 
\label{prop: operator on v}
For all $Q\in\R^{\X}$, denote $v(s) := \innorm{\pi_s, Q(s,\cdot)}_{\A}, \forall s\in\St$. Then, for any  policy $\pi\in\Pi$, the Q-value iteration defined according to $Q_{n+1} = \mathcal{L}^\pi_{\Uc} Q_n$ induces
\[v_{n+1} := \mathcal{T}^\pi_{\Uc} v_n.\]
\begin{proof}
    By construction, for all $s\in\St$ we have 
\begin{align*}
v_{n+1}(s) &= \innorm{\pi_s, Q_{n+1}(s,\cdot)}_{\A} \\
&= \innorm{\pi_s,  (\mathcal{L}^\pi_{\Uc}Q_n)(s,\cdot)}_{\A}  \\
&=\sum_{a\in\A}\pi_s(a)\left[R^\pi_{\Uc,v_n}(s,a)+\gamma \sum_{(s',a')\in\X}P^\pi_{\Uc,v_n}(s'|s,a)\pi_{s'}(a')Q_n(s',a')\right] &\text{(By Eq.~\ref{eq: def L op})}\\
&=\sum_{a\in\A}\pi_s(a)\left[R^\pi_{\Uc,v_n}(s,a)+\gamma \sum_{s'\in\St}P^\pi_{\Uc,v_n}(s'|s,a)v_n(s')\right] &\text{(By definition of $v_n$)}\\ 
&=(\mathcal{T}^\pi_{\Uc} v_n)(s),
\end{align*}
where the last equality holds because  $(P^\pi_{\Uc,v_n},R^\pi_{\Uc,v_n}) \in \argmin_{(P,R)\in \Uc}\mathcal{T}^\pi_{\rop} v_n$.
\end{proof}
\end{proposition}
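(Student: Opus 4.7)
The plan is to prove the identity $v_{n+1} = \mathcal{T}^\pi_{\Uc} v_n$ pointwise by a direct unfolding of the definitions. The key structural observation is that the worst-case model selected inside $\mathcal{L}^\pi_{\Uc}$ when it acts on $Q_n$ depends only on the value $v(s) := \innorm{\pi_s, Q_n(s,\cdot)}_{\A} = v_n(s)$, so it coincides with the worst-case model used inside $\mathcal{T}^\pi_{\Uc}$ when it acts on $v_n$. Once this is pinned down, everything reduces to a chain of equalities.

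Concretely, I would start from the definition $v_{n+1}(s) = \innorm{\pi_s, Q_{n+1}(s,\cdot)}_{\A}$ and substitute $Q_{n+1} = \mathcal{L}^\pi_{\Uc} Q_n$. Expanding via the defining formula~\eqref{eq: def L op}, the inner $\sum_{a}\pi_s(a)$ combines with the two additive pieces: the reward piece becomes $\sum_a \pi_s(a) R^\pi_{\Uc,v_n}(s,a)$, which is the induced policy-averaged reward at state $s$; the transition piece involves $\sum_{a'}\pi_{s'}(a')Q_n(s',a') = v_n(s')$, collapsing the double sum over $(s',a')$ into a single sum over $s'$ weighted by $P^\pi_{\Uc,v_n}(s'|s,a)$ and $\pi_s(a)$.

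The final step is to recognize the resulting expression as $(\mathcal{T}^\pi_{\Uc} v_n)(s)$. This is immediate from the definition $\mathcal{T}^\pi_{\Uc} v := \min_{(P,R)\in\Uc} T^\pi_{(P,R)} v$, together with the fact that $(P^\pi_{\Uc,v_n}, R^\pi_{\Uc,v_n})\in \arginf_{(P,R)\in\Uc} T^\pi_{(P,R)} v_n$ attains this minimum by definition.

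The only mildly delicate point—and arguably the whole content of the proposition—is the consistency between the two argmin problems: the operator $\mathcal{L}^\pi_{\Uc}$ picks its worst model via the value associated with its input $Q_n$ (i.e.\ via $v_n$), while the operator $\mathcal{T}^\pi_{\Uc}$ picks its worst model via its input $v_n$ directly. These two argmin problems are identical, so no commutativity issue arises. No contractivity or fixed-point argument is needed here; the proposition is a one-step algebraic equivalence that links the $Q$-iteration to the $v$-iteration, and the entire proof fits in a short display of equalities.
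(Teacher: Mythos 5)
Your proposal is correct and follows essentially the same route as the paper's proof: expand $v_{n+1}(s)=\innorm{\pi_s,(\mathcal{L}^\pi_{\Uc}Q_n)(s,\cdot)}_{\A}$ via the definition of $\mathcal{L}^\pi_{\Uc}$, collapse $\sum_{a'}\pi_{s'}(a')Q_n(s',a')$ into $v_n(s')$, and identify the result as $(\mathcal{T}^\pi_{\Uc}v_n)(s)$ since $(P^\pi_{\Uc,v_n},R^\pi_{\Uc,v_n})$ attains the minimum defining $\mathcal{T}^\pi_{\Uc}v_n$. Your remark that the whole content lies in the coincidence of the two argmin problems is exactly the justification the paper gives for its final equality.
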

As a result of the above proposition, the value iteration induced by our Q-value iteration rule converges linearly to the robust value function, \ie $\norm{ v_n -v^\pi_{\Uc}}_\infty \leq \gamma^n\norm{ v_0}_\infty$. Therefore, Q-value iterates converge to a fixed point. Precisely, $v_n\to_n v^\pi_{\Uc}$ implies that $(P^\pi_{\Uc,v_n},R^\pi_{\Uc,v_n}) \to_n (P^\pi_{\Uc},R^\pi_{\Uc})$, which in turn implies that $Q_n \to_n Q^\pi_{\Uc}$. The result below further characterizes the convergence rate.

\begin{proposition}
  For any policy $\pi\in\Pi$, the recursion $Q_{n+1} := \mathcal{L}^\pi_{\Uc} Q_n, n\in\mathbb{N}$ converges linearly to $Q^\pi_{\Uc}$. 
\end{proposition}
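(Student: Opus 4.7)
The plan is to reduce Q-value convergence to the already-established linear convergence of the induced value iterates. By Proposition \ref{prop: operator on v}, the sequence $v_n$ with $v_n(s) := \innorm{\pi_s, Q_n(s,\cdot)}_{\A}$ satisfies $v_{n+1} = \mathcal{T}^\pi_{\Uc} v_n$, and since $\mathcal{T}^\pi_{\Uc}$ is a $\gamma$-contraction in sup-norm with unique fixed point $v^\pi_{\Uc}$, I would record the bound $\norm{v_n - v^\pi_{\Uc}}_\infty \leq \gamma^n \norm{v_0 - v^\pi_{\Uc}}_\infty$ as the starting point.

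Next, I would apply Proposition \ref{rs:rQval} to rewrite one Q-iteration explicitly in terms of $v_n$ alone. Since $T^\pi\ropo Q(s,a) = R_0(s,a) + \gamma \sum_{s'} P_0(s'|s,a) v(s')$ with $v$ induced by $Q$, the regularized form collapses to
\[
Q_{n+1}(s,a) = R_0(s,a) + \gamma\!\sum_{s'\in\St}\! P_0(s'|s,a)v_n(s') + \Omega_q'(\alpha_{s,a},\beta_{s,a},v_n),
\]
and $Q^\pi_{\Uc}$ satisfies the same identity with $v^\pi_{\Uc}$ in place of $v_n$. The crucial observation is that $Q_{n+1}$ depends on $Q_n$ only through $v_n$, which decouples the recursion and sidesteps any need to show $\mathcal{L}^\pi_{\Uc}$ is itself a sup-norm contraction on Q-values (it need not be uniformly so).

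Subtracting the two identities and taking sup-norms yields
\[
\norm{Q_{n+1} - Q^\pi_{\Uc}}_\infty \leq \gamma\norm{v_n - v^\pi_{\Uc}}_\infty + \max_{s,a}\abs{\Omega_q'(\alpha_{s,a},\beta_{s,a},v_n) - \Omega_q'(\alpha_{s,a},\beta_{s,a},v^\pi_{\Uc})}.
\]
The step I expect to be the main obstacle is controlling the $\Omega_q'$ difference. In both rectangular settings $\Omega_q'$ is affine in $\kappa_q(v)$ with coefficient bounded by $\gamma\beta_{\max}$, using that $(\pi_s(a)/\norm{\pi_s}_q)^{q-1} \leq 1$ in the $s$-rectangular case. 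Because $\kappa_q(v) = \min_\omega \norm{v - \omega\mathbf{1}}_q$ is $1$-Lipschitz in $\norm{\cdot}_q$ (by a triangle-inequality argument comparing its minimizers), and $\norm{\cdot}_q \leq S^{1/q}\norm{\cdot}_\infty$ in finite dimension, I obtain a finite constant $C$ depending on $\gamma,\beta_{\max},S,q$ such that $\norm{Q_{n+1} - Q^\pi_{\Uc}}_\infty \leq C\gamma^n\norm{v_0 - v^\pi_{\Uc}}_\infty$, which establishes the claimed linear (geometric) convergence rate.
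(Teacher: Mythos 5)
Your proof is correct, and it takes a genuinely different route from the paper at the key step. Both arguments share the same starting reduction: via Prop.~\ref{prop: operator on v}, the induced values satisfy $v_{n+1}=\mathcal{T}^\pi_{\Uc}v_n$, hence $\norm{v_n-v^\pi_{\Uc}}_\infty\leq\gamma^n\norm{v_0-v^\pi_{\Uc}}_\infty$, and both then subtract the fixed-point identity of $Q^\pi_{\Uc}$ from the iterate. The paper, however, carries out the subtraction in terms of the worst model itself, writing $P^\pi_{\Uc,v_n}=P_0-B^\pi u_n$ and splitting $\innorm{u_n,v_n}-\innorm{u^\pi_{\Uc},v^\pi_{\Uc}}$ into two cross terms; this forces it to control $\norm{u_n-u^\pi_{\Uc}}_\infty$, i.e.\ continuity of the balanced-normalized vector in $v$, which the paper only sketches (and which is delicate, since the normalization by $\kappa_q(v)^{q-1}$ degrades near $\kappa_q(v)=0$). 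You instead use the regularized form of $\mathcal{L}^\pi_{\Uc}$ (Prop.~\ref{rs:rQval}), so that the only nonlinear dependence on $v_n$ is through $\kappa_q(v_n)$, and you close the argument with the $S^{1/q}$-Lipschitz continuity of $\kappa_q$ in the sup-norm (the paper's Lemma~\ref{app:rs:k_appr}); equivalently, you exploit $\innorm{u_n,v_n}=\kappa_q(v_n)$ before splitting rather than after. This buys you a cleaner and fully rigorous bound with an explicit constant $\gamma(1+\gamma\beta_{\max}S^{1/q})$, avoids any Lipschitz claim about $u_p(\cdot)$, and, as you note, sidesteps the question of whether $\mathcal{L}^\pi_{\Uc}$ (with its $v$-dependent worst model) is itself a sup-norm contraction. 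The paper's decomposition, for its part, keeps the worst-model (rank-one perturbation) structure explicit, which is the viewpoint it reuses elsewhere, but as a convergence proof yours is the tighter argument.
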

\begin{proof}
Thanks to Prop.~\ref{prop: operator on v}, we have:
    \begin{align*}
        \norm{Q_{n+1}-Q^\pi_{\Uc}}_{\infty }
        =&\norm{ R^\pi_{\Uc,v_n}+\gamma P^\pi_{\Uc,v_n}v_n -R^\pi_{\Uc}+\gamma P^\pi_{\Uc}v^\pi_{\Uc} }_\infty\\ 
        =&\gamma  \norm{  P^\pi_{\Uc,v_n}v_n -P^\pi_{\Uc}v^\pi_{\Uc}}_\infty & \text{($R^\pi_{\Uc,v} = R^\pi_{\Uc},\quad\forall v$)},\\
        =&\gamma \norm{  (P_0- B^\pi u_n)v_n -(P_0 -B^\pi u^\pi_{\Uc})v^\pi_{\Uc}}_\infty,
    \end{align*}
where by the worst kernel characterization, $B^\pi(s,a) := \beta_{s,a}$ for $\Uc = \Uc^{\texttt{sa}}_p $ and $B^\pi(s,a) := \beta_{s} \left(\frac{\pi_s(a)}{\norm{\pi_s}_{q}}\right)^{q-1}$ for $\Uc = \Uc^{\texttt{sa}}_p$.  This implies
    \begin{align*}
        \norm{ Q_{n+1} -Q^\pi_{\Uc}}_\infty 
        \leq &\gamma  \norm{ P_0(v_n -v^\pi_{\Uc})}_\infty +\gamma \norm{  B^\pi (u_n)^{\top}v_n -B^\pi (u^\pi_{\Uc})^{\top}v^\pi_{\Uc}}_\infty  \\
        \leq &\gamma^{n+1}  \norm{ v_0 -v^\pi_{\Uc}}_\infty +\gamma \norm{  (u_n)^{\top}v_n -(u^\pi_{\Uc})^{\top}v^\pi_{\Uc}} & \text{($B^\pi(s,a)\leq 1$)},\\
        \leq &\gamma^{n+1} \norm{ v_0 -v^\pi_{\Uc}}_\infty +\gamma\norm{  (u_n)^{\top}(v_n -v^\pi_{\Uc}) }+\gamma \norm{(u_n -u^\pi_{\Uc})^{\top}v^\pi_{\Uc} }\\
        \leq &\gamma^{n+1} \norm{ v_0 -v^\pi_{\Uc}}_\infty +\gamma^{n+1}S \norm{  v_0 -v^\pi_{\Uc}}_\infty+\gamma \frac{\norm{\mathcal{R}}_\infty}{1-\gamma}\norm{  u_n -u^\pi_{\Uc} }_\infty.
    \end{align*}
Here, $u_n,u^\pi_{\Uc}$ is the balanced-normalized vector associated with vector $v_n$ and $v^\pi_{\Uc}$, respectively. Recall that the $p$-balanced normalized vector $u$ associated with vector $v$ is given by
\begin{align}
    u(s)  := \frac{\text{sign}(v(s)-\omega_q(v) \norm{ v(s)-\omega_q(v)}^{q-1}}{\kappa_q(v)^{q-1}},
\end{align}
where $\quad \kappa_p(v) = \min_{w}\norm{ v-w\mathbf{1}}_p $ and $ \omega_p(v) = \argmin_{w}\norm{ v-w\mathbf{1}}_p $. It is easy to see that $\omega_p,\kappa$ are Lipschitz in $v$. Hence, $\norm{  u_n -u^\pi_{\Uc} }_\infty \leq C\cdot \text{Pol}(\norm{  v_n -v^\pi_{\Uc} }_\infty)\cdot \psi(\kappa(v^\pi_{\Uc}),S,A)$, where $\text{Pol}$ is a polynomial and $\psi$ a real function. This implies that
\[\norm{ Q_{n+1} -Q^\pi_{\Uc} }_\infty 
    \leq \gamma^{n+1}\psi'(\kappa(v^\pi_{\Uc}),\norm{  v_0 -v^\pi_{\Uc}}_\infty,S,A),\]
which concludes our proof.
\end{proof}

\section{Robust Policy Gradient}
\begin{theorem*}[RPG]
For any rectangular $\ell_p$-ball-constrained uncertainty, the robust policy gradient is given by:
\begin{align}
\label{eq: rpg eq}
     \partial_{\pi} \rho^\pi_{\Uc}  &=\sum_{(s,a)\in\X}\left(d^\pi_{P_0,\mu}(s) - c^\pi(s)\right)Q^\pi_{\Uc}(s,a)\nabla\pi_s(a),
\end{align}
where
\[c^\pi(s):=   \frac{\gamma\innorm{d^\pi_{P_0,\mu},\beta^\pi}_{\St}}{1 +\gamma \innorm{d^\pi_{P_0,u^\pi_{\Uc}},\beta^\pi}_{\St}}d^\pi_{P_0,u^\pi_{\Uc}}(s),\quad\forall s\in\St.\] 
\end{theorem*}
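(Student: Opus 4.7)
The plan is to proceed by direct substitution, following the recipe already hinted at in the text after the theorem statement: apply the non-robust policy gradient theorem at the worst-case model, then replace the worst-case occupation measure by the closed-form expression derived in Theorem~\ref{thm: robust occupation from nominal}.

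Concretely, I would start from the definition of the robust sub-gradient in \eqref{eq: gen:rpg update}, namely
\[
\partial_\pi\rho^\pi_{\Uc} \;=\; \nabla_\pi \rho^\pi_{(P,R)}\Bigm|_{(P,R)=(P^\pi_{\Uc},R^\pi_{\Uc})}.
\]
Since at fixed model $(P^\pi_{\Uc},R^\pi_{\Uc})$ the quantity $\rho^\pi_{(P,R)}$ is an ordinary (non-robust) return, the classical policy gradient theorem of \cite{sutton1999policy} applies and yields
\[
\partial_{\pi}\rho^\pi_{\Uc} \;=\; \sum_{(s,a)\in\X} d^\pi_{\Uc,\mu}(s)\, Q^\pi_{\Uc}(s,a)\, \nabla \pi_s(a),
\]
using the notational conventions $d^\pi_{\Uc,\mu}=d^\pi_{P^\pi_{\Uc},\mu}$ and $Q^\pi_{\Uc}=Q^\pi_{(P^\pi_{\Uc},R^\pi_{\Uc})}$ introduced in Section~\ref{sec: towards rpg}.

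The next step is the substitution. By Theorem~\ref{thm: robust occupation from nominal}, the robust visitation frequency satisfies
\[
d^\pi_{\Uc,\mu}(s) \;=\; d^\pi_{P_0,\mu}(s) \;-\; \underbrace{\frac{\gamma\,\innorm{d^\pi_{P_0,\mu},\beta^\pi}_{\St}}{1 +\gamma \innorm{d^\pi_{P_0,u^\pi_{\Uc}},\beta^\pi}_{\St}}\, d^\pi_{P_0,u^\pi_{\Uc}}(s)}_{=\,c^\pi(s)}.
\]
Plugging this identity into the preceding display and distributing the sum over $(s,a)$ immediately gives the claimed expression.

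The only subtle point, and the one I would treat carefully, is the validity of the first step: writing the robust sub-gradient as the standard gradient evaluated at the worst model. This is justified by an envelope-type argument (as invoked in the proof of Proposition~\ref{rs:noisValCorr}): because $(P^\pi_{\Uc},R^\pi_{\Uc})$ is a minimizer of $(P,R)\mapsto\rho^\pi_{(P,R)}$ over $\Uc$, the partial derivative of the inner minimum with respect to $\pi$ coincides with the partial derivative of $\rho^\pi_{(P,R)}$ at the optimum, which is exactly definition \eqref{eq: gen:rpg update}. Everything else is algebraic manipulation; no new analytic input is required beyond Theorem~\ref{thm: robust occupation from nominal}, which already absorbed the rank-one perturbation structure via Lemma~\ref{rs:rankOnePert}.
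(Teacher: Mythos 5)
Your proposal is correct and follows essentially the same route as the paper: the paper's proof likewise consists of applying the standard policy gradient theorem at the worst-case model $(P^\pi_{\Uc},R^\pi_{\Uc})$, as sanctioned by the definition of the sub-gradient in \eqref{eq: gen:rpg update}, and then substituting the expression for $d^\pi_{\Uc,\mu}$ from Thm.~\ref{thm: robust occupation from nominal}. Your additional envelope-style remark only re-justifies what the paper adopts as the definition of $\partial_\pi\rho^\pi_{\Uc}$, so it does not constitute a different argument.
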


\begin{proof}
The proof directly follows from plugging the robust occupation measure of Thm.~\ref{thm: robust occupation from nominal} and the robust Q-value into standard policy-gradient theorem \cite{sutton1999policy}.
\end{proof}

\section{Complexity Analysis}

In this section, we aim to derive the complexity of one RPG iteration for different uncertainty sets. We first focus on non-robust MDPs, to then see how the complexity increases with the uncertainty structure. 

\textbf{Non-robust MDPs.} Computing non-robust Q-values and occupation measure takes $O(S^2A\log(\epsilon^{-1}))$ each, which represent the most expensive computations in PG. The product of $d^\pi, Q^\pi$ and $\nabla \pi$ in PG requires $O(SA)$ operations, which is insignificant. Hence, the total cost of PG corresponds to the computational cost of Q-values. More precisely, approximate the Q-value by $Q$ and the occupation measure by $d$ with $\frac{\epsilon}{SA}$ tolerance, that is, $\norm{ Q-Q^\pi}_\infty\leq \frac{\epsilon}{SA}$ and $\norm{ d-d^\pi}_\infty \leq \frac{\epsilon}{SA}$. This involves $O(S^2A\log(SA\epsilon^{-1}))$ operations for each. Then, we have
 \begin{align*}
     \sum_{(s,a)\in\X}d'(s)Q'(s,a)\nabla\pi_s(a)  =&  \sum_{(s,a)\in\X}(Q^\pi(s,a) + \epsilon_1(s,a)) (d^\pi(s,a) + \epsilon_2(s,a))\nabla\pi_s(a)
 \end{align*}
 where $\epsilon_1(s,a) := Q(s,a) - Q^\pi(s,a)$ and  $\epsilon_2(s,a) := d(s,a) - d^\pi(s,a)$. Let $B$ be an upper bound of $\norm{ Q^\pi}_\infty$ and $\norm{ d^\pi}_\infty.$ Then
 \begin{align*}
     \sum_{(s,a)\in\X}d'(s)Q'(s,a)\nabla\pi_s(a)  &= \sum_{(s,a)\in\X}(Q^\pi(s,a) + \epsilon_1(s,a)) (d^\pi(s,a) + \epsilon_2(s,a))\nabla\pi_s(a)\\
     &=  \sum_{(s,a)\in\X}Q^\pi(s,a)d^\pi(s,a)\nabla\pi_s(a) + O(\epsilon),
 \end{align*}
 so the exact complexity of policy gradient for non-robust MDP is $O(S^2A\log(SA\epsilon^{-1})$.

 \textbf{Convex non-rectangular uncertainty set.} Robust policy improvement is strongly NP-Hard for non-rectangular uncertainty sets, even if convex \cite{wiesemann2013robust}. The PG method finds global optimal given oracle access to policy gradient in polynomial time \cite{wang2022convergence}. This implies that the computation of RPG must be NP-Hard.

\subsection{Helper results for Robust MDPs}

\textbf{Variance and mean functions.} Computing $\kappa_p(v)$ (resp. $\omega_p(v)$) with $\epsilon$-tolerance requires $O(S\log(S\epsilon^{-1}))$ (resp. $O(S\log(\epsilon^{-1}))$) computations if we use binary search \cite{LpRobustMDPs}.

\textbf{Occupation measure.} Let $k \in \mathbb{R}^{\St}$ be any vector. By definition,
\[d^\pi_{P,k} = \sum_{n=0}^\infty \gamma ^nk^{\top}(P^\pi)^n\]
and
\begin{align*}
    \norm*{ d^\pi_{P,k} - \sum_{n=0}^{N-1} \gamma ^nk^{\top}(P^\pi)^n} =  \norm*{ \sum_{n=N}^{\infty} \gamma ^n k^{\top}(P^\pi)^n} \leq  \norm{ k}\sum_{n=N}^{\infty} \norm{\gamma P^\pi}^n  .
\end{align*}
Since $P^{\pi}$ is a stochastic matrix, $\norm{ P^\pi} \leq 1$ and 
\[\norm*{ d^\pi_{P,k} - \sum_{n=0}^{N-1}\gamma ^n k^{\top}(P^\pi)^n} \leq \frac{\norm{ k}  \gamma^N\norm{ P^\pi}^N}{1-  \gamma \norm*{ P^\pi}} \leq \frac{\norm*{ k}  \gamma^N}{1-  \gamma }.\]

This implies that $\sum_{n=0}^{N-1} \gamma ^nk^{\top}(P^\pi)^n$ is an $O(\gamma^N)$ approximation of $d^\pi_{P,k}$. Now, take $u_0 =k$ and  
\[ u_{n+1}:= \gamma(u_n)^{\top}P,\]
then $\sum_{n=0}^{N-1} \gamma ^nk^{\top}(P^\pi)^n = \sum_{n=0}^{N-1} u_n$. Each iteration takes $O(S^2)$ computations, leading to total cost $O(S^2N)$ for $N$ iterations. Computing $P^\pi$ from $P$ is $O(S^2A)$. We conclude that computing the occupation measure has a complexity of $O(S^2A + S^2\log(\epsilon^{-1}))$.

\begin{lemma}\label{app:rs:dApprox} 
We can approximate $d^\pi_{P,k}$ by $\sum_{n=0}^{N-1} \gamma ^n(k')^{\top}(P^\pi)^n$ with complexity $O(S^2A + S^2\log(\epsilon^{-1}))$ and tolerance
    \[ \norm*{ d^\pi_{P,k}- \sum_{n=0}^{N-1} \gamma ^n k'^{\top}(P^\pi)^n} \leq  O\left( \frac{\norm{ k}  \gamma^N +\norm{ k-k'}}{1-  \gamma }\right). \]
\end{lemma}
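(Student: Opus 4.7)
The plan is a straightforward triangle-inequality decomposition, followed by a direct complexity count, since the heavy lifting (the geometric tail bound for the exact occupation measure) has already been done in the paragraph immediately preceding the lemma statement. First I would write
\begin{align*}
  \left\| d^\pi_{P,k} - \sum_{n=0}^{N-1}\gamma^n (k')^\top (P^\pi)^n \right\|
  \;\leq\;&
  \left\| d^\pi_{P,k} - \sum_{n=0}^{N-1}\gamma^n k^\top (P^\pi)^n \right\| \\
  &+ \left\| \sum_{n=0}^{N-1}\gamma^n (k-k')^\top (P^\pi)^n \right\|.
\end{align*}

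For the first term I would simply invoke the geometric-series bound derived above the statement, which gives $\tfrac{\|k\|\gamma^N}{1-\gamma}$ using that $P^\pi$ is stochastic and hence has operator norm at most $1$. For the second term, I would again use $\|v^\top P^\pi\| \leq \|v\|$ (for any stochastic $P^\pi$, row-stochasticity bounds the induced operator norm by one), so that each summand satisfies $\|(k-k')^\top (P^\pi)^n\| \leq \|k-k'\|$. Summing the geometric factors $\gamma^n$ yields
\[
  \left\| \sum_{n=0}^{N-1}\gamma^n (k-k')^\top (P^\pi)^n \right\|
  \;\leq\; \|k-k'\| \sum_{n=0}^{N-1}\gamma^n \;\leq\; \frac{\|k-k'\|}{1-\gamma},
\]
and combining the two bounds gives the advertised $O\big(\tfrac{\|k\|\gamma^N + \|k-k'\|}{1-\gamma}\big)$ tolerance.

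For the complexity side, I would reuse the exact argument from the paragraph above the lemma: forming $P^\pi$ from $P$ and $\pi$ costs $O(S^2 A)$, and once $P^\pi$ is assembled, each iterate in the recursion $u_{n+1} := \gamma u_n^\top P^\pi$ with $u_0 = k'$ costs $O(S^2)$, so $N$ iterations cost $O(S^2 N)$. To make the tail $\tfrac{\|k\|\gamma^N}{1-\gamma}$ of order $\epsilon$, one picks $N = O(\log(\epsilon^{-1}))$ (with constants depending on $\|k\|$ and $1-\gamma$, both treated as $O(1)$), giving the total $O(S^2 A + S^2 \log(\epsilon^{-1}))$.

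I do not anticipate a genuine obstacle: the lemma is essentially a bookkeeping statement that adds a perturbation of the initial vector ($k \to k'$) to the tail-truncation analysis already carried out in the text. The only mild care needed is to pick a norm in which $P^\pi$ is non-expansive — the $\ell_1$ norm on row vectors works uniformly because $P^\pi$ is row-stochastic — and to keep track of the fact that the bound is stated up to constants hidden in the $O(\cdot)$, so I do not need to optimize the factor $\tfrac{1}{1-\gamma}$ beyond what the triangle inequality already gives.
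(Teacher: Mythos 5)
Your proposal is correct and follows essentially the same route as the paper's proof: a triangle-inequality split into the tail-truncation error (bounded by the geometric-series argument preceding the lemma) and the initial-vector perturbation term, with the latter controlled via the non-expansiveness of the row-stochastic matrix $P^\pi$ and a geometric sum, yielding $O\left(\frac{\norm{k}\gamma^N + \norm{k-k'}}{1-\gamma}\right)$. Your explicit treatment of the complexity count and of the norm in which $P^\pi$ is non-expansive is slightly more careful than the paper's, but the argument is the same.
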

\begin{proof}
    \begin{align*}
        \norm*{ d^\pi_{P,k}- \sum_{n=0}^{N-1} \gamma^nk'^{\top}(P^\pi)^n} &\leq \norm*{ d^\pi_{P,k}- \sum_{n=0}^{N-1} \gamma ^nk^{\top}(P^\pi)^n} + \norm*{ \sum_{n=0}^{N-1} \gamma ^nk'^{\top}(P^\pi)^n- \sum_{n=0}^{N-1} \gamma ^nk^{\top}(P^\pi)^n} \\
         &\leq O\left( \frac{\norm*{ k}  \gamma^N}{1-  \gamma }\right) + \norm*{ \sum_{n=0}^{N-1} \gamma ^nk^{\top}(P^\pi)^n- \sum_{n=0}^{N-1} \gamma ^nk'^{\top}(P^\pi)^n} \\
          &\leq O\left( \frac{\norm{ k}  \gamma^N}{1-  \gamma }\right) + \norm*{ k-k'}\norm*{ \sum_{n=0}^{N-1} \gamma ^n(P^\pi)^n - \sum_{n=0}^{N-1} \gamma ^n(P^\pi)^n} \\
             &\leq O\left( \frac{\norm{ k}  \gamma^N}{1-  \gamma }\right) + O\left( \frac{\norm{ k-k'}}{1-\gamma}\right).
    \end{align*}
\end{proof}

\textbf{Computing Q-value given value function.} Let $v$ be an $\epsilon_1$ approximation of robust value function $v^\pi_{\Uc}$, that is 
$\norm{ v -v^\pi_{\Uc}}_\infty \leq \epsilon_1.$ We want to compute the Q-value using the relation:
\begin{align*}
    Q^\pi_{\Uc}(s,a) & = R^\pi_{\Uc}(s,a) + \sum_{s,a}\pi_s(a)P^\pi_{\Uc}(s'|s,a)v^\pi_{\Uc}(s')\\
    &= R_0(s,a)+ \gamma \sum_{s'}P_0(s'|s,a)v^\pi_{\Uc}(s') - \Omega_{\Uc}(v^\pi_{\Uc},\pi).
\end{align*}
where $\Omega_{\Uc^{\texttt{s}}_p}(v^\pi_{\Uc^{\texttt{s}}_p},\pi) = \frac{\pi_s(a)^{q-1}}{\norm{ \pi_s}_{q}^{q-1}} \bigm(\alpha_{s}+\gamma\beta_{s}\kappa_{q}(v^\pi_{\Uc^{\texttt{s}}_p})\bigm)$ and $\Omega_{\Uc^{\texttt{sa}}_p}(v^\pi_{\Uc^{\texttt{sa}}_p},\pi) = \alpha_{sa}+\gamma\beta_{sa}\kappa_{q}(v^\pi_{\Uc^{\texttt{sa}}_p})$.
Let $Q$ be approximated from $v$ by
\[  Q(s,a) = R_0(s,a)+ \gamma \sum_{s'}P_0(s'|s,a)v(s') -\Omega_{\Uc}(v,\pi).\]
We thus have
\begin{align*}
\norm{ Q^\pi_{\Uc}(s,a)  - Q(s,a)}_\infty &=  \gamma \norm*{ \sum_{s'\in\St}P_0(s'|s,a)(v(s')-v^\pi_{\Uc}) } + \norm*{ \Omega_{\Uc}(v,\pi)- \Omega_{\Uc}(v^\pi_{\Uc},\pi)}  \\   
&\leq  \gamma \epsilon_1 + \norm{ \Omega_{\Uc}(v,\pi)- \Omega_{\Uc}(v^\pi_{\Uc},\pi)}\\
&\leq  \gamma \epsilon_1 + \norm{ \beta}_\infty \norm{ \kappa_q(v)- \kappa_q(v^\pi_{\Uc})}\\
&\leq  \gamma \epsilon_1 + \norm{ \beta}_\infty S^{\frac{1}{q}}\epsilon_1 &\text{(By Lemma \ref{app:rs:k_appr})} \\
&= O(S^{\frac{1}{q}}\epsilon_1),
\end{align*}
so that $\norm{ Q-Q^\pi_{\Uc}}_\infty \leq O(S^{\frac{1}{q}}\epsilon_1) $.

\begin{lemma}\label{app:rs:k_appr} 
The variance function $\kappa_p$ is Lipschitz. More precisely,
\[ \norm{ \kappa_p(v_1)-\kappa_p(v_2)} \leq S^{\frac{1}{p}}\norm{ v_1 -v_2 }_\infty \leq S^{\frac{1}{p}}\norm{ v_1 -v_2 }_\infty,\quad\forall v_1,v_2\in\R^{\St}.\]
\end{lemma}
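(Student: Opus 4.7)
The plan is to exploit the fact that $\kappa_p$ is a seminorm on $\mathbb{R}^{\St}$, so it obeys a reverse triangle inequality, and then reduce to a standard $\ell_\infty$-to-$\ell_p$ norm comparison.

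First I would verify subadditivity of $\kappa_p$. For any $v_1, v_2 \in \mathbb{R}^{\St}$, pick minimizers $\omega_1 := \omega_p(v_1)$ and $\omega_2 := \omega_p(v_2)$, so that $\kappa_p(v_i) = \|v_i - \omega_i \mathbf{1}\|_p$. Since $\omega_1 + \omega_2$ is a feasible (not necessarily optimal) shift for $v_1 + v_2$, the definition as a minimum gives $\kappa_p(v_1 + v_2) \leq \|(v_1 + v_2) - (\omega_1 + \omega_2)\mathbf{1}\|_p \leq \|v_1 - \omega_1 \mathbf{1}\|_p + \|v_2 - \omega_2 \mathbf{1}\|_p = \kappa_p(v_1) + \kappa_p(v_2)$ by the triangle inequality for $\|\cdot\|_p$. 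Combined with absolute homogeneity (also immediate from the definition), $\kappa_p$ is a seminorm.

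Next, applied to the decomposition $v_1 = (v_1 - v_2) + v_2$, subadditivity yields $\kappa_p(v_1) - \kappa_p(v_2) \leq \kappa_p(v_1 - v_2)$, and by symmetry $|\kappa_p(v_1) - \kappa_p(v_2)| \leq \kappa_p(v_1 - v_2)$. Then taking $\omega = 0$ as a feasible (suboptimal) choice in the defining minimization, $\kappa_p(v_1 - v_2) \leq \|v_1 - v_2\|_p$.

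Finally, I would invoke the standard norm comparison $\|w\|_p \leq S^{1/p} \|w\|_\infty$ for $w \in \mathbb{R}^{\St}$, which follows from $\|w\|_p^p = \sum_{s \in \St} |w(s)|^p \leq S \|w\|_\infty^p$. Chaining these bounds gives $|\kappa_p(v_1) - \kappa_p(v_2)| \leq S^{1/p} \|v_1 - v_2\|_\infty$, as claimed. There is no real obstacle here: the only subtlety is noticing that $\kappa_p$, though defined as an infimum, inherits the seminorm structure of $\|\cdot\|_p$ because the constraint set (scalar multiples of $\mathbf{1}$) is a linear subspace, which makes the reverse-triangle step automatic.
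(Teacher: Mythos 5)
Your proof is correct and is essentially the same argument as the paper's: the paper plugs the suboptimal shift $\omega_p(v_2)$ into the minimization defining $\kappa_p(v_1)$ and applies the reverse triangle inequality for $\norm{\cdot}_p$, then the comparison $\norm{w}_p \leq S^{1/p}\norm{w}_\infty$. You merely repackage that same substitution-plus-triangle-inequality step as subadditivity of $\kappa_p$ (i.e., its seminorm structure) before concluding with the identical $\ell_p$-to-$\ell_\infty$ bound, so no substantive difference.
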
 
\begin{proof}
    Let $v_i\in\R^{\St}$ and $w_i \in \argmin_{w\in\R}\norm{ v_i-w\mathbf{1}}_p$ for $i=1,2$. Without loss of generality, further assume that $\kappa_p(v_1)\geq\kappa_p(v_2)$. Then,
\begin{align*}
    \norm{ \kappa_p(v_1)-\kappa_p(v_2)} &= \kappa_p(v_1)-\kappa_p(v_2)\\
    &= \min_{w\in\R}\norm{ v_1-w\mathbf{1}}_p-\min_{w\in\R}\norm{ v_2-w\mathbf{1}}_p&\text{(By definition)}\\
    &\leq \norm{ v_1-w_2\mathbf{1}}_p-\norm{ v_2-w_2\mathbf{1}}_p&  \text{(By definition of $\omega_2$)}\\
     &\leq \norm{ (v_1-w_2\mathbf{1}) - (v_2-w_2\mathbf{1}) }_p&  \text{(Reverse triangle inequality)}\\
     &= \norm{ v_1 -v_2 }_p \\
     & = \left(\sum_{s\in\St}(v_1(s)-v_2(s))^p\right)^\frac{1}{p} \leq S^\frac{1}{p}\norm{ v_1 -v_2 }_\infty.
\end{align*}
\end{proof}

\begin{lemma}\label{app:rs:Q-approx} $Q^\pi_{\Uc^{\texttt{sa}}_p}$ can be approximated to $\epsilon$ tolerance with the same complexity as computing $v^\pi_{\Uc^{\texttt{sa}}_p}$ to $S^{-\frac{1}{q}}\epsilon$.    
\end{lemma}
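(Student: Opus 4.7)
The plan is to take the candidate approximation $Q$ defined right above the lemma statement, namely
\[ Q(s,a) := R_0(s,a) + \gamma \sum_{s'\in\St}P_0(s'|s,a)\, v(s') - \Omega_{\Uc^{\texttt{sa}}_p}(v,\pi), \]
built from an input $v$ satisfying $\norm{v - v^\pi_{\Uc^{\texttt{sa}}_p}}_\infty \le \epsilon_1$, and then separately track (i) its approximation error relative to $Q^\pi_{\Uc^{\texttt{sa}}_p}$ and (ii) the number of extra operations needed beyond producing $v$.

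For the accuracy bound, I would simply reuse the calculation already carried out in the paragraph preceding the lemma. The triangle inequality splits the error into a transition-kernel contraction piece, bounded by $\gamma\epsilon_1$ since $P_0(\cdot|s,a)$ is a probability distribution, and a regularizer piece $\norm{\Omega_{\Uc^{\texttt{sa}}_p}(v,\pi) - \Omega_{\Uc^{\texttt{sa}}_p}(v^\pi_{\Uc^{\texttt{sa}}_p},\pi)}$. The latter is controlled by $\norm{\beta}_\infty \norm{\kappa_q(v) - \kappa_q(v^\pi_{\Uc^{\texttt{sa}}_p})}$, and Lemma \ref{app:rs:k_appr} gives the Lipschitz bound $S^{1/q}\epsilon_1$. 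Combining these yields $\norm{Q - Q^\pi_{\Uc^{\texttt{sa}}_p}}_\infty \le O(S^{1/q}\epsilon_1)$. Setting $\epsilon_1 := S^{-1/q}\epsilon$ then delivers the target $\epsilon$-tolerance.

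For the complexity, I would argue that, once $v$ is in hand, computing $Q$ only costs $O(S^2A)$ arithmetic operations for the transition-weighted sum over all $(s,a)$, plus one evaluation of $\kappa_q(v)$ at tolerance compatible with $\epsilon$, which costs $O(S\log(S\epsilon^{-1}))$ by the helper results on the variance function. Both of these are dominated by (or of the same order as) the cost of a value-iteration pass that produces $v$ itself to precision $S^{-1/q}\epsilon$, so the overall complexity of computing $Q$ is the same as that of computing $v^\pi_{\Uc^{\texttt{sa}}_p}$ at tolerance $S^{-1/q}\epsilon$, as claimed.

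The main subtlety, and the step I would treat with most care, is the Lipschitz transfer from $v$ to $\kappa_q(v)$: the constant $S^{1/q}$ is precisely what forces the input tolerance to be tightened by a factor $S^{-1/q}$, and hence dictates the exact statement of the lemma. Everything else is a routine plug-in of Lemma \ref{app:rs:k_appr} and the standard cost accounting for Bellman-style updates and for $\kappa_q$ via binary search.
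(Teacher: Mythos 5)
Your proposal is correct and follows essentially the same route as the paper: the paper's proof simply invokes the error bound $\norm{Q-Q^\pi_{\Uc^{\texttt{sa}}_p}}_\infty \le O(S^{1/q}\epsilon_1)$ derived just before the lemma (via the Lipschitz bound on $\kappa_q$ from Lemma~\ref{app:rs:k_appr}) and notes that the remaining operations are insignificant, which is exactly your argument with $\epsilon_1 = S^{-1/q}\epsilon$. You merely spell out the cost accounting more explicitly than the paper does.
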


\begin{proof} 
We can compute the value function with $S^{-\frac{1}{q}}\epsilon$ tolerance. The rest of the operations are insignificant. The result follows from above.  
\end{proof}

\subsection{RPG Complexity}
Let $O^{\texttt{sa}}_p(\epsilon)$ be the complexity of computing $v^\pi_{\Uc^\texttt{sa}_p}$ with $\epsilon$-tolerance (see \cite{LpRobustMDPs} for more details). Calculating Q-value up to $\epsilon_1$ requires $O^{\texttt{sa}}_p(S^{-\frac{1}{q}}\epsilon_1)$ computations according to Lemma \ref{app:rs:Q-approx}. Letting $d_1$ and $d_2$ be $\epsilon_2$-approximations of $d^\pi_{P_0,\mu}$ and $d^\pi_{P_0,k}$ respectively, their complexity is insignificant compared to $O^{\texttt{sa}}_p(S^{-\frac{1}{q}}\epsilon_2)$. Now, approximating the gradient using $d_1,d_2, Q, \nabla \pi$ as in Thm.~\ref{rs:RPG} has a complexity of $O(SA)$. Since the uncertainty set $\Uc$ is compact, all quantities are bounded. There are $O(SA)$ operations in Thm.~\ref{rs:RPG}, so taking $\epsilon_1,\epsilon_2 = O(\frac{\epsilon}{SA})$, we get $O(\epsilon)$ for the gradient.  Hence, the total complexity is $O^{\texttt{sa}}_p(S^{-\frac{1}{q}-1}A^{-1}\epsilon)$ which is $\Tilde{O}(S^2A\log(\epsilon^{-1}))$ by hiding log factors. A similar analysis follows for the $s$-rectangular case.

\section{Generalization to arbitrary norms}
In this section, we analyze how to generalize our result to general norms that are not $\ell_p$. 

\subsection{$(s,a)$-rectangular robust MDPs}
Consider an $(s,a)$-rectangular uncertainty set $\Uc = \Uc^{\texttt{sa}}_{\norm{\cdot}}$ constrained by:
\[\Uc^{\texttt{sa}}_{\norm{\cdot}} = (P_0+\mathcal{P})\times(R_0 +\mathcal{R}) ,\qquad \text{where}\qquad (\mathcal{P},\mathcal{R})=(\times_{s,a}\mathcal{P}_{sa}, \times_{s,a}\mathcal{P}_{sa}), \]
\[ \Rc_{(s,a)} = \left\{ r \in \mathbb{R} \mid \norm{r}  \leq \alpha_{s,a}\right\}, \quad \text{and}\quad \Pc_{(s,a)} = \left\{ p \in \mathbb{R}^{\St}\mid \innorm{p,\mathbf{1}}_{\St} = 0, \norm{p} \leq \beta_{s,a}\right\}.\]

The robust Bellman operator $\mathcal{T}^\pi_{\Uc}$ can be evaluated as
\[(\mathcal{T}^\pi_{\Uc} v)(s) = \sum_{a}\pi_s(a)\Bigm[R(s,a)-\gamma\beta_{s,a}\kappa_{\lVert\cdot\rVert}(v) + \gamma\sum_{s'}P(s'|s,a)v(s')\Bigm],\]
where the variance function is defined as
\[\kappa_{\lVert\cdot\rVert}(v) := \min_{\innorm{u,\mathbf{1}}_{\St} = 0, \norm{u} \leq 1} \langle u,v^\pi_{\Uc}\rangle  .\]
This can be used to compute the robust value function. Then the worst values can found using robust Bellman operator $\mathcal{T}^\pi_{\Uc}$ and robust value function $v^\pi_{\Uc}$ as
\[ (P^\pi_{\Uc}, R^\pi_{\Uc}) \in \argmin_{(P,R)\in \Uc}\mathcal{T}^\pi_{(P,R)} v^\pi_{\Uc},\]
\cite{wiesemann2013robust}. 
It is easy to see that the worst values are given as 
\[ R^\pi_{\Uc}(s,a) = R_0(s,a)-\alpha_{s,a} \quad\text{and}\quad P^\pi_{\Uc}(\cdot|s,a) =  P^\pi_0(\cdot|s,a) -\beta_{s,a}u^\pi_{\Uc},\]
where normalized-balanced value function $u^\pi_{\Uc}$ is a solution to 
\[ \min_{\innorm{u,\mathbf{1}}_{\St} = 0, \norm{u} \leq 1} \langle u,v^\pi_{\Uc}\rangle .\]

Observe that the worst kernel is still a rank-one perturbation of the nominal kernel. Hence, the robust occupation measure can be obtained using Lemma \ref{rs:rankOnePert} as
\begin{align}
     d^\pi_{\Uc,\mu}  &=d^\pi_{P_0,\mu} -  \gamma \frac{\innorm{d^\pi_{P_0,\mu},\beta^\pi}_{\St}}{1 +\gamma \innorm{d^\pi_{P_0,u^\pi_{\Uc}},\beta^\pi}_{\St}}d^\pi_{P_0,u^\pi_{\Uc}},
\end{align}
where $\beta^\pi(s) = \sum_{a}\pi_s(a)\beta_{s,a}$. 
The last ingredient to compute RPG is the robust Q-value which can be computed using robust value function and worst values. However, it can be computed directly using the following iterates:
\begin{align*}
Q_{n+1}(s) &= \min_{(P,R)\in\Uc}\Bigm[R(s,a)+\gamma\sum_{s'}P(s'|s,a)v(s')\Bigm] \\
&=R(s,a) -\alpha_{s,a} -\gamma\beta_{s,a}\kappa_{\lVert\cdot\rVert}(v)+\gamma\sum_{s'}P(s'|s,a)v(s'), 
\end{align*}
as $Q_n$ converges to robust Q-value $Q^\pi_{\Uc}$ linearly.

The proofs of the above claims are very similar to the $\ell_p$ case so they are omitted. Finally, computing the variance function $\kappa_{\lVert\cdot\rVert}$ and the normalized-value function $u^\pi_{\Uc}$ can be done using numerical convex optimization methods for general norms. For the $\ell_p$ case, these can be obtained in concrete form, so we choose to focus on $\ell_p$ in our main study.

\subsection{$s$-rectangular case}
Generalization to $s$-rectangular balls of a general norm is not straightforward and may not be possible for all types of norms. The crucial property of the $\ell_p$-norm exploited in our rank-one perturbation proof is 'decoupling', that is, for all $x \in \mathbf{R}^{\A\times\St}$, there must exist $k,l,m\in\R_+$ such that
\[\norm{x}_p^k = \sum_{a\in\A}\norm{x_a}^l_{m}.\]
This holds in the $\ell_p$ case with  $k=l=m=p$. Further analysis of this setting is left for future work.

\subsection{Generalization to metrics and divergences}
Generalizing RPG to distance or divergence-based uncertainty sets is also not obvious. Our RPG method, in particular the robust occupation measure, crucially relies on the rank-one perturbation characterization of the worst kernel, which might not apply for example with KL-ball constraints. We leave this analysis for future work.

\section{Experiments}
\textbf{Parameters.} All the nominal transition kernels and reward functions are generated randomly. The number of states and the number of actions are varied. Discount factor $\gamma =0.9$,  reward noise radius $\alpha_{s,a},\alpha_{s}=0.1$, transition noise kernel $\beta_{s,a},\beta_s =\frac{0.01}{SA}$.

\textbf{Hardware}
Experiments are done on the machine with the following configuration: Intel(R) Core(TM) i7-6700 CPU @3.40GHZ, size:3598MHz, capacity 4GHz, width 64 bits,  memory size 64 GiB.

\textbf{Software and codes}
All the experiments were done in Python using numpy, matplotlib. All codes and results are available at https://github.com/navdtech/rpg.  

\textbf{Procedure and Results.} 
All the experiments were repeated 100 times, except for Linear Programming (LP) cases as LP methods were very time-consuming. In LP methods, experiments were repeated 5 times except for the case  ($S=500, A=100$) which was done only once. As this case was prohibitively expensive.
Standard deviation in all cases was less than $10\%$, and typically $1-2\%$. This conveniently illustrates the superiority of our methods over LP methods.

\textbf{Observations}
\begin{itemize}
    \item \textbf{Scalability of our methods.} Note that our methods scale very well with large state-action space. It takes a (small) constant times the time required by non-robust MDPs. On the other hand, LP methods explode.
    Both observations confirm the theoretical time complexity.
    \item \textbf{\texttt{sa}-case vs \texttt{s} case in LP methods.} We see \texttt{s}-case outperforms \texttt{sa}-case for small state-action spaces via LP methods. This is opposite to the theoretical time complexity of \texttt{s}-case which expensive than \texttt{sa}-case. We believe this is due to the internal implementation issues. Note that computing the robust value function is the most expensive step which requires evaluation of the robust Bellman operator. In \texttt{sa} case, one evaluation requires solving $SA$ LP programs with $S$ variables each, while for \texttt{s}-case, it is $S$ LP programs with $SA$ variables each. To solve LP, scipy.linprog is used, we believe it does some parallelization for large LPs. Hence, we observe less cost for \texttt{sa}-case. However, we observe that the cost of \texttt{s}-case increases much faster than \texttt{s}-case, and eventually under-performing than \texttt{sa}-case. 
\end{itemize}

\subsection{RPG by LP}
We compute RPG using LP in the following steps:
\begin{enumerate}
    \item \textbf{(Robust Value Iteration)} Approximately compute the robust value function $v^\pi_{\Uc}$ using the iterates $v_{n+1} := \mathcal{T}^\pi_{\Uc} v_n$.  Evaluation of robust Bellman operator $\mathcal{T}^\pi_{\Uc}$ is done via LPs as described below. This is the most expensive step as it requires evaluating robust Bellman operators $O(\log(\epsilon^{-1}))$ times, and each evaluation requires many LPs.
    \item \textbf{(Adversarial Values)}  Compute the worst values $(P^\pi_{\Uc},R^\pi_{\Uc})$ using the robust value function from the following relation:
    \[(P^\pi_{\Uc},R^\pi_{\Uc}) \in  \argmin_{(P,R)\in\Uc} \mathcal{T}^\pi_{\Uc} v^\pi_{\Uc}.\]
    This is also solved by LP.
    \item  \textbf{(Policy Gradient Theorem)} We now compute the RPG using policy gradient Theorem \cite{sutton1999policy} w.r.t. the adversarial values computed above, as
    \[ \partial \rho^\pi_{\Uc} = \sum_{s,a}d^\pi_{P^\pi_{\Uc}}(s)Q^\pi_{P^\pi_{\Uc},R^\pi_{\Uc}}(s,a)\nabla\pi_s(a).\]
    Observe that $d^\pi_P$ can be approximated as $ \sum_{n=0}^{n}(\gamma P^\pi)^n$ for large $n$ enough, and $Q^\pi_{P,R}$ can be approximated by dynamic programming \cite{Sutton1998}. Notably, this step and the second step are negligible as compared to the first step.
\end{enumerate}

\textbf{Robust Value Iteration by LP}

\textbf{\texttt{sa}-rectangular robust MDPs.} We first consider \texttt{sa}-rectangular $L_1$ constrained uncertainty set $\Uc^{\texttt{sa}}_p = \mathcal{P}\times\mathcal{R}$. Robust Bellman operator is given by 
\[(\mathcal{T}^\pi_{\Uc^{\texttt{sa}}_p}v)(s) = \max_{a}\underbrace{\min_{p\in\mathcal{P}_{sa},r\in \mathcal{R}_{sa}}\Bigm[r + \gamma \sum_{s'}p(s')v(s')\Bigm]}_{\text{LP with $S$ variable}}.\]
Note that the above can be solved by $A$ LPs as uncertainty set $\Uc^{\texttt{sa}}_p = \mathcal{P}\times\mathcal{R}$ induces linear constraint and the objective is also linear with $S$ variables. Hence, evaluation of $\mathcal{T}^\pi_{\Uc^{\texttt{sa}}_p}v$ requires solving $SA$ LPs with $S$ variable each.

\textbf{$s$-rectangular robust MDPs.} We now consider $s$-rectangular $L_1$ constrained uncertainty set $\Uc^{\texttt{s}}_p = \mathcal{P}\times\mathcal{R}$. Robust Bellman operator is given by 
\[(\mathcal{T}^\pi_{\Uc^{\texttt{s}}_p}v)(s) = \underbrace{\min_{p\in\mathcal{P}_{s},r\in \mathcal{R}_{s}}\sum_{a}\pi_s(a)\Bigm[r(a) + \gamma \sum_{s'}p(s'|a)v(s')\Bigm]}_{\text{LP with $SA$ variable}}.\]
Note that the above can be solved by one LP as uncertainty set $\Uc^{\texttt{s}}_p = \mathcal{P}\times\mathcal{R}$ induces linear constraint and the objective is also linear with $SA$ variables. Hence, evaluation of $\mathcal{T}^\pi_{\Uc^{\texttt{s}}_p}v$ requires solving $S$ LPs with $SA$ variable each.

\end{document}